\def\eqref#1{equation~\ref{#1}}
\def\Eqref#1{Equation~\ref{#1}}
\def\1{\bm{1}}
\DeclareMathAlphabet{\mathsfit}{\encodingdefault}{\sfdefault}{m}{sl}
\SetMathAlphabet{\mathsfit}{bold}{\encodingdefault}{\sfdefault}{bx}{n}
\newtheorem{theorem}{Theorem}
\newtheorem{lemma}{Lemma}
\newtheorem{assumption}{Assumption}
\title{Federated Neural Bandits}
\author{Zhongxiang Dai, Yao Shu\thanks{Corresponding author.}, Arun Verma, Flint Xiaofeng Fan \& Bryan Kian Hsiang Low \\
Department of Computer Science, National University of Singapore \\
\texttt{\{daizhongxiang,shuyao,arun,xiaofeng,lowkh\}@comp.nus.edu.sg} \\
\And
Patrick Jaillet \\
Department of Electrical Engineering and Computer Science, MIT\\
\texttt{jaillet@mit.edu}
}
\begin{document}

\maketitle

\begin{abstract}
Recent works on \emph{neural contextual bandits} have achieved compelling performances due to their ability to leverage the strong representation power of neural networks (NNs) for reward prediction. Many applications of contextual bandits involve multiple agents who collaborate without sharing raw observations, thus giving rise to the setting of \emph{federated contextual bandits}. Existing works on federated contextual bandits rely on linear or kernelized bandits, which may fall short when modeling complex real-world reward functions. So, this paper introduces the \emph{federated neural-upper confidence bound} (\texttt{FN-UCB}) algorithm. To better exploit the federated setting, \texttt{FN-UCB} adopts a weighted combination of two UCBs: $\text{UCB}^{a}$ allows every agent to additionally use the observations from the other agents to accelerate exploration (without sharing raw observations), while $\text{UCB}^{b}$ uses an NN with aggregated parameters for reward prediction in a similar way to federated averaging for supervised learning. Notably, the weight between the two UCBs required by our theoretical analysis is amenable to an interesting interpretation, which emphasizes $\text{UCB}^{a}$ initially for \emph{accelerated exploration} and relies more on $\text{UCB}^{b}$ later after enough observations have been collected to train the NNs for accurate reward prediction (i.e., \emph{reliable exploitation}). We prove sub-linear upper bounds on both the cumulative regret and the number of communication rounds of \texttt{FN-UCB}, and empirically demonstrate its competitive performance.
\end{abstract}

\vspace{-1mm}
\section{Introduction}
\label{sec:introduction}
\vspace{-1.8mm}
The stochastic multi-armed bandit is a prominent method for sequential decision-making problems due to its principled ability to handle the exploration-exploitation trade-off \citep{auer2002using,MAL-024,lattimore2020bandit}.
In particular, the stochastic contextual bandit problem has received enormous attention due to its widespread real-world applications such as recommender systems \citep{li2010contextual}, advertising \citep{li2010exploitation}, and healthcare \citep{greenewald2017action}.
In each iteration of a stochastic contextual bandit problem, an agent receives a context (i.e., a $d$-dimensional feature vector) for each of the $K$ arms, selects one of the $K$ contexts/arms, and observes the corresponding reward.
The goal of the agent is to sequentially pull the arms 
%in an intelligent way 
in order to maximize the cumulative reward (or equivalently, minimize the \emph{cumulative regret}) in $T$ iterations.
\vspace{-0.5mm}

To minimize the cumulative regret, \emph{linear contextual bandit} algorithms assume that the rewards can be modeled as a linear function of the input contexts \citep{dani2008stochastic}
% ,rusmevichientong2010linearly} 
and select the arms via classic methods such as upper confidence bound (UCB) \citep{auer2002using} or Thompson sampling (TS) \citep{thompson1933likelihood}, consequently yielding the Linear UCB \citep{abbasi2011improved} and Linear TS \citep{agrawal2013thompson} algorithms.
%The potentially restrictive assumption of a linear model was later relaxed to allow the use of non-linear methods.
%A representative line of works is \emph{kernelized contextual bandit} algorithms \cite{chowdhury2017kernelized,valko2013finite}, 
The potentially restrictive assumption of a linear model was later relaxed by \emph{kernelized contextual bandit} algorithms \citep{chowdhury2017kernelized,valko2013finite}, which assume that the reward function belongs to a reproducing kernel Hilbert space (RKHS) and hence model the reward function using kernel ridge regression or Gaussian process (GP) regression.
However, this assumption may still be restrictive \citep{zhou2020neural} and the kernelized model may fall short when the reward function is very complex and difficult to model. 
To this end, neural networks (NNs), which excel at modeling complex real-world functions, have been adopted to model the reward function in contextual bandits, thereby leading to \emph{neural contextual bandit} algorithms such as Neural UCB \citep{zhou2020neural} and Neural TS \citep{zhang2020neural}.
%Thanks to their ability to exploit the strong expressive power of NNs, Neural UCB and Neural TS have been shown to outperform both linear and kernelized contextual bandit methods \cite{zhang2020neural,zhou2020neural}.
Due to their ability to use the highly expressive NNs for better reward prediction (i.e., \emph{exploitation}),
%and the random features embedding of the \emph{neural tangent kernel} (NTK) \cite{arora2019exact,cao2019generalization,jacot2018neural} for \emph{exploration}, 
Neural UCB and Neural TS have been shown to outperform both linear and kernelized contextual bandit algorithms in practice. 
% \citep{zhang2020neural,zhou2020neural}.
Moreover, the cumulative regrets of Neural UCB and Neural TS have been analyzed by leveraging the theory of the \emph{neural tangent kernel} (NTK) \citep{jacot2018neural},
% \citep{arora2019exact,cao2019generalization,jacot2018neural},
%%sy: More interestingly,
%the cumulative regrets of Neural UCB and Neural TS have been 
%%sy: can be theoretically guaranteed
%analyzed by leveraging the recently developed theoretical framework of the \emph{neural tangent kernel} (NTK) 
%%and the generalization of deep NNs 
%\cite{arora2019exact,cao2019generalization,jacot2018neural}, 
hence making these algorithms both provably efficient and practically effective.
We give a comprehensive review of the related works on neural bandits in App.~\ref{sec:related:works}.
\vspace{-0.5mm}

The contextual bandit algorithms discussed above are only applicable to problems with a single agent.
%sy: on the other hand, moreover, further, different from standard contextual bandit setting with only one single agent,
However, many modern applications of contextual bandits involve multiple agents who (\emph{a}) collaborate with each other for better performances and yet (\emph{b}) are unwilling to share their raw observations (i.e., the contexts and rewards).
For example, 
%multiple 
companies may collaborate to improve their contextual bandits-based recommendation algorithms without sharing their sensitive user data \citep{huang2021federated}, while 
hospitals deploying contextual bandits for personalized treatment may collaborate to improve their treatment strategies without sharing their sensitive patient information \citep{dai2020federated}.
These applications naturally
%sy: naturally
fall under the setting of \emph{federated learning} (FL) \citep{kairouz2019advances,li2021survey} which facilitates collaborative learning of supervised learning models (e.g., NNs) without sharing the raw data.
%For example, the most classic FL method of federated averaging (FedAvg) \cite{mcmahan2016communication}, averages the parameters of the NNs trained by the individual agents.
%sy: the purpose of this 'for example'?
%In this regard, a number of \emph{federated bandit} algorithms have been developed recently, which adopted the setting of FL to allow different agents to collaborate to improve their bandit algorithms \cite{dai2020federated,dubey2020differentially,shi2021federated}.
In this regard, a number of \emph{federated contextual bandit} algorithms have been developed to allow bandit agents to collaborate in the federated setting \citep{shi2021federated}.
% \citep{dai2020federated,dubey2020differentially,huang2021federated,wang2019distributed}.
We present a thorough discussion of the related works on federated contextual bandits in App.~\ref{sec:related:works}.
Notably, \citet{wang2019distributed} have adopted the Linear UCB policy and developed a mechanism to allow every agent to additionally use the observations from the other agents to \textbf{accelerate exploration}, while only requiring the agents to exchange some sufficient statistics 
%(for calculating the Linear UCB policy) 
instead of their raw observations.
However, these previous works 
% on federated contextual bandits 
have only relied on either linear \citep{dubey2020differentially,huang2021federated} or kernelized \citep{dai2020federated,dai2021differentially} methods
% to model the reward function, 
which, as 
% we have 
discussed above, may lack the expressive power to model complex real-world reward functions \citep{zhou2020neural}.
%To this end, in this work, we develop the first \emph{federated neural contextual bandit} algorithm named \emph{federated neural-UCB} (\texttt{FN-UCB}), which is able to exploit the strong representation power of NNs for federated bandit.
%As a results, a natural question arises: \emph{Can federated bandit exploit the strong representation power of NNs?}
Therefore, this naturally
%sy: finally
brings up the need to \textbf{use NNs for better exploitation} (i.e., reward prediction) in federated contextual bandits,
thereby motivating the need for a \emph{federated neural contextual bandit} algorithm.
%sy: federated neural contextual bandit?
\vspace{-0.5mm}

To develop a federated neural contextual bandit algorithm, 
%an important technical challenge is how to simultaneously achieve the following two goals:
an important technical challenge is how to leverage the federated setting to simultaneously
(\emph{a}) \textbf{accelerate exploration} by allowing every agent to additionally use the observations from the other agents without requiring the exchange of raw observations (in a similar way to that of~\citet{wang2019distributed}),
and (\emph{b}) \textbf{improve exploitation} by further enhancing the quality of the NN for reward prediction through the federated setting (i.e., without requiring centralized training using the observations from all agents).
%(\emph{a}) allowing every agent to use the observations from the other agents to \textbf{accelerate exploration} (in a similar way to \cite{wang2019distributed}) without requiring the exchange of raw observations,
%and (\emph{b}) enhancing the quality of the NN for reward prediction to \textbf{improve exploitation}, by leveraging the federated setting without requiring centralized train using the observations from all agents.
In this work, we provide a theoretically grounded solution to tackle this challenge by deploying a weighted combination of two upper confidence bounds (UCBs).
The first UCB, denoted by $\text{UCB}^{a}$, incorporates the neural tangent features (i.e., the random features embedding of NTK) into the Linear UCB-based mechanism adopted by \citet{wang2019distributed}, which achieves the first goal of 
%letting every agent additionally use the observations from the other agents for 
accelerating \emph{exploration}.
The second UCB, denoted by $\text{UCB}^{b}$, adopts an aggregated NN whose parameters are the average of the parameters of the NNs trained by all agents using their local observations for better reward prediction (i.e., better \emph{exploitation} in the second goal).
%The second UCB (denoted as $\text{UCB}^{b}$) adopts an NN for reward prediction (i.e., exploitation) whose parameters are the average of the parameters of the trained NNs from all agents (using their local observations).
Hence, $\text{UCB}^{b}$ improves the quality of the NN for reward prediction in a similar way to the most classic FL method of
%standard FL for supervised learning such as 
federated averaging (FedAvg) for supervised learning \citep{mcmahan2016communication}.
%sy: our propose principled way to decide the weight between ...
Notably, our choice of the weight between the two UCBs, which naturally arises during our theoretical analysis, 
%required by our theoretical analysis 
has an interesting practical interpretation (Sec.~\ref{subsec:weight:two:ucbs}): More weight is given to $\text{UCB}^{a}$ in earlier iterations, which allows us to use the observations from the other agents to accelerate the exploration in the early stage; more weight is assigned to $\text{UCB}^{b}$ only in later iterations after every agent has collected enough local observations to train its NN for accurate reward prediction (i.e., reliable exploitation). 
%sy: i.e., more exploration during the initial phase and more exploitation during the end phase.
Of note, our novel design of the weight (Sec.~\ref{subsec:weight:two:ucbs}) is crucial for our theoretical analysis and may be of broader interest for future works on related topics.
\vspace{-0.5mm}

%sy: should we connect these two paragraphs?
%In this work, 
%With this solution to the above-mentioned technical challenge, 
This paper introduces the first federated neural contextual bandit algorithm which we call \emph{federated neural-UCB} (\texttt{FN-UCB}) (Sec.~\ref{sec:fn_ucb}).
%We analyze both the regret and communication complexity of \texttt{FN-UCB} (Sec.~\ref{sec:theoretical:analyis}). 
We derive an upper bound on its total cumulative regret from all $N$ agents: 
% $R_T = \widetilde{O}(\widetilde{d}\sqrt{TN} + \widetilde{d}_{\max} N \sqrt{TN})$,
$R_T = \widetilde{O}(\widetilde{d}\sqrt{TN} + \widetilde{d}_{\max} N \sqrt{T})$\footnote{The $\widetilde{\mathcal{O}}$ ignores all logarithmic factors.} where $\widetilde{d}$ is the effective dimension of the contexts from all $N$ agents and $\widetilde{d}_{\max}$ represents the maximum among the $N$ individual effective dimensions of the contexts from the $N$ agents (Sec.~\ref{sec:background}).
The communication complexity (i.e., total number of communication rounds in $T$ iterations) of \texttt{FN-UCB}
%, denoted as $C_T$, 
can be upper-bounded by $C_T = \widetilde{\mathcal{O}}(\widetilde{d}\sqrt{N})$.
Finally, we use both synthetic and real-world contextual bandit experiments to explore the interesting insights about our \texttt{FN-UCB} and demonstrate its effective practical performance (Sec.~\ref{sec:experiments}).

\vspace{-2mm}
\section{Background and Problem Setting}
\label{sec:background}
\vspace{-2mm}
Let $[k]$ denote the set $\{1,2,\ldots,k\}$ for a positive integer $k$, $\mathbf{0}_k$ represent a $k$-dimensional vector of $0$'s, and $\mathbf{0}_{k\times k}$ denote an all-zero matrix with dimension $k\times k$.
Our setting 
%of federated neural contextual bandit 
involves $N$ agents with the same reward function $h$ defined on a domain $\mathcal{X}\subset \mathbb{R}^d$.
We consider centralized and synchronous communication: The communication 
%among agents 
is coordinated by a central server and every agent exchanges information with the central server during a \emph{communication round}.
In each iteration $t\in[T]$, every agent $i\in[N]$ receives a set $\mathcal{X}_{t,i}\triangleq\{x^k_{t,i}\}_{k\in[K]}$ of $K$ context vectors and selects one of them $x_{t,i} \in \mathcal{X}_{t,i}$ to be queried
%After that, a noisy observation 
to observe a noisy output
$y_{t,i}\triangleq h(x_{t,i}) + \epsilon$ 
%is produced  
where $\epsilon$ is an $R$-sub-Gaussian noise.
We will analyze the total \emph{cumulative regret} from all $N$ agents in $T$ iterations: $R_T \triangleq \sum^N_{i=1}\sum^T_{t=1}r_{t,i}$  where $r_{t,i} \triangleq h(x_{t,i}^*) - h(x_{t,i})$ and $x_{t,i}^* \triangleq {\arg\max}_{x\in\mathcal{X}_{t,i}}h(x)$.

%sy: should we also give a cite for the formulation of this NN we have used?
Let $f(x;\theta)$ denote the output of a fully connected NN for input $x$ with parameters $\theta$ (of dimension $p_0$) and $g(x;\theta)$ denote the corresponding (column) gradient vector.
We focus on NNs with ReLU activations, and use $L\geq2$ and $m$ to denote its depth and width (of every layer), respectively.
We follow the initialization technique from~\citet{zhang2020neural,zhou2020neural} to initialize the NN parameters $\theta_0 \sim \text{init}(\cdot)$.
Of note, 
%following~\cite{dai2020federated,dai2021differentially}, 
as a common ground for collaboration, we let all $N$ agents share the same initial parameters $\theta_0$ when training their NNs and computing their \emph{neural tangent features}: $g(x;\theta_0)/\sqrt{m}$ (i.e.,  the random features embedding of  NTK~\citep{zhang2020neural}).
Also, let $\mathbf{H}$ denote the 
$(TKN) \times (TKN)$-dimensional
NTK matrix on the set of all received $TKN$ contexts~\citep{zhang2020neural,zhou2020neural}.
Similarly, let $\mathbf{H}_i$ denote the $(TK) \times (TK)$-dimensional NTK matrix 
%for agent $i$, which only makes use of 
on the set of $TK$ contexts received by agent $i$.
We defer the details on the definitions of $\mathbf{H}$ and $\mathbf{H}_i$'s, the NN $f(x;\theta)$, and the initialization scheme $\theta_0 \sim \text{init}(\cdot)$ to App.~\ref{app:more:background} due to limited space.
% lack of space.

Next, let $\mathbf{h}\triangleq [h(x^k_{t,i})]_{t\in[T],i\in[N],k\in[K]}$ denote the $(TKN)$-dimensional column vector 
%containing the reward functions values at all observed contexts: 
of reward function values at all received contexts and $B$ be an absolute constant s.t.~$\sqrt{2 \mathbf{h}^{\top} \mathbf{H}^{-1} \mathbf{h}} \leq B$.
This is related to the commonly adopted assumption in kernelized bandits that $h$ lies in the RKHS $\mathcal{H}$ induced by the NTK \citep{chowdhury2017kernelized,srinivas2009gaussian} (or, equivalently, that the RKHS norm $\norm{h}_{\mathcal{H}}$ of $h$ is upper-bounded by a constant) because $\sqrt{\mathbf{h}^{\top} \mathbf{H}^{-1} \mathbf{h}} \leq \norm{h}_{\mathcal{H}}$ \citep{zhou2020neural}.
Following the works of~\citet{zhang2020neural,zhou2020neural}, we define the effective dimension of $\mathbf{H}$ as $\widetilde{d}\triangleq \frac{\log\det (I + \mathbf{H}/\lambda)}{\log(1+TKN/\lambda)}$ with regularization parameter $\lambda>0$.
%We also define $\mathbf{H}_i$ as the $(TK) \times (TK)$-dimensional NTK matrix for agent $i$, which only consists of the $TK$ contexts observed by agent $i$.
Similarly, we define the effective dimension for agent $i$ as $\widetilde{d}_i\triangleq \frac{\log\det (I + \mathbf{H}_i/\lambda)}{\log(1+TK/\lambda)}$ and also define $\widetilde{d}_{\max}\triangleq \max_{i\in[N]}\widetilde{d}_i$.
Note that the effective dimension is related to the \emph{maximum information gain} $\gamma$ which is a commonly adopted notion in kernelized bandits~\citep{zhang2020neural}: $\widetilde{d} \leq  2\gamma_{TKN} / \log(1+TKN/\lambda)$ and $\widetilde{d}_i \leq 2\gamma_{TK} / \log(1+TK/\lambda),\forall i\in[N]$.
%\begin{equation}
%\widetilde{d} \leq  2\gamma_{TKN} / \log(1+TKN/\lambda), \widetilde{d}_i \leq 2\gamma_{TK} / \log(1+TK/\lambda),\forall i\in[N].
%\label{eq:eff:dim:upper:bounded:by:max:info:gain}
%\end{equation}
Consistent with the works on neural contextual bandits \citep{zhang2020neural,zhou2020neural}, our only assumption on the reward function $h$ is its boundedness: $|h(x)| \leq 1,\forall x\in\mathcal{X}$.
We also make the following assumptions for our theoretical analysis, all of which are mild and easily achievable, as discussed in \citep{zhang2020neural,zhou2020neural}:
\begin{assumption}
\label{assumption:main}
%We assume that 
There exists $\lambda_0 > 0$ s.t.~$\mathbf{H} \succeq \lambda_0 I$ and $\mathbf{H}_i \succeq \lambda_0 I, \forall i\in[N]$. 
%We also assume that 
Also, all contexts satisfy $\norm{x}_{2}=1$ and $[x]_{j}=[x]_{j+d/2}$, $\forall x\in\mathcal{X}_{t,i},\forall t\in[T],i\in[N]$.
\end{assumption}
\vspace{-2mm}
\section{Federated Neural-Upper Confidence Bound (\texttt{FN-UCB})}
\label{sec:fn_ucb}
\vspace{-2mm}

%Our \emph{Federated Neural-Upper Confidence Bound} (\texttt{FN-UCB}) algorithm is described in Algo.~\ref{algo:agent} (the agents' part) and Algo.~\ref{algo:server} (the central server's part).
Our \texttt{FN-UCB} algorithm is described in Algo.~\ref{algo:agent} (agents' part) and Algo.~\ref{algo:server} (central server's part).

%\vspace{-6mm}
%Let $\alpha \in [0,1]$.
%For technical reasons, we analyze a version of our algorithm where $\alpha > 0$ only for the first iteration after a sychronization round, i.e., we set $\alpha=0$ after the first iteration of a synchronization round.

% \textbf{Overview of \texttt{FN-UCB}.}
\vspace{-1.8mm}
\subsection{Overview of \texttt{FN-UCB} Algorithm}
\vspace{-1.8mm}
Before the beginning of the algorithm, we sample the initial parameters $\theta_0$ and share it with all agents (Sec.~\ref{sec:background}).
In each iteration $t\in[T]$, every agent $i\in[N]$ receives a set $\mathcal{X}_{t,i}=\{x^k_{t,i}\}_{k\in[K]}$ of $K$ contexts (line 3 of Algo.~\ref{algo:agent}) and then uses a weighted combination of $\text{UCB}^{a}_{t,i}$ and $\text{UCB}^{b}_{t,i}$ to select a context $x_{t,i}\in\mathcal{X}_{t,i}$ to be queried (lines 4-7 of Algo.~\ref{algo:agent}).
Next, each agent $i$ observes a noisy output $y_{t,i}$ (line 8 of Algo.~\ref{algo:agent}) and then updates its local information (lines 9-10 of Algo.~\ref{algo:agent}).
After that, every agent checks if it has collected enough information since the last communication round (i.e., checks the criterion in line 11 of Algo.~\ref{algo:agent}); if so, it sends a synchronization signal to the central server (line 12 of Algo.~\ref{algo:agent}) who then tells all agents to start a communication round (line 2 of Algo.~\ref{algo:server}).
During a communication round, every agent $i$ uses its current history of local observations to train an NN (line 14 of Algo.~\ref{algo:agent}) and sends its updated local information to the central server (line 16 of Algo.~\ref{algo:agent}); the central server then aggregates these information from all agents (lines 4-5 of Algo.~\ref{algo:server}) and broadcasts the aggregated information back to all agents (line 6 of Algo.~\ref{algo:server}) to start the next iteration.
%After some iterations, a \emph{communication round} may be initiated, during which every agent sends its updated local information to the central server, who aggregates the received information and then broadcasts the aggregated information back to all agents to start the next iteration.
%Intuitively, a communication round is started whenever any agent has collected enough information since the last communication round, and all agents share the same parameter $D$ which is used to determine whether a communication round should be started (line 11 of Algo.~\ref{algo:agent}).
We refer to those iterations between two communication rounds as an \emph{epoch}.\footnote{The first (last) epoch is between a communication round and the beginning (end) of \texttt{FN-UCB} algorithm.} So, our \texttt{FN-UCB} algorithm consists of a number of epochs which are separated by communication rounds.
%We use $p$ to index different epochs, and use $P$ to represent the total number of epochs.
%Moreover, $t_p$ and $E_p$ represent the starting iteration and the length (i.e., the total number of iterations) of epoch $p$.

\begin{algorithm}[t]
\begin{algorithmic}[1]
\STATE \textbf{inputs:} $\lambda=1+2/T$, $\theta_0 \sim \text{init}(\cdot)$,
%$V_0 = \lambda I$, 
$W_{\text{sync}}=\mathbf{0}_{p_0\times p_0}$, $W_{\text{new},i}=\mathbf{0}_{p_0\times p_0}$, $B_{\text{sync}}=\mathbf{0}$, $B_{\text{new},i}=\mathbf{0}_{p_0}$, $\alpha=0$, $V^{\text{local}}_{t,i}=\lambda I$, $V_{\text{sync,NN}}^{-1}=\lambda^{-1} I$, $V_{\text{last}}=\lambda I$, $t_{\text{last}}=0$, $\theta_{\text{sync,NN}}=\theta_0$.
% , $D=\mathcal{O}({T}/({N \widetilde{d}}))\ .$
	\FOR{$t=1,2,\ldots, T$}
            \STATE Receive a set $\mathcal{X}_{t,i}=\{x^k_{t,i}\}_{k\in[K]}$ of $K$ contexts
		\STATE Compute $\overline{V}_{t,i} = \lambda I + W_{\text{sync}} + W_{\text{new},i}\ ,\ \  \overline{\theta}_{t,i} = \overline{V}_{t,i}^{-1} (B_{\text{sync}} + B_{\text{new},i})$
%		\IF{not the first iteration after a synchronization round}
%		\STATE $\alpha=0$
%		\ENDIF
		\STATE Compute $\text{UCB}^{a}_{t,i}(x)\triangleq \langle g(x;\theta_0)/\sqrt{m}, \overline{\theta}_{t,i} \rangle + \nu_{TKN} \sqrt{\lambda}\  \norm{g(x;\theta_0) / \sqrt{m}}_{\overline{V}_{t,i}^{-1}}$
		% \STATE If $\alpha_t \neq 0$, calculate $\text{UCB}^{b}_{t,i}(x)=f(x;\theta_{\text{sync,NN}}) + \nu_{TK} \sqrt{\lambda} \norm{g(x;\theta_0) / \sqrt{m}}_{V_{\text{sync,NN}}^{-1}}$
		\STATE If $\alpha \neq 0$, compute $\text{UCB}^{b}_{t,i}(x)\triangleq f(x;\theta_{\text{sync,NN}}) + \nu_{TK} \sqrt{\lambda}{N}^{-1}\sum^N_{j=1} \norm{g(x;\theta_0) / \sqrt{m}}_{(V^{\text{local}}_{j})^{-1}}$\vspace{-2.5mm}
		\STATE Select $x_{t,i} \triangleq {\arg\max}_{x\in\mathcal{X}_{t,i}} (1- \alpha)\  \text{UCB}^{a}_{t,i}(x) + \alpha\  \text{UCB}^{b}_{t,i}(x)$
		\STATE Query $x_{t,i}$ to observe $y_{t,i}$
		\STATE Update $W_{\text{new},i} \leftarrow W_{\text{new},i} + g(x_{t,i};\theta_0)\  g(x_{t,i};\theta_0)^{\top} / m  ,\ B_{\text{new},i} \leftarrow B_{\text{new},i} + y_{t,i}\  g(x_{t,i};\theta_0) / \sqrt{m}$%\vspace{-2.5mm}
		\STATE Update $V^{\text{local}}_{t,i} = V^{\text{local}}_{t-1,i} + g(x_{t,i};\theta_0)\  g(x_{t,i};\theta_0)^{\top} / m$
%		\STATE $V_{t,i} = \lambda I + W_{\text{sync}} + W_{\text{new},i}$
%		\IF{$ (t-t_{\text{last}}) \log \frac{\text{det}(V_{t,i}) }{\text{det}V_{\text{last}} }  > D$}
		\IF{$ (t-t_{\text{last}}) \log\left( {\text{det}(\lambda I + W_{\text{sync}} + W_{\text{new},i}) }/{\text{det}(V_{\text{last}})}\right) > D$}
		\STATE Send a synchronisation signal to the central server to start a communication round
		\ENDIF
		\IF{a communication round is started}
		\STATE Train an NN with gradient descent using all agent $i$'s \emph{local} observations $\mathcal{D}_{t,i}=\{(x_{\tau,i}, y_{\tau,i})\}_{\tau \in[t]}$ based on initial parameters $\theta_0$, learning rate $\eta$, number $J$ of iterations, and~\Eqref{eq:nn:loss:function} as loss function to obtain $\theta^{i}_{t}$
		\STATE Compute $\alpha_{t,i} = \widetilde{\sigma}^{\text{local}}_{t,i,\min} / \widetilde{\sigma}^{\text{local}}_{t,i,\max}$ (Sec.~\ref{subsec:weight:two:ucbs})
		\STATE \textbf{send} $\{\ W_{\text{new},i},\  B_{\text{new},i},\ \theta^{i}_{t},\  \alpha_{t,i},\  (V^{\text{local}}_{t,i})^{-1}\ \}$ to the central server
		% \STATE \textbf{receive} $\{W_{\text{sync}}, B_{\text{sync}}, \theta_{\text{sync,NN}}, V_{\text{sync,NN}}^{-1}, \alpha_t\}$ from the central server
		\STATE \textbf{receive} $\{\  W_{\text{sync}},\ B_{\text{sync}},\  \theta_{\text{sync,NN}},\ \alpha,\  \{(V^{\text{local}}_{i})^{-1}\ \}_{i\in[N]}\ \}$ from the central server
		\STATE Set $V_{\text{last}} = W_{\text{sync}} + \lambda I\ ,\  t_{\text{last}}=t\ ,\  W_{\text{new},i}=\mathbf{0}_{p_0\times p_0}\ ,\  B_{\text{new},i}=\mathbf{0}$
		\ENDIF
    \ENDFOR
\end{algorithmic}
\caption{\texttt{FN-UCB} (Agent $i$)}
\label{algo:agent}
\end{algorithm}
% \vspace{-1.8mm}

\begin{algorithm}[t]
\begin{algorithmic}[1]
\IF{a synchronization signal is received from \emph{any agent}}
\STATE Send a signal to all agents to start a communication round
\ENDIF
	\STATE \textbf{receive} $\{\ W_{\text{new},i},\  B_{\text{new},i},\ \theta^{i}_{t}, \ \alpha_{t,i},\  (V^{\text{local}}_{t,i})^{-1}\  \}_{i\in[N]}$
	\STATE Compute $\theta_{\text{sync,NN}} = {N}^{-1}\sum^N_{i=1} \theta^{i}_{t}\ ,    
 % $V_{\text{sync,NN}}^{-1} = \frac{1}{N}\sum^{N}_{i=1} (V^{\text{local}}_{t,i})^{-1}$, 
  \ \ \alpha=\min_{i\in[N]} \alpha_{t,i}\ $; let $(V^{\text{local}}_{i})^{-1}=(V^{\text{local}}_{t,i})^{-1},\forall i\in[N]$
	\STATE Update $W_{\text{sync}} \leftarrow W_{\text{sync}} + \sum^{N}_{i=1}W_{\text{new},i}\ ,   \ \ B_{\text{sync}} \leftarrow B_{\text{sync}} + \sum^{N}_{i=1}B_{\text{new},i}$
	% \STATE Broadcast $\{W_{\text{sync}}, B_{\text{sync}}, \theta_{\text{sync,NN}}, V_{\text{sync,NN}}^{-1}, \alpha_t \}$ to all agents
	\STATE Broadcast $\{\ W_{\text{sync}},\  B_{\text{sync}},\  \theta_{\text{sync,NN}},\ \alpha,\  \{(V^{\text{local}}_{i})^{-1}\}_{i\in[N]}\ \}$ to all agents
\end{algorithmic}
\caption{Central Server}
\label{algo:server}
\end{algorithm}
% \vspace{-3mm}

Note that every agent $i$ only needs to train an NN in every communication round, i.e., only after the change in the log determinant of the covariance matrix of any agent exceeds a threshold $D$ (line 11 of Algo.~\ref{algo:agent}). This has the additional benefit of reducing the computational cost due to the training of NNs.
Interestingly, this is in a similar spirit as the adaptive batch size scheme in \citet{gu2021batched} which only retrains the NN in Neural UCB after the change in the determinant of the covariance matrix exceeds a threshold and is shown to only slightly degrade the performance of Neural UCB.

\vspace{-1.2mm}
\subsection{The Two Upper Confidence Bounds (UCBs)}
\label{subsec:two:ucbs}
\vspace{-1.2mm}
% \textbf{The Two UCBs.}
%sy: the discussion of exploration and exploitation
% We now introduce the details of $\text{UCB}^{a}_{t,i}$ and $\text{UCB}^{b}_{t,i}$ in the next two paragraphs.
%\textbf{The Two Upper Confidence Bounds (UCBs).}
%In every iteration $t$, agent $i$ selects the pulled arm $x_{t,i}$ by maximizing the weighted combination of two UCBs, denoted as $\text{UCB}^{a}_{t,i}(x)$ and $\text{UCB}^{b}_{t,i}(x)$ (line 6 of Algo.~\ref{algo:agent}). Next, we introduce both UCBs in more detail.

Firstly, $\text{UCB}^{a}_{t,i}$ can be interpreted as the Linear UCB policy~\citep{abbasi2011improved} using the neural tangent features $g(x;\theta_0)/\sqrt{m}$ as the input features. 
%In iteration $t$, denote the index of the current epoch as $p$, then the calculation of $\text{UCB}^{b}_{t,i}(x)$ makes use of two types of observations: (\emph{a}) the observations from \emph{all $N$ agents} before epoch $p$ (via $W_{\text{sync}}$ and $B_{\text{sync}}$, see line 3 of Algo.~\ref{algo:agent} and line 5 of Algo.~\ref{algo:server}), and (\emph{b}) the newly collected local observations of agent $i$ in epoch $p$ (via $W_{\text{new},i}$ and $B_{\text{new},i}$, see lines 3 and 8 of Algo.~\ref{algo:agent}).
In iteration $t$, let $p$ denote the index of the current epoch. Then, computing $\text{UCB}^{a}_{t,i}$ (line 5 of Algo.~\ref{algo:agent}) makes use of two types of information.
%The first type of information utilizes the observations from \emph{all $N$ agents before epoch $p$}, which are used in the calculation of $\text{UCB}^{a}_{t,i}$ via $W_{\text{sync}}$ and $B_{\text{sync}}$ (line 3 of Algo.~\ref{algo:agent}).
The first type of information, which \emph{uses the observations from all $N$ agents before epoch $p$},
% and hence helps \emph{accelerate exploration} via these additional observations, 
is used for computing $\text{UCB}^{a}_{t,i}$ via $W_{\text{sync}}$ and $B_{\text{sync}}$ (line 4 of Algo.~\ref{algo:agent}).
Specifically, as can be seen from line 5 of Algo.~\ref{algo:server}, $W_{\text{sync}}$ and $B_{\text{sync}}$ are computed 
%sy: on
by the central server by summing up the $W_{\text{new},i}$'s and $B_{\text{new},i}$'s from all agents (i.e., by aggregating the information from all agents)
%sy: move it forward?
where $W_{\text{new},i}$ and $B_{\text{new},i}$ are computed using the local observations of agent $i$ (line 9 of Algo.~\ref{algo:agent}).
The second type of information used by $\text{UCB}^{a}_{t,i}$ (via $W_{\text{new},i}$ and $B_{\text{new},i}$ utilized in line 4 of Algo.~\ref{algo:agent}) exploits the newly collected local observations of agent $i$ \emph{in epoch $p$}.
%As a result, $\text{UCB}^{a}_{t,i}(x)$ allows us to improve over standard sequential contextual bandit algorithms by utilizing the observations from all agents via the federated setting, without requiring the agents to share their raw observations.
As a result, $\text{UCB}^{a}_{t,i}$ allows us to utilize the observations from all agents via the federated setting for accelerated exploration  without requiring the agents to share their raw observations.
$\text{UCB}^{a}_{t,i}$ is computed with 
%The parameter $\nu_{TKN}$ used for computing  is defined as 
the defined parameter $\nu_{TKN} \triangleq B + R [2(\log(3/\delta) + 1) + \widetilde{d} \log(1+TKN/\lambda)]^{1/2}$ where $\delta\in(0,1)$.

Secondly, $\text{UCB}^{b}_{t,i}$ 
%sy: aim to (to show the purpose of such a design, corresponds to the intro and abstract?)
leverages the federated setting to improve the quality of NN for reward prediction (to achieve better exploitation) in a similar way to FedAvg, i.e., by averaging the parameters of the NNs trained by all agents using their local observations \citep{mcmahan2016communication}.
Specifically, when a communication round is started, every agent $i\in[N]$ uses its local observations $\mathcal{D}_{t,i}\triangleq \{(x_{\tau,i}, y_{\tau,i})\}_{\tau \in[t]}$ to train an NN (line 14 of Algo.~\ref{algo:agent}). 
It uses initial parameters $\theta_0$ (i.e., shared among all agents (Sec.~\ref{sec:background})) and trains the NN using gradient descent with learning rate  $\eta$ for $J$ training iterations (see Theorem~\ref{theorem:regret} for the choices of $\eta$ and $J$)
% in our theoretical analysis) 
to minimize the following loss function:
%When agent $i$ trains its neural network using its local observations $\mathcal{D}_{t,i}=\{(x_{\tau,i}, y_{\tau,i})\}_{\tau \in[t]}$, it uses $\theta_0$ as the initial parameters (which is shared among all agents $i\in[N]$) and trains the NN with gradient descent (using step size $\eta$ and number of iterations $J$) to obtain $\theta^{i}_{t}$, where the loss function is
\begin{equation}
\textstyle\mathcal{L}_{t,i}(\theta) \triangleq 0.5\sum^t_{\tau=1} ( f(x_{\tau,i};\theta) - y_{\tau,i} )^2  + 0.5 m \lambda \norm{\theta-\theta_0}_{2}^2 \ .
\label{eq:nn:loss:function}
\end{equation}
%For agent $i$, the NN parameters obtained after the training is denoted as $\theta^{(i)}_t$.
The resulting NN parameters $\theta^{i}_t$'s from all $N$ agents are sent to the central server (line 16 of Algo.~\ref{algo:agent}) 
% (line 16 of Algo.~\ref{algo:agent}), 
who averages them (line 4 of Algo.~\ref{algo:server}) and broadcasts the aggregated $\theta_{\text{sync,NN}} \triangleq {N}^{-1}\sum^N_{i=1} \theta^{i}_{t}$ back to all agents to be used in the next epoch.
In addition, 
%to ensure that $\text{UCB}^{b}_{t,i}(x)$ is a valid (high-probability) upper bound on the reward function $h$, 
to compute the second term of $\text{UCB}^{b}_{t,i}$,
every agent needs to compute the matrix $V^{\text{local}}_{t,i}$ using its local inputs (line 10 of Algo.~\ref{algo:agent}) and send its inverse to the central server (line 16 of Algo.~\ref{algo:agent}) during a communication round; 
% after that, the central server averages the received matrices to obtain $V_{\text{sync,NN}}^{-1} = \frac{1}{N}\sum^{N}_{i=1} (V^{\text{local}}_{t,i})^{-1}$ and broadcasts it back to all agents to be used in the second term of $\text{UCB}^{b}_{t,i}$ (line 5 of Algo.~\ref{algo:agent}).
after that, the central server broadcasts these matrices $\{(V^{\text{local}}_{i})^{-1} \}_{i\in[N]}$ received from each agent back to all agents to be used in the second term of $\text{UCB}^{b}_{t,i}$ (line 6 of Algo.~\ref{algo:agent}).
Refer to Sec.~\ref{subsec:proof:sketch} for a  detailed explanation on the validity of $\text{UCB}^{b}_{t,i}$ as a high-probability upper bound on $h$ (up to additive error terms).
%The parameter $\nu_{TK}$ used for computing 
$\text{UCB}^{b}_{t,i}$ is computed with the defined parameter
$\nu_{TK} \triangleq B + R [2(\log(3N/\delta) + 1) + \widetilde{d}_{\max}\log(1+TK/\lambda)]^{1/2}$.

% For both UCBs, unlike Neural UCB \citep{zhou2020neural} and Neural TS \citep{zhang2020neural} which use $\theta_t$ (the parameters of trained NNs) to calculate the exploration term (the second terms of $\text{UCB}^{a}_{t,i}$ and $\text{UCB}^{b}_{t,i}$), we instead use $\theta_0$. This is consistent with \citet{kassraie2021neural} who have shown that the use of $\theta_0$ gives accurate uncertainty estimation.

%\paragraph{The Weight between the Two UCBs.}
% \textbf{The Weight between the Two UCBs.}
\vspace{-1.8mm}
\subsection{Weight between the Two UCBs}
\label{subsec:weight:two:ucbs}
\vspace{-1.8mm}
Our choice of the weight $\alpha$ between the two UCBs,
%(line 6 of Algo.~\ref{algo:agent}) 
which naturally arises during our theoretical analysis (Sec.~\ref{sec:theoretical:analyis}),
%required by our theoretical analysis (Sec.~\ref{subsec:proof:sketch}) 
has an interesting interpretation in terms of the relative strengths of the two UCBs and the exploration-exploitation trade-off.
Specifically,  $\widetilde{\sigma}^{\text{local}}_{t,i}(x) \triangleq \sqrt{\lambda}\  \norm{g(x;\theta_0) / \sqrt{m}}_{(V^{\text{local}}_{t,i})^{-1}}$  intuitively represents our \emph{uncertainty} about the reward at $x$ after conditioning on the local observations of agent $i$ up to iteration $t$ \citep{kassraie2021neural}.\footnote{Formally, $\widetilde{\sigma}^{\text{local}}_{t,i}(x)$ is the Gaussian process posterior standard deviation at $x$ conditioned on the local observations of agent $i$ till iteration $t$ and computed using the kernel $\widetilde{k}(x,x')=g(x;\theta_0)^{\top} g(x';\theta_0) / m$.}
Next, $\widetilde{\sigma}^{\text{local}}_{t,i,\min} \triangleq \min_{x\in\mathcal{X}} \widetilde{\sigma}^{\text{local}}_{t,i}(x)$ and $\widetilde{\sigma}^{\text{local}}_{t,i,\max} \triangleq \max_{x\in\mathcal{X}} \widetilde{\sigma}^{\text{local}}_{t,i}(x)$  represent our smallest and largest uncertainties across the entire domain, respectively.
Then, we choose $\alpha\triangleq \min_{i\in[N]} \alpha_{t,i}$ (line 4 of Algo.~\ref{algo:server}) where $\alpha_{t,i} \triangleq \widetilde{\sigma}^{\text{local}}_{t,i,\min} / \widetilde{\sigma}^{\text{local}}_{t,i,\max}$ (line 15 of Algo.~\ref{algo:agent}).
In other words, $\alpha_{t,i}$ is \emph{the ratio between the smallest and largest uncertainty across the entire domain} for agent $i$, and $\alpha$ is the smallest such ratio $\alpha_{t,i}$ among all agents.
Therefore, $\alpha$ is expected to be generally \emph{increasing with the number of iterations/epochs}: $\widetilde{\sigma}^{\text{local}}_{t,i,\min}$ is already small after the first few iterations since the uncertainty at the queried contexts
%input locations (contexts) 
is very small; 
on the other hand, $\widetilde{\sigma}^{\text{local}}_{t,i,\max}$ is expected to be very large in early iterations and become smaller in later iterations only after 
%we have observed a large number of inputs (contexts) 
a large number of contexts has been queried
to sufficiently reduce the overall uncertainty in the entire domain.
This implies that we give \emph{more weight to $\text{UCB}^{a}_{t,i}$ in earlier iterations} and assign \emph{more weight to $\text{UCB}^{b}_{t,i}$ in later iterations}.
This, interestingly, turns out to have an intriguing practical interpretation: 
Relying more on $\text{UCB}^{a}_{t,i}$ in earlier iterations is reasonable because $\text{UCB}^{a}_{t,i}$ is able to utilize the observations from all agents to accelerate \emph{exploration} in the early stage (Sec.~\ref{subsec:two:ucbs});
% (see discussions of $\text{UCB}^{a}_{t,i}$ above); 
it is also sensible to give more emphasis to $\text{UCB}^{b}_{t,i}$ only in later iterations because the NN trained by every agent is only able to accurately model the reward function (for reliable \emph{exploitation}) after it has collected enough observations to train its NN.
In our practical implementation (Sec.~\ref{sec:experiments}), we will use the analysis here as an inspiration to design an increasing sequence of $\alpha$.

\vspace{-1.mm}
\subsection{Communication Cost}
\label{subsec:comm:cost}
\vspace{-1.8mm}
% In our \texttt{FN-UCB} algorithm (Algo.~\ref{algo:agent} and \ref{algo:server}), the central server needs to broadcast all $N$ $p_0\timesp_0$ matrices: $\{(V^{\text{local}}_{i})^{-1}\}_{i\in[N]}$ to the agents ($p_0$ is the total number of parameters of the NN), which may be undesirable when $N$ is large.
% To improve the communication efficiency of our \texttt{FN-UCB}, we slightly modify it to propose another variant of \texttt{FN-UCB} referred to as \texttt{FN-UCB} (\texttt{Less Comm.}), which differs from our original \texttt{FN-UCB} (Algo.~\ref{algo:agent} and Algo.~\ref{algo:server}) in two ways.

To achieve a better communication efficiency, we propose here a variant of our main \texttt{FN-UCB} algorithm called \texttt{FN-UCB} (\texttt{Less Comm.}) which differs from \texttt{FN-UCB} (Algos.~\ref{algo:agent} and~\ref{algo:server}) in two aspects.
\textbf{Firstly}, the central server averages the matrices $\{(V^{\text{local}}_{t,i})^{-1} \}_{i\in[N]}$ received from all agents 
% (line 3 of Algo.~\ref{algo:server}) 
to produce a single matrix $V_{\text{sync,NN}}^{-1} = {N}^{-1}\sum^{N}_{i=1} (V^{\text{local}}_{t,i})^{-1}$ and hence only broadcasts the single matrix $V_{\text{sync,NN}}^{-1}$ instead of all $N$ received matrices
$\{(V^{\text{local}}_{t,i})^{-1} \}_{i\in[N]}$ 
to all agents (see line 6 of Algo.~\ref{algo:server}).
\textbf{Secondly}, the $\text{UCB}^{b}_{t,i}$ of every agent $i$ (line 6 of Algo.~\ref{algo:agent}) is modified to use the matrix $V_{\text{sync,NN}}^{-1}$: $\text{UCB}^{b}_{t,i}(x)\triangleq f(x;\theta_{\text{sync,NN}}) + \nu_{TK} \sqrt{\lambda} \norm{g(x;\theta_0) / \sqrt{m}}_{V_{\text{sync,NN}}^{-1}}$.
% As a result, \texttt{FN-UCB} (\texttt{Less Comm.}) incurs significantly less communication cost than \texttt{FN-UCB}.
% Importantly, the modified $\text{UCB}^{b}_{t,i}$ is also a high-probability upper bound on the reward function $h$ (App.~\ref{app:proof:regret:fn:ucb:less:comm}), which ensures that \texttt{FN-UCB} (\texttt{Less Comm.}) is also equipped with a regret upper bound (Sec.~\ref{subsec:theoretical:results}).
% In addition, 
To further reduce the communication cost of both \texttt{FN-UCB} and \texttt{FN-UCB} (\texttt{Less Comm.}) especially when the NN is large (i.e., its total number $p_0$ of parameters is large), we can follow the practice of previous works~\citep{zhang2020neural,zhou2020neural}
%on neural contextual bandits~\cite{zhou2020neural,zhang2020neural} 
to diagonalize the $p_0 \times p_0$ matrices, i.e., by only keeping the diagonal elements of the matrices.
Specifically, we can diagonalize $W_{\text{new},i}$ (line 9 of Algo.~\ref{algo:agent}) and $V^{\text{local}}_{t,i}$ (line 10 of Algo.~\ref{algo:agent}), and let the central server aggregate only the diagonal elements of the corresponding matrices to obtain $W_{\text{sync}}$ and $V^{-1}_{\text{sync,NN}}$.
This reduces both the communication and computational costs.
% This not only improves the communication efficiency but also reduces the computational cost.
% (for \texttt{FN-UCB} (\texttt{Less Comm.})).
% We will also analyze the total number of required communication rounds by \texttt{FN-UCB}, as well as the variant \texttt{FN-UCB} (\texttt{Less Comm.}), in Sec.~\ref{subsec:theoretical:results}.
\vspace{-1mm}

As a result, during a communication round, the parameters that an agent sends to the central server include $\{W_{\text{new},i}, B_{\text{new},i}, \theta^{i}_{t}, \alpha_{t,i}, (V^{\text{local}}_{t,i})^{-1}\}$ (line 16 of Algo.~\ref{algo:agent}) which constitute $p_0 + p_0 + p_0 + 1 + p_0=\mathcal{O}(p_0)$ parameters and are the same for \texttt{FN-UCB} and \texttt{FN-UCB} (\texttt{Less Comm.}).
The parameters that the central server broadcasts to the agents include $\{W_{\text{sync}}, B_{\text{sync}}, \theta_{\text{sync,NN}}, \alpha, \{(V^{\text{local}}_{t,i})^{-1} \}_{i\in[N]}\}$ for \texttt{FN-UCB} (line 6 of Algo.~\ref{algo:server}) which amount to $p_0 + p_0 + p_0 + 1 + N p_0=\mathcal{O}(N p_0)$ parameters. Meanwhile, \texttt{FN-UCB} (\texttt{Less Comm.}) only needs to broadcast $\mathcal{O}(p_0)$ parameters because the $N$ matrices $\{(V^{\text{local}}_{t,i})^{-1} \}_{i\in[N]}$ are now replaced by a single matrix $V^{-1}_{\text{sync,NN}}$.
Therefore, the total number of exchanged parameters by \texttt{FN-UCB} (\texttt{Less Comm.}) is $\mathcal{O}(p_0)$ which is \emph{comparable to the number of exchanged parameters in standard FL for supervised learning} (e.g., FedAvg) where the parameters (or gradients) of the NN are exchanged \citep{mcmahan2016communication}.
%In addition, 
% As we will discuss in Sec.~\ref{subsec:theoretical:results}, although \texttt{FN-UCB} incurs more communication cost, its regret upper bound is tighter than that of \texttt{FN-UCB} (\texttt{Less Comm.}); meanwhile, their empirical performances are very similar (Sec.~\ref{sec:experiments}).
% We will also analyze the total number of required communication rounds by our algorithms in Sec.~\ref{subsec:theoretical:results}.
We will also analyze the total number of required communication rounds by \texttt{FN-UCB}, as well as by \texttt{FN-UCB} (\texttt{Less Comm.}), in Sec.~\ref{subsec:theoretical:results}.
% \vspace{-1mm}

As we will discuss in Sec.~\ref{subsec:theoretical:results}, the variant \texttt{FN-UCB} (\texttt{Less Comm.}) has a looser regret upper bound than our main \texttt{FN-UCB} algorithm (Algos.~\ref{algo:agent} and~\ref{algo:server}).
However, in practice, \texttt{FN-UCB} (\texttt{Less Comm.}) is recommended over \texttt{FN-UCB} because it achieves a very similar empirical performance as \texttt{FN-UCB} (which we have verified in Sec.~\ref{subsec:synthetic:experiments}) and yet incurs less communication cost.

\vspace{-1.7mm}
\section{Theoretical Analysis}
\label{sec:theoretical:analyis}
\vspace{-1.7mm}

\subsection{Theoretical Results}
\label{subsec:theoretical:results}
\vspace{-1.2mm}
%\subsubsection{Upper Bound on The Cumulative Regret}
%\textbf{Upper Bound on The Cumulative Regret.}
\textbf{Regret Upper Bound.}
For simplicity, we analyze the regret of a simpler version of our algorithm where we only choose the weight $\alpha$ using the method described in Sec.~\ref{subsec:weight:two:ucbs} in the first iteration after every communication round (i.e., first iteration of every epoch) and set $\alpha=0$ in all other iterations. 
Note that when communication occurs after each iteration (i.e., when $D$ is sufficiently small), this version coincides with our original \texttt{FN-UCB} described in Algos.~\ref{algo:agent} and~\ref{algo:server} (Sec.~\ref{sec:fn_ucb}).
% We also analyze the general algorithm which does not set $\alpha=0$ in any iteration and present the results in Appendix~\ref{app:extended:analysis:general:algo}, which requires an additional asssumption and introduces an additional multiplicative constant to the regret upper bound.
The regret upper bound of \texttt{FN-UCB} is given by the following result (proof in Appendix \ref{app:proof:regret}):
\vspace{-0.8mm}
\begin{theorem}
%[\textbf{Regret}]
\label{theorem:regret}
Let $\delta\in(0,1)$, $\lambda=1+2/T$, and $D=\mathcal{O}(T / (N \widetilde{d}))$.
Suppose that the NN width $m$ grows polynomially:
$m \geq \text{\emph{poly}}( \lambda, T, K, N, L, \log(1/\delta), 1/\lambda_0)$.
For the gradient descent training (line 14 of Algo.~\ref{algo:agent}), let $\eta=C_4(m\lambda + mTL)^{-1}$ for some constant $C_4>0$ and $J=\widetilde{O}\left(TL/(\lambda C_4) \right)$.
%($\widetilde{O}$ ignores all log factors).
Then, with probability of at least $1-\delta$,
%\[
%R_T = \widetilde{O}\Big(\widetilde{d} \sqrt{TN} + \widetilde{d}_{\max} N \sqrt{TN}\Big).
%\]
$
R_T = \widetilde{O}\big(\widetilde{d}\sqrt{TN} + \widetilde{d}_{\max} N \sqrt{T}\big)
$.
% $
% R_T = \widetilde{O}\big(\widetilde{d}\sqrt{TN} + \widetilde{d}_{\max} N \sqrt{TN}\big)
% $.
\end{theorem}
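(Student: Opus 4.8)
The plan is to reduce the whole problem to a random-feature linear bandit in the neural tangent feature map $\phi(x) := g(x;\theta_0)/\sqrt{m}$ and then run a distributed optimism argument. First I would import, from the NTK analyses of \cite{zhou2020neural,zhang2020neural}, the high-probability facts that hold once $m$ is the prescribed polynomial in $(\lambda,T,K,N,L,\log(1/\delta),1/\lambda_0)$: $f(x;\theta_0)=0$ under Assumption~\ref{assumption:main}; $\norm{\phi(x)}_{2}$ is bounded by a constant; for the stated $\eta$ and $J$, the gradient-descent minimizer $\theta^{i}_t$ of \eqref{eq:nn:loss:function} satisfies $f(x;\theta^{i}_t)=\mu^{\text{local}}_{t,i}(x)+\varepsilon_m$, where $\mu^{\text{local}}_{t,i}$ is the kernel-ridge / GP posterior mean of agent $i$ under $\widetilde{k}(x,x')=\phi(x)^{\top}\phi(x')$ and $\varepsilon_m$ is an error vanishing polynomially in $m$, hence negligible; and that $\log\det(I+\mathbf{H}/\lambda)$ (resp.\ its per-agent analogue) controls the log-determinant of the empirical $\phi$-Gram matrix up to $\varepsilon_m$, so $\widetilde{d},\widetilde{d}_{\max}$ govern the information-gain sums. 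Note that $\overline{\theta}_{t,i}=\overline{V}_{t,i}^{-1}(B_{\text{sync}}+B_{\text{new},i})$ is \emph{exactly} the ridge estimator in the $\phi$-features built from the observations aggregated in $\overline{V}_{t,i}$ (all agents up to the last communication, plus agent $i$'s own observations in the current epoch).

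With this reduction I would establish two confidence events holding simultaneously over all $t\in[T],i\in[N],x\in\mathcal{X}$ with probability at least $1-\delta$. For $\mathrm{UCB}^{a}_{t,i}$: apply the self-normalized concentration of \cite{abbasi2011improved,chowdhury2017kernelized} to $\overline{\theta}_{t,i}$, using $\sqrt{2\mathbf{h}^{\top}\mathbf{H}^{-1}\mathbf{h}}\le B$ and the effective dimension $\widetilde{d}$, to get $|h(x)-\langle \phi(x),\overline{\theta}_{t,i}\rangle|\le \nu_{TKN}\sqrt{\lambda}\norm{\phi(x)}_{\overline{V}_{t,i}^{-1}}+\varepsilon_m$. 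For $\mathrm{UCB}^{b}_{t,i}$: since $f(x;\theta_{\text{sync,NN}})=\frac1N\sum_i f(x;\theta^{i}_t)\approx\frac1N\sum_i\mu^{\text{local}}_{t,i}(x)$ by near-linearity of the NN around $\theta_0$ (parameter averaging approximately equals output averaging), average the per-agent guarantee $|h(x)-\mu^{\text{local}}_{t,i}(x)|\le\nu_{TK}\widetilde{\sigma}^{\text{local}}_{t,i}(x)$ and apply Cauchy--Schwarz $\frac1N\sum_i a_i\le(\frac1N\sum_i a_i^2)^{1/2}$, which turns the bound into $\nu_{TK}(\frac1N\sum_i(\widetilde{\sigma}^{\text{local}}_{t,i}(x))^2)^{1/2}=\nu_{TK}\sqrt{\lambda}\norm{\phi(x)}_{V_{\text{sync,NN}}^{-1}}$ — exactly the bonus of $\mathrm{UCB}^{b}_{t,i}$. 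Hence both UCBs, and any convex combination $(1-\alpha_t)\mathrm{UCB}^{a}_{t,i}+\alpha_t\mathrm{UCB}^{b}_{t,i}$, is optimistic up to $\varepsilon_m$, and the standard optimism step gives $r_{t,i}\le 2(1-\alpha_t)\nu_{TKN}\sqrt{\lambda}\norm{\phi(x_{t,i})}_{\overline{V}_{t,i}^{-1}}+2\alpha_t\nu_{TK}\sqrt{\lambda}\norm{\phi(x_{t,i})}_{V_{\text{sync,NN}}^{-1}}+O(\varepsilon_m)$.

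Summing over $i$ and $t$ splits $R_T$ into an $\mathrm{UCB}^{a}$ term (in the analyzed variant, $\alpha_t\neq 0$ only in the first iteration after each of the $C_T$ communication rounds, so this is essentially all of $R_T$), an $\mathrm{UCB}^{b}$ term (only in those $C_T$ iterations), and a negligible $O(TN\varepsilon_m)$ remainder. The core is the distributed, epoch-based sum $\sum_{t,i}\sqrt{\lambda}\norm{\phi(x_{t,i})}_{\overline{V}_{t,i}^{-1}}$ à la \cite{wang2019distributed}: within an epoch $p$ of length $\ell_p$, $\overline{V}_{t,i}$ equals the last synchronized matrix $V_{\text{last}(p)}$ plus agent $i$'s own within-epoch rank-one updates, and the rule in line 10 of Algo.~\ref{algo:agent} guarantees $\ell_p\log(\det \overline{V}_{\text{end}(p),i}/\det V_{\text{last}(p)})\le D$ for every non-triggering agent. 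I would then (i) apply the elliptical-potential lemma to each agent's own within-epoch updates — contributing the per-agent dimension $\widetilde{d}_{\max}$ through its $\log\det$ increments, and a factor $\sqrt{\ell_p\cdot D/\ell_p}=\sqrt{D}$ per agent per epoch via the last display — while the telescoped synchronized $\log\det$'s over epochs contribute $\widetilde{d}$; and (ii) bound the number of epochs by balancing $\sum_p\ell_p=T$ against $\sum_p(\text{per-epoch }\log\det\text{ increments})=\widetilde{O}(\widetilde{d})$, which under the rule forces $C_T=\widetilde{O}(\sqrt{T\widetilde{d}/D})=\widetilde{O}(\widetilde{d}\sqrt{N})$ for $D=\widetilde{O}(T/(N\widetilde{d}))$ (this is also the claimed communication complexity). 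Combined with Cauchy--Schwarz over the $TN$ pulls and $\nu_{TKN}=\widetilde{O}(\sqrt{\widetilde{d}})$, this yields the centralized-like term $\widetilde{O}(\widetilde{d}\sqrt{TN})$ together with the distributed-overhead term $\widetilde{O}(\widetilde{d}_{\max}N\sqrt{TN})$ incurred because agents act on epoch-stale statistics.

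For the $\mathrm{UCB}^{b}$ term — only the $C_T$ first-of-epoch iterations — I would use the weight rule crucially: since $\alpha_t=\min_j\alpha_{t,j}\le\alpha_{t,j}$ with $\alpha_{t,j}=\widetilde{\sigma}^{\text{local}}_{t,j,\min}/\widetilde{\sigma}^{\text{local}}_{t,j,\max}$, one has $\alpha_t\,\widetilde{\sigma}^{\text{local}}_{t,j}(x)\le\alpha_t\,\widetilde{\sigma}^{\text{local}}_{t,j,\max}\le\widetilde{\sigma}^{\text{local}}_{t,j,\min}\le\widetilde{\sigma}^{\text{local}}_{t,j}(x_{t,j})$ for every $j$; substituting this into $\alpha_t\,\nu_{TK}(\frac1N\sum_j(\widetilde{\sigma}^{\text{local}}_{t,j}(x_{t,i}))^2)^{1/2}$ replaces the weighted $\mathrm{UCB}^{b}$ bonus by $\nu_{TK}$ times combinations of $\widetilde{\sigma}^{\text{local}}_{t,j}(x_{t,j})$, which \emph{do} telescope along agent $j$'s own pull sequence. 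A per-agent elliptical-potential bound ($\sum_{\text{first-of-epoch }t}(\widetilde{\sigma}^{\text{local}}_{t,j}(x_{t,j}))^2=\widetilde{O}(\widetilde{d}_{\max})$) plus Cauchy--Schwarz over the $C_T$ such iterations bounds this contribution by $\widetilde{O}(\nu_{TK}N\sqrt{C_T\widetilde{d}_{\max}})=\widetilde{O}(N\widetilde{d}_{\max}\sqrt{C_T})$, dominated by the second term. The hardest parts, I expect, are (a) showing $\mathrm{UCB}^{b}$ is a genuine upper confidence bound despite using \emph{parameter-averaged} weights $\theta_{\text{sync,NN}}$ and an \emph{inverse-averaged} covariance $V_{\text{sync,NN}}^{-1}$ — this hinges on the NN being approximately its own linearization, so averaging weights approximately averages outputs, after which the Cauchy--Schwarz step matches the averaged-inverse bonus exactly — and (b) the distributed-LinUCB bookkeeping that must simultaneously deliver the two-term regret bound and $C_T=\widetilde{O}(\widetilde{d}\sqrt{N})$ from the adaptive rule, all while propagating the NTK errors $\varepsilon_m$ through every inequality and verifying the prescribed polynomial width makes them lower-order.
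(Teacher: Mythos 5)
Your overall route is the same as the paper's: reduce to a linear bandit in the features $g(x;\theta_0)/\sqrt{m}$, prove $\text{UCB}^{a}$ optimistic via self-normalized concentration, prove $\text{UCB}^{b}$ optimistic by (i) showing parameter averaging approximately equals output averaging through the linearization around $\theta_0$ and (ii) averaging the per-agent confidence bounds with the concavity/Cauchy--Schwarz step that exactly reproduces the $V_{\text{sync,NN}}^{-1}$ bonus, then exploit the weight rule via $\alpha_t\,\widetilde{\sigma}^{\text{local}}_{t_p,j}(x_{t_p,i})\le\widetilde{\sigma}^{\text{local}}_{t_p,j}(x_{t_p,j})$ so that the $\text{UCB}^{b}$ contribution telescopes along each agent's own pull sequence, and finish with a Wang-et-al-style epoch count $\widetilde{O}(\sqrt{T\widetilde d/D})$. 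All of this matches Appendices B.2--B.6 of the paper (your bound on the $\text{UCB}^{b}$ part, restricted to the first-of-epoch iterations, is if anything slightly tighter than the paper's, which relaxes the sum to all $T$ iterations).

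The one step you leave under-specified is the one that actually needs care: the summation of the $\text{UCB}^{a}$ bonuses $\norm{g(x_{t,i};\theta_0)/\sqrt{m}}_{\overline{V}_{t,i}^{-1}}$. As literally described --- elliptical potential on each agent's own within-epoch updates, giving $\sqrt{D}$ per agent per epoch, multiplied by the number of epochs $P=\widetilde{O}(\sqrt{T\widetilde d/D})$ --- you get $\nu_{TKN}\cdot N\sqrt{D}\cdot P=\widetilde{O}(N\widetilde d\sqrt{T})$, which exceeds the target first term $\widetilde{d}\sqrt{TN}$ by a factor of $\sqrt{N}$ and is \emph{not} in general absorbed by the second term (that would require $\widetilde d\le\sqrt{N}\,\widetilde d_{\max}$, whereas subadditivity of information gain only gives $\widetilde d\le N\widetilde d_{\max}$). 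The paper's fix is the good/bad epoch dichotomy: an epoch is good if $\det V_p/\det V_{p-1}\le e$, in which case $\norm{\cdot}_{\overline{V}_{t,i}^{-1}}\le\sqrt{e}\,\norm{\cdot}_{\widetilde V_{t,i}^{-1}}$ for the round-robin centralized matrix $\widetilde V_{t,i}$ (Lemma 12 of Abbasi-Yadkori et al.), so the good-epoch sum telescopes globally to $\widetilde{O}(\widetilde d\sqrt{TN})$; the crude per-epoch $N\sqrt{D}$ bound is applied only to the at most $R=\widetilde{O}(\widetilde d)$ bad epochs, contributing $\widetilde{O}(\widetilde d^{3/2}N\sqrt{D})=\widetilde{O}(\widetilde d\sqrt{TN})$ for $D=\mathcal{O}(T/(N\widetilde d))$. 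Since you cite the right source and state the right final answer, this reads as an omission of detail rather than a wrong idea, but without the good/bad split your accounting as written does not deliver the stated bound.
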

\vspace{-1.2mm}
Refer to Appendix~\ref{app:conditions:on:m} for the detailed conditions on the NN width $m$ as well as the learning rate $\eta$ and  number $J$ of iterations for the gradient descent training (line 14 of Algo.~\ref{algo:agent}).
Intuitively, the \emph{effective dimension} $\widetilde{d}$ measures the actual underlying dimension of the set of all $TKN$ contexts for all agents~\citep{zhang2020neural}, and $\widetilde{d}_{\max} \triangleq \max_{i\in[N]}\widetilde{d}_i$ is the maximum among the underlying dimensions of the set of $TK$ contexts for each of the $N$ agents.
%$\widetilde{d}_{\max}$ is defined as $\widetilde{d}_{\max}=\max_{i\in[N]}\widetilde{d}_i$ in which $\widetilde{d}_i$ measures the underlying dimension of the set of $TK$ observed contexts of agent $i$.
\citet{zhang2020neural} showed that if all contexts lie in a $d'$-dimensional subspace of the RKHS induced by the NTK, then the effective dimension of these contexts can be upper-bounded by the constant $d'$.
% In this case, our regret upper bound becomes $\widetilde{\mathcal{O}}(\sqrt{T}N^{3/2})$ which is sub-linear in $T$.

%The two terms in the regret upper bound in Theorem~\ref{theorem:regret} arise due to the use of $\text{UCB}^{b}_{t,i}$ and $\text{UCB}^{a}_{t,i}$, respectively. 
\textbf{The first term} $\widetilde{d}\sqrt{TN}$ in the regret upper bound (Theorem \ref{theorem:regret}) arises due to 
% the use of 
$\text{UCB}^{a}_{t,i}$ and reflects the benefit of the federated setting.
In particular, this term matches the regret upper bound of standard Neural UCB \citep{zhou2020neural} running for $TN$ iterations and so, the average regret $\widetilde{d}\sqrt{T/N}$ across all agents decreases with a larger number $N$ of agents.
\textbf{The second term} $\widetilde{d}_{\max} N \sqrt{T}$ results from 
% the use of 
$\text{UCB}^{b}_{t,i}$ which involves two major components of our algorithm: the use of NNs for reward prediction 
%(line 5 of Algo.~\ref{algo:agent}) 
and the aggregation of the NN parameters.
% (line 4 of Algo.~\ref{algo:server}). 
Although 
%sy: the second term may
not reflecting the benefit of a larger $N$ in the regret bound,
% (in terms of the average regret across all agents), 
both components are important to our algorithm.
Firstly, the use of NNs for reward prediction is a crucial component in neural contextual bandits in order to exploit the strong representation power of NNs.
This is similar in spirit to the works on neural contextual bandits~\citep{zhang2020neural,zhou2020neural} in which the use of NNs for reward prediction does not improve the regret upper bound (compared with using the linear prediction given by the first term of $\text{UCB}^{a}_{t,i}$) and yet significantly improves the practical performance.
Secondly, the aggregation of the NN parameters 
%(line $4$ of Algo.~\ref{algo:server}) 
is also important for the performance of our \texttt{FN-UCB} since it allows us to exploit the federated setting in a similar way to FL for supervised learning which has been repeatedly
%sy: widely
shown to improve the 
%practical 
performance \citep{kairouz2019advances}.
% Moreover, 
We have also empirically verified (Sec.~\ref{subsec:synthetic:experiments}) that both components (i.e., the use of NNs for reward prediction 
%(i.e., the use of $\text{UCB}^{a}_{t,i}$) 
and the aggregation of NN parameters) are important to the practical performance of our algorithm.
The work of \citet{huang2021fl} has leveraged the NTK to analyze the convergence of FedAvg for supervised learning \citep{mcmahan2016communication}  which also averages the NN parameters in a similar way to our algorithm.
% the FL algorithm of FedAvg which, similar to our algorithm, also relies on the averaging of NN parameters. 
Note that their convergence results
% , consistent with many previous works on convergence of FL algorithms, 
also do not improve with a larger number $N$ of agents but in fact become worse with a larger $N$.
% Therefore, 

Of note, in the single-agent setting where $N=1$, we have that $\widetilde{d}=\widetilde{d}_{\max}$ (Sec.~\ref{sec:background}). Therefore, our regret upper bound from 
Theorem \ref{theorem:regret}
% \eqref{eq:final:regret:upper:bound:added:rebuttal} 
reduces to $R_T=\widetilde{O}(\widetilde{d}\sqrt{T})$, which, interestingly, matches the regret upper bounds of standard neural bandit algorithms including
Neural UCB \citep{zhou2020neural} and Neural TS \citep{zhang2020neural}.
We also prove (App.~\ref{app:proof:regret:fn:ucb:less:comm}) that \texttt{FN-UCB} (\texttt{Less Comm.}), which is a variant of our \texttt{FN-UCB} with a better communication efficiency (Sec.~\ref{subsec:comm:cost}), enjoys a regret upper bound of $R_T = \widetilde{O}(\widetilde{d}\sqrt{TN} + \widetilde{d}_{\max} N \sqrt{TN})$, whose second term is worse than that of \texttt{FN-UCB} (Theorem \ref{theorem:regret}) by a factor of $\sqrt{N}$.
In addition, we have also analyzed our general algorithm which \emph{does not set $\alpha=0$ in any iteration} (results and analysis in Appendix~\ref{app:extended:analysis:general:algo}), which requires an additional assumption and only introduces an additional multiplicative constant to the regret bound.

\textbf{Communication Complexity.}
% Next, 
The following result (proof in App.~\ref{sec:communication:complexity}) gives a theoretical guarantee on the communication complexity of \texttt{FN-UCB}, including its variant \texttt{FN-UCB} (\texttt{Less Comm.}):
% Next, the following result (proof in App.~\ref{sec:communication:complexity}) gives an upper bound on the total number of communication rounds of our \texttt{FN-UCB} algorithm, including its variant  \texttt{FN-UCB} (\texttt{Less Comm.}):
%\subsection{Communication Complexity}
%\label{subsec:communication:complexity}
\begin{theorem}
\label{theorem:communication}
With the same parameters as Theorem \ref{theorem:regret},
if the NN width $m$ satisfies $m \geq \text{\emph{poly}}(T, K, N, L, \log(1/\delta))$, then with probability of at least $1-\delta$, the total number of communication rounds for \texttt{FN-UCB} satisfies
$C_T = \widetilde{\mathcal{O}}(\widetilde{d}\sqrt{N})$.
\end{theorem}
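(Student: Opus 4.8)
The plan is to adapt the determinant-based (``rare-switching'') communication argument of \cite{wang2019distributed,abbasi2011improved} to the neural tangent features $g(\cdot;\theta_0)/\sqrt{m}$. Suppose $C$ communication rounds occur at iterations $t_1 < t_2 < \dots < t_C$, and set $t_0 \triangleq 0$. Call the iterations in $(t_{p-1},t_p]$ \emph{epoch $p$} and let $\Delta_p \triangleq t_p - t_{p-1}$ be its length; since these epochs partition a subset of $[T]$, $\sum_{p=1}^C \Delta_p \le T$. Introduce the ``full'' covariance $V_t \triangleq \lambda I + \sum_{\tau\le t}\sum_{j\in[N]} g(x_{\tau,j};\theta_0)g(x_{\tau,j};\theta_0)^\top/m$.

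First I would pin down the matrices in the synchronization criterion (line 10 of Algo.~\ref{algo:agent}). By line 5 of Algo.~\ref{algo:server}, at the start of epoch $p$ the quantity $W_{\text{sync}}$ equals the sum of the rank-one terms $g(x_{\tau,j};\theta_0)g(x_{\tau,j};\theta_0)^\top/m$ over all contexts of all agents up to iteration $t_{p-1}$, so $V_{\text{last}} = \lambda I + W_{\text{sync}} = V_{t_{p-1}}$ (line 18). Moreover, at the triggering iteration $t_p$, the matrix $W_{\text{new},i}$ of the triggering agent $i$ has accumulated only agent $i$'s $\Delta_p$ contexts observed during epoch $p$, so $\lambda I + W_{\text{sync}} + W_{\text{new},i} \preceq V_{t_p}$ and hence $\det(\lambda I + W_{\text{sync}} + W_{\text{new},i}) \le \det V_{t_p}$. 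Since the criterion fires, for every $p\in[C]$,
\[
  \Delta_p \, \log\frac{\det V_{t_p}}{\det V_{t_{p-1}}} \;\ge\; \Delta_p \, \log\frac{\det(\lambda I + W_{\text{sync}} + W_{\text{new},i})}{\det V_{\text{last}}} \;>\; D .
\]

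Next I would bound the total determinant growth. Telescoping and monotonicity of $\det V_t$ give $\sum_{p=1}^{C}\log\frac{\det V_{t_p}}{\det V_{t_{p-1}}} = \log\frac{\det V_{t_C}}{\det(\lambda I)}$. Since $V_{t_C} - \lambda I$ is the sum of the rank-one neural-tangent-feature outer products over all $TKN$ contexts, a standard determinant identity together with the NN--NTK approximation bounds of \cite{zhang2020neural,zhou2020neural} (valid under the stated width condition $m\ge\text{poly}(T,K,N,L,\log(1/\delta))$) yield $\log\frac{\det V_{t_C}}{\det(\lambda I)} \le \log\det(I + \mathbf{H}/\lambda) + 1 = \widetilde{d}\log(1+TKN/\lambda) + 1 = \widetilde{\mathcal{O}}(\widetilde{d})$, using the definition of $\widetilde{d}$ and that $\lambda = 1+2/T$ is a constant. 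Write $R = \widetilde{\mathcal{O}}(\widetilde{d})$ for this bound and $\widetilde{L}_p \triangleq \log\frac{\det V_{t_p}}{\det V_{t_{p-1}}}$, so $\sum_{p=1}^{C}\widetilde{L}_p \le R$ while $\Delta_p \widetilde{L}_p > D$.

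Finally I would close the argument by Cauchy--Schwarz: from $\sqrt{D} < \sqrt{\Delta_p}\,\sqrt{\widetilde{L}_p}$,
\[
  C\sqrt{D} \;<\; \sum_{p=1}^{C}\sqrt{\Delta_p}\,\sqrt{\widetilde{L}_p} \;\le\; \sqrt{\Big(\sum_{p=1}^{C}\Delta_p\Big)\Big(\sum_{p=1}^{C}\widetilde{L}_p\Big)} \;\le\; \sqrt{TR},
\]
hence $C < \sqrt{TR/D}$. Substituting $D = \Theta\big(T/(N\widetilde{d})\big)$ and $R = \widetilde{\mathcal{O}}(\widetilde{d})$ gives $C < \sqrt{T\cdot\widetilde{\mathcal{O}}(\widetilde{d})\cdot N\widetilde{d}/T} = \widetilde{\mathcal{O}}(\widetilde{d}\sqrt{N})$; the at-most-one final non-triggering epoch is absorbed into $\widetilde{\mathcal{O}}(\cdot)$, giving $C_T = \widetilde{\mathcal{O}}(\widetilde{d}\sqrt{N})$. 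I expect the only nontrivial step to be the neural-network bookkeeping in the third paragraph --- verifying that the log-determinant of the Gram matrix of $g(\cdot;\theta_0)/\sqrt{m}$ is controlled by the effective dimension $\widetilde{d}$ up to a vanishing additive error, which is precisely where the polynomial width requirement is consumed; the rest is the routine rare-switching estimate combined with the choice of $D$.
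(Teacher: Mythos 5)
Your proposal is correct and follows essentially the same route as the paper's proof: both rest on the observation that each triggering epoch $p$ satisfies $\Delta_p \log\frac{\det V_{t_p}}{\det V_{t_{p-1}}} > D$, that the log-determinant increments telescope and are bounded by $R = \widetilde{d}\log(1+TKN/\lambda)+1 = \widetilde{\mathcal{O}}(\widetilde{d})$ via the NTK approximation lemma (which is exactly where the width condition is consumed, as you note), and that this forces $C = \mathcal{O}(\sqrt{TR/D}) = \widetilde{\mathcal{O}}(\widetilde{d}\sqrt{N})$ after substituting $D = \Theta(T/(N\widetilde{d}))$. The only cosmetic difference is the final counting step: you apply Cauchy--Schwarz to $\sum_p \sqrt{\Delta_p \widetilde{L}_p}$, whereas the paper splits epochs into those longer and shorter than the threshold $\zeta = \sqrt{DT/R}$ and bounds each set separately --- the two computations yield the same bound.
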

\vspace{-2mm}
The specific condition on $m$ required by Theorem~\ref{theorem:communication} corresponds to condition 1 listed in App.~\ref{app:conditions:on:m} (see App.~\ref{sec:communication:complexity} for details) which
%Therefore, the condition on $m$ needed by Theorem~\ref{theorem:communication} 
is a subset of the conditions required by Theorem~\ref{theorem:regret}.
Following the same discussion on the effective dimension $\widetilde{d}$ presented above, if all contexts lie in a $d'$-dimensional subspace of the RKHS induced by the NTK, then $\widetilde{d}$ can be upper-bounded by the constant $d'$, consequently leading to a communication complexity of $C_T = \widetilde{\mathcal{O}}(\sqrt{N})$.
% Moreover, without this assumption on the contexts, the connection between $\widetilde{d}$ and $\gamma_{TKN}$ allows us to derive a worst-case communication complexity of $C_T = \widetilde{\mathcal{O}}( \gamma_{TKN}\sqrt{N} )=\widetilde{O}(T^{\frac{d-1}{d}} K^{\frac{d-1}{d}} N^{\frac{3d-2}{d2}})$, which is still sub-linear in $T$.

%$R_T = \widetilde{O}( \gamma_{TKN} \sqrt{TN} + \gamma_{TK} N^{3/2} \sqrt{T} )=\widetilde{O}(K^{\frac{(d-1)}{d}}T^{\frac{3d-2}{2d}} N^{3/2} )$.
\vspace{-1.5mm}
\subsection{Proof Sketch}
\label{subsec:proof:sketch}
\vspace{-1.5mm}
We give a brief sketch of our regret analysis for Theorem \ref{theorem:regret} (detailed proof in Appendix \ref{app:proof:regret}).
To begin with, we need to prove that both $\text{UCB}^{a}_{t,i}$ and $\text{UCB}^{b}_{t,i}$ are valid high-probability upper bounds on the reward function $h$ (App.~\ref{subsec:validity:of:ucbs}) given that the conditions on $m$, $\eta$, and $J$ in App.~\ref{app:conditions:on:m} are satisfied.

Since $\text{UCB}^{a}_{t,i}$ can be viewed as 
% the 
Linear UCB 
% policy~
% \citep{abbasi2011improved} 
using the neural tangent features $g(x;\theta_0)/\sqrt{m}$ as the input features (Sec.~\ref{sec:fn_ucb}), its validity as a high-probability upper bound on $h$ can be proven following similar steps as that of standard linear and kernelized bandits \citep{chowdhury2017kernelized} (see Lemma~\ref{lemma:confidence:bound:ucb:2} in App.~\ref{subsec:validity:of:ucbs}).
Next, to prove that $\text{UCB}^{b}_{t,i}$ is also a high-probability upper bound on $h$ (up to additive error terms), let $\theta_{t,i}^{\text{local}} \triangleq (V^{\text{local}}_{t,i})^{-1}(\sum^{t}_{\tau=1} y_{\tau,i} g(x_{\tau,i};\theta_0) / \sqrt{m} )$ which is defined in the same way as $\overline{\theta}_{t,i}$ (line 4 of Algo.~\ref{algo:agent}) except that $\theta_{t,i}^{\text{local}}$ only uses the local observations of agent $i$.
\textbf{Firstly}, we show that $f(x;\theta_{\text{sync,NN}})$ (i.e., the NN prediction using the aggregated parameters) is close to $N^{-1}\sum^N_{i=1}\langle g(x;\theta_0)/\sqrt{m}, \theta_{t,i}^{\text{local}}\rangle$ which is the linear prediction using $\theta_{t,i}^{\text{local}}$ averaged over all agents.
This is achieved by showing that the linear approximation of the NN at 
% the initial parameters 
$\theta_0$ is close to both 
% of these two 
terms.
\textbf{Secondly}, we show that the absolute difference between the linear prediction $\langle g(x;\theta_0)/\sqrt{m}, \theta_{t,i}^{\text{local}}\rangle$ of agent $i$ and the reward function $h(x)$ can be upper-bounded by $\nu_{TK} \sqrt{\lambda} ||g(x;\theta_0) / \sqrt{m}||_{(V_{t,i}^{\text{local}})^{-1}}$. This can be done following similar steps as the proof for $\text{UCB}^{a}_{t,i}$ mentioned above.
\textbf{Thirdly}, using the averaged linear prediction $N^{-1}\sum^N_{i=1}\langle g(x;\theta_0)/\sqrt{m}, \theta_{t,i}^{\text{local}}\rangle$ as an intermediate term, the difference between $f(x;\theta_{\text{sync,NN}})$ and $h(x)$ can be upper-bounded.
This implies the validity of $\text{UCB}^{b}_{t,i}$ as a high-probability upper bound on $h$ (up to additive error terms 
which are small given the conditions on $m$, $\eta$, and $J$ presented in App.~\ref{app:conditions:on:m}), 
%which can be made small with the appropriate choices of $m$, $\eta$ and $J$ described in Appendix~\ref{app:conditions:on:m}), 
as formalized by Lemma~\ref{lemma:confidence:bound:ucb:1} in App.~\ref{subsec:validity:of:ucbs}.

Next, following similar footsteps as the analysis in \citet{wang2019distributed}, we separate all epochs into ``good'' epochs (intuitively, those epochs during which the amount of newly collected information from all agents is not too large) and ``bad'' epochs (details in App.~\ref{subsubsec:definition:good:bad:epochs}), and then separately upper-bound the regrets incurred in these two types of epochs.
For good epochs (App.~\ref{subsec:regret:good:epochs}), we 
are able to derive a tight upper bound on the regret $r_{t,i}=h(x_{t,i}^*)-h(x_{t,i})$ in each iteration $t$ by making use of 
%the definition of good epochs (i.e., the change of information between consecutive epochs is limited), 
the fact that the change of information in a good epoch
%between consecutive epochs 
is bounded,
% (due to the definition of good epochs),
and consequently obtain a tight upper bound on the total regrets in all good epochs.
%For bad epochs, we derive a relatively loose upper bound on $r_{t,i}$ and then apply the result from Appendix~\ref{subsubsec:definition:good:bad:epochs} which guarantees that the total number of bad epochs can be upper-bounded.
For bad epochs (App.~\ref{subsec:regret:bad:epochs}), we make use of the result from App.~\ref{subsubsec:definition:good:bad:epochs} which guarantees that the total number of bad epochs can be upper-bounded.
As a result, with an appropriate choice of $D=\mathcal{O}(T / (N \widetilde{d}))$, the growth rate of the total regret incurred in bad epochs is smaller than that in good epochs.
Lastly, the final regret upper bound 
%(Theorem~\ref{theorem:regret}) 
follows from adding up the total regrets from good and bad epochs (App.~\ref{subsec:regret:total}).

\vspace{-2.5mm}
\section{Experiments}
\label{sec:experiments}
\vspace{-2.5mm}
All figures in this section plot the average cumulative regret across all $N$ agents up to an iteration, which allows us to inspect the benefit that the federated setting brings to an agent (on average).
In all presented results, unless specified otherwise (by specifying a value of $D$), a communication round happens after each iteration.
All curves stand for the mean and standard error from 3 independent runs.
Some experimental details and results are deferred to App.~\ref{app:more:experimental:details} due to space limitation.
\vspace{-2mm}
\subsection{Synthetic Experiments}
\label{subsec:synthetic:experiments}
\vspace{-2mm}
We firstly use synthetic experiments to illustrate some interesting insights about our 
% \texttt{FN-UCB} 
algorithm. 
Similar to that of~\citet{zhou2020neural}, we adopt the synthetic functions of $h(x)=\cos(3 \langle a, x \rangle)$ and $h(x)=10(\langle a, x \rangle)^2$ which are referred to as the \texttt{cosine} and \texttt{square} functions, respectively.
We add a Gaussian observation noise with a standard deviation of $0.01$. 
The parameter $a$ is a $10$-dimensional vector randomly sampled from the unit hypersphere. In each iteration, every agent receives $K=4$ contexts (arms) which are randomly sampled from the unit hypersphere.
For fair comparisons, for all methods (including our \texttt{FN-UCB}, Neural UCB, and Neural TS), we use the same set of parameters of $\lambda=\nu_{TKN}=\nu_{TK}=0.1$ and use an NN with $1$ hidden layer and a width of $m=20$.
As suggested by our theoretical analysis (Sec.~\ref{subsec:weight:two:ucbs}),
% (Sec.~\ref{sec:fn_ucb}), 
we select an increasing sequence of $\alpha$ which is linearly increasing (to $1$) in the first 700 iterations, and let $\alpha=1$ afterwards.
To begin with, we compare 
% the performances of 
our main \texttt{FN-UCB} algorithm and its variant \texttt{FN-UCB} (\texttt{Less Comm.}) (Sec.~\ref{subsec:comm:cost}).
% using both \texttt{cosine} and \texttt{square}. 
The results (Figs.~\ref{fig:exp:synth:with:less:comm}a and~\ref{fig:exp:synth:with:less:comm}b in App.~\ref{app:more:experimental:details}) show that their empirical performances are very similar. So, for practical deployment, we recommend the use of \texttt{FN-UCB} (\texttt{Less Comm.}) as it is more communication-efficient and achieves a similar performance.
Accordingly, we will use the variant \texttt{FN-UCB} (\texttt{Less Comm.}) in all our subsequent experiments and refer to it as \texttt{FN-UCB} for simplicity.

Fig.~\ref{fig:exp:synth} presents the results.
Figs.~\ref{fig:exp:synth}a and~\ref{fig:exp:synth}b show that our \texttt{FN-UCB} with $N=1$ agent performs comparably with Neural UCB and Neural TS, and that the federation of a larger number $N$ of agents consistently improves the performance of our \texttt{FN-UCB}.
Note that the federation of $N=2$ agents can already provide significant improvements over non-federated algorithms. 
Fig.~\ref{fig:exp:synth}c gives an illustration of the importance of different components in our \texttt{FN-UCB}.
The red curve is obtained by removing $\text{UCB}^{b}_{t,i}$ (i.e., letting $\alpha=0$) and the green curve corresponds to removing $\text{UCB}^{a}_{t,i}$. 
The red curve shows that 
% despite achieving smaller regrets than the green curve initially due to its ability to exploit the observations from the other agents, 
relying solely on $\text{UCB}^{a}_{t,i}$ leads to significantly larger regrets in the long run due to its inability to utilize NNs to model the reward functions.
%sy: to model the reward functions accurately like NNs.
On the other hand, the green curve incurs larger regrets than the red curve initially; however, after more observations are collected (i.e., after the NNs are trained with enough data to accurately model the reward function), it quickly learns to achieve much smaller regrets.
These results 
% therefore
provide empirical justifications for our discussion on the weight between the two UCBs (Sec.~\ref{subsec:weight:two:ucbs}): It is reasonable to use an increasing sequence of $\alpha$ such that more weight is given to $\text{UCB}^{a}_{t,i}$ initially and then to $\text{UCB}^{b}_{t,i}$ later.
The yellow curve is obtained by removing the step of aggregating (i.e., averaging) the NN parameters (in line 4 of Algo.~\ref{algo:server}), i.e., when calculating $\text{UCB}^{b}_{t,i}$ (line 6 of Algo.~\ref{algo:agent}), 
% we use $\theta^{i}_{t}$ and $(V^{\text{local}}_{t,i})^{-1}$ to replace $\theta_{\text{sync,NN}}$ and $V_{\text{sync,NN}}^{-1}$.
we use $\theta^{i}_{t}$ to replace $\theta_{\text{sync,NN}}$.
The results show that the aggregation of the NN parameters significantly improves the performance of \texttt{FN-UCB} (i.e., the blue curve has much smaller regrets than the yellow one) and is hence an indispensable part of our \texttt{FN-UCB}.
% algorithm.
Lastly, Fig.~\ref{fig:exp:synth}d shows that more frequent communications (i.e., smaller values of $D$ which make it easier to initiate a communication round; see line 11 of Algo.~\ref{algo:agent}) lead to smaller regrets.
% and the performance gradually approaches that of single-agent \texttt{FN-UCB} ($N=1$) as $D$ increases.

\begin{figure}
\vspace{-6mm}
     \centering
     \begin{tabular}{cccc}
         \includegraphics[width=0.25\linewidth]{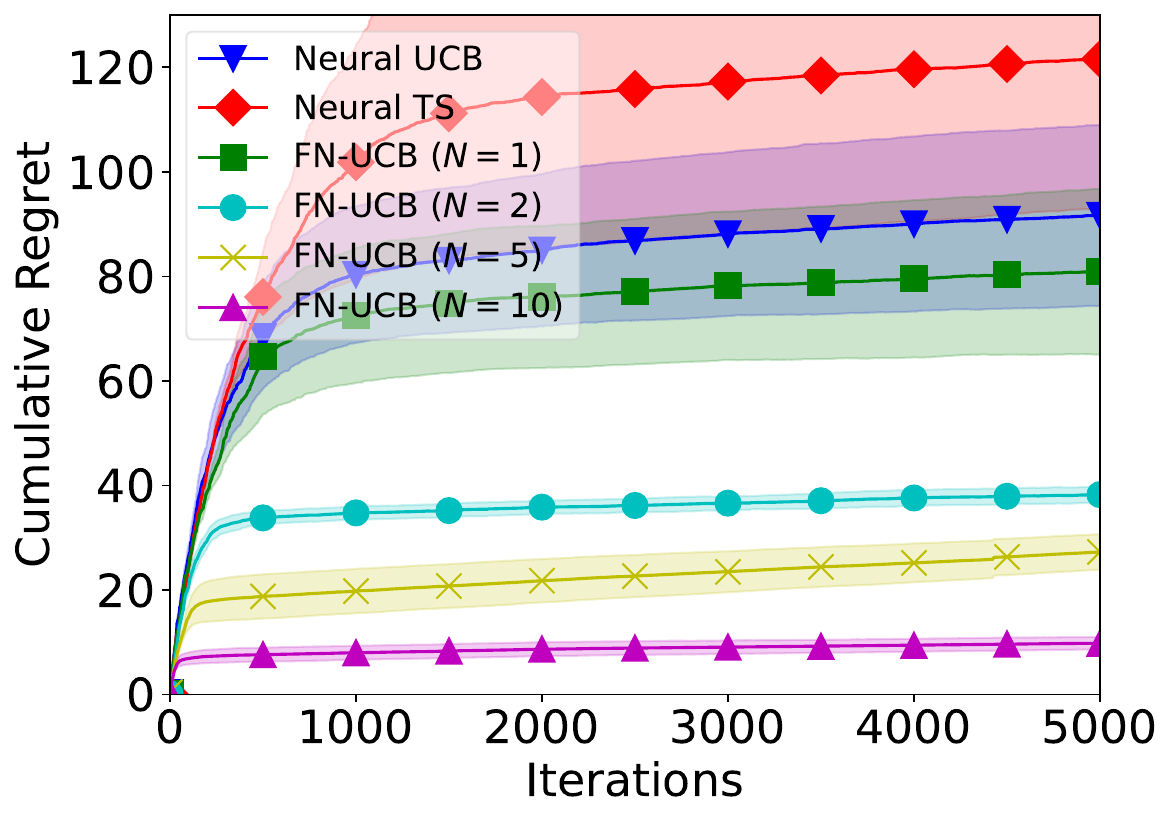} & \hspace{-6mm} 
         \includegraphics[width=0.25\linewidth]{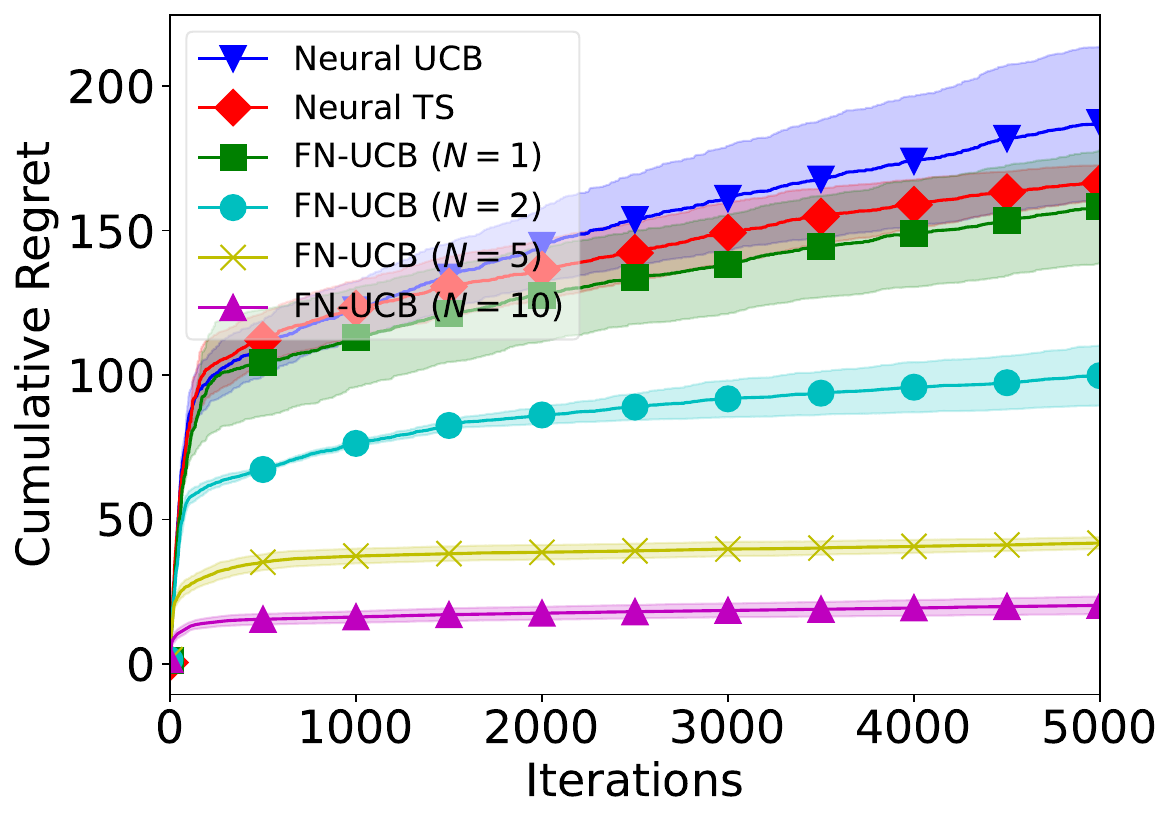}& \hspace{-6mm}
         \includegraphics[width=0.25\linewidth]{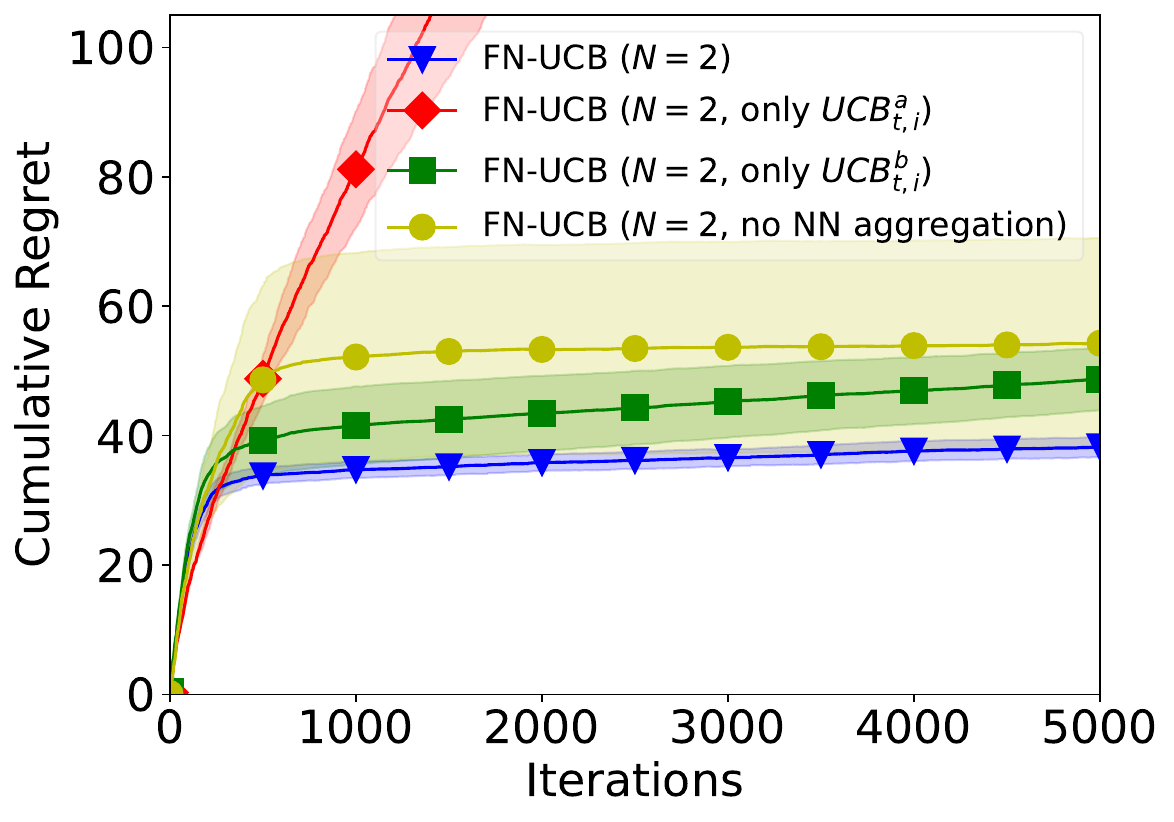}& \hspace{-6mm}
         \includegraphics[width=0.25\linewidth]{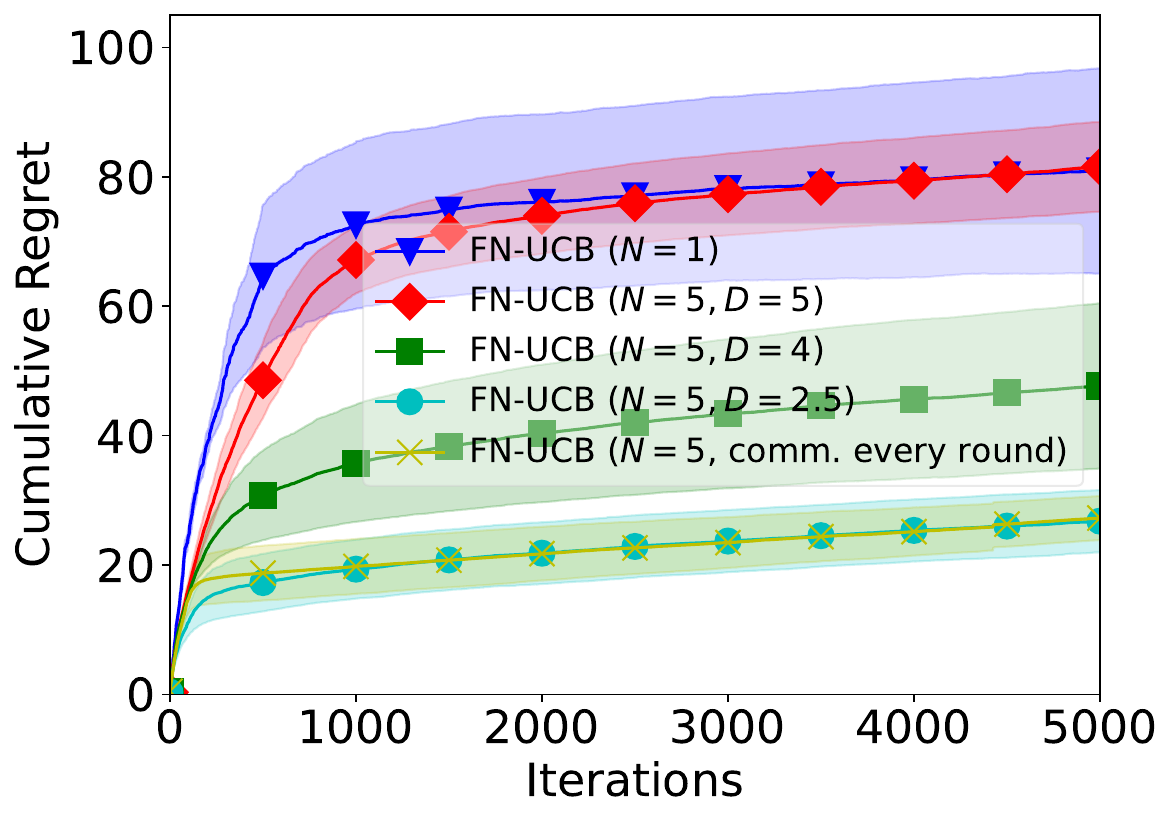}\\
         {\hspace{-2mm} (a) \texttt{cosine}} & {\hspace{-5mm} (b) \texttt{square}} & {\hspace{-6mm} (c) \texttt{cosine}} & {\hspace{-7mm} (d) \texttt{cosine}}
     \end{tabular}
\vspace{-2.8mm}
     \caption{
     Cumulative regret with varying number of agents for the (a) \texttt{cosine} function and (b) \texttt{square} function.
     (c) Illustration of the importance of different components of our \texttt{FN-UCB} algorithm (\texttt{cosine} function).
     (d) Performances with different values of $D$  (\texttt{cosine} function). The average number of rounds of communications are $348.0,380.0,456.7$ for $D=5, 4, 2.5$, respectively.
     }
     \label{fig:exp:synth}
\vspace{-1.4mm}
\end{figure}

\begin{figure}
\vspace{-1.4mm}
     \centering
     \begin{tabular}{cccc}
        \hspace{-4mm} \includegraphics[width=0.265\linewidth]{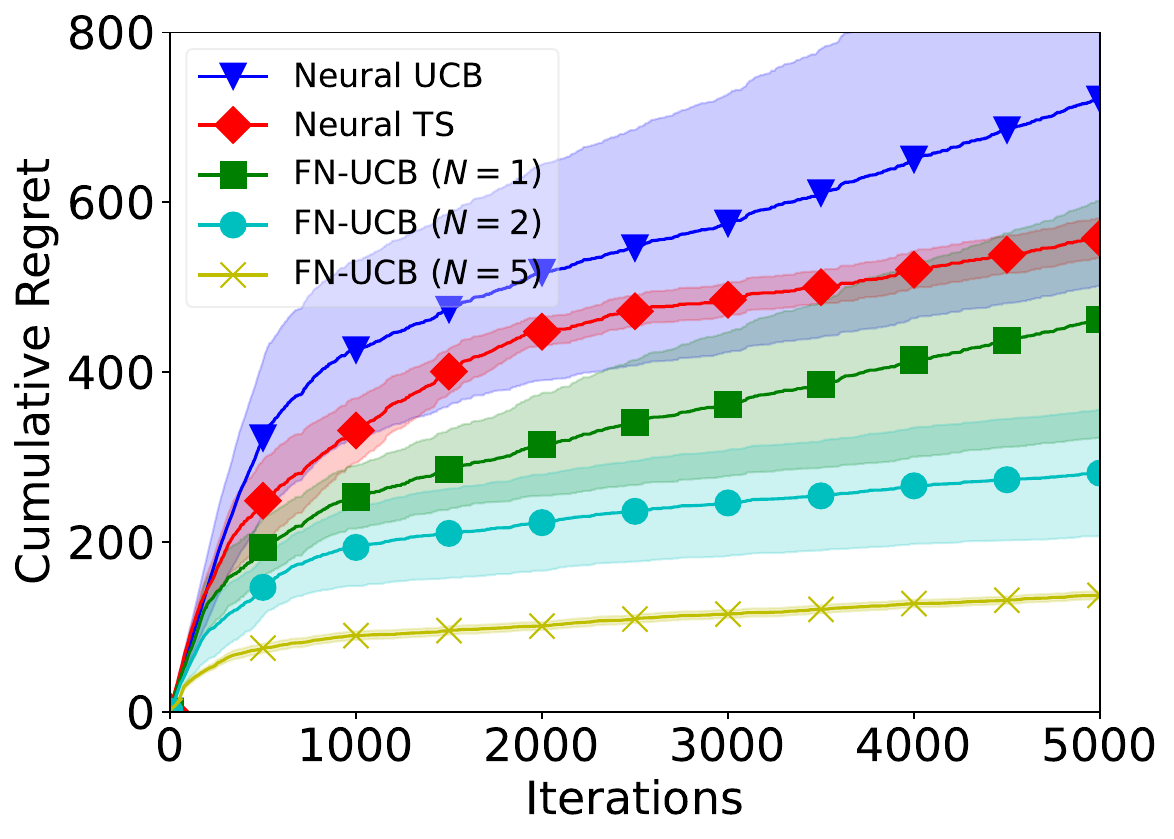} & \hspace{-7.5mm} 
         \includegraphics[width=0.265\linewidth]{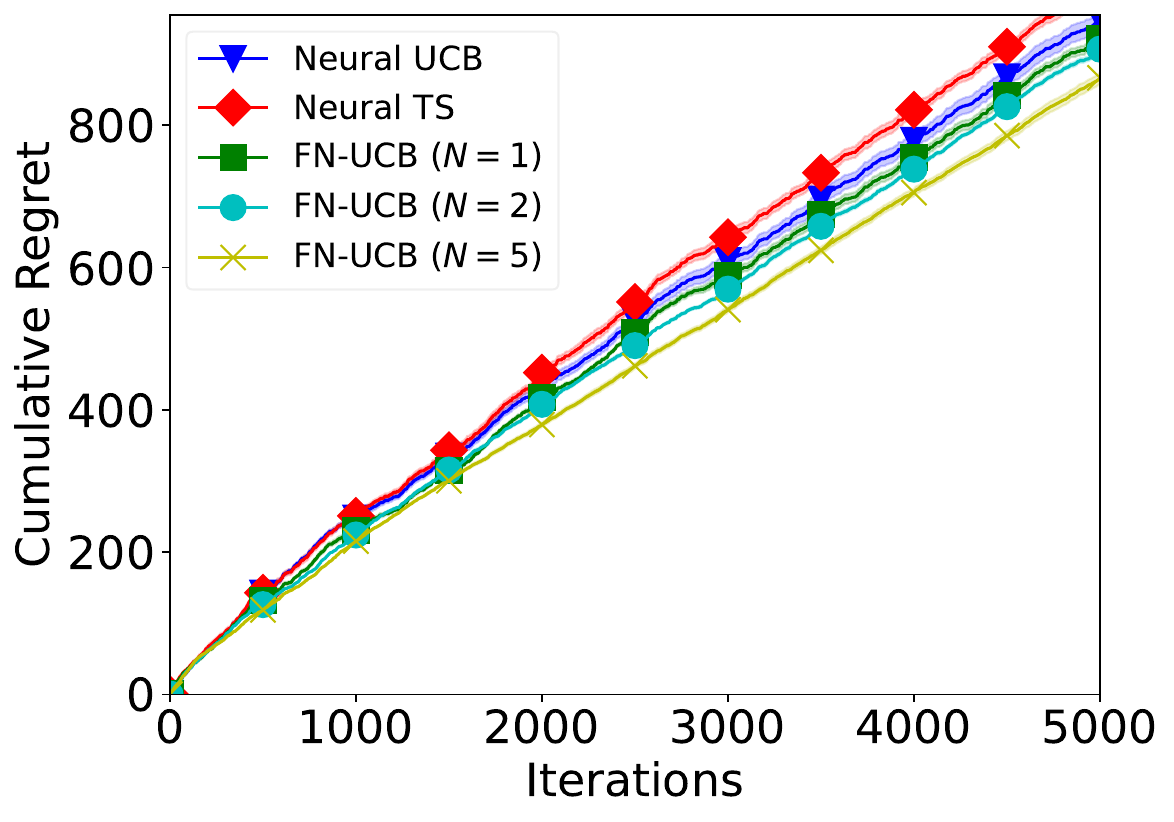}& \hspace{-7.5mm}
         \includegraphics[width=0.265\linewidth]{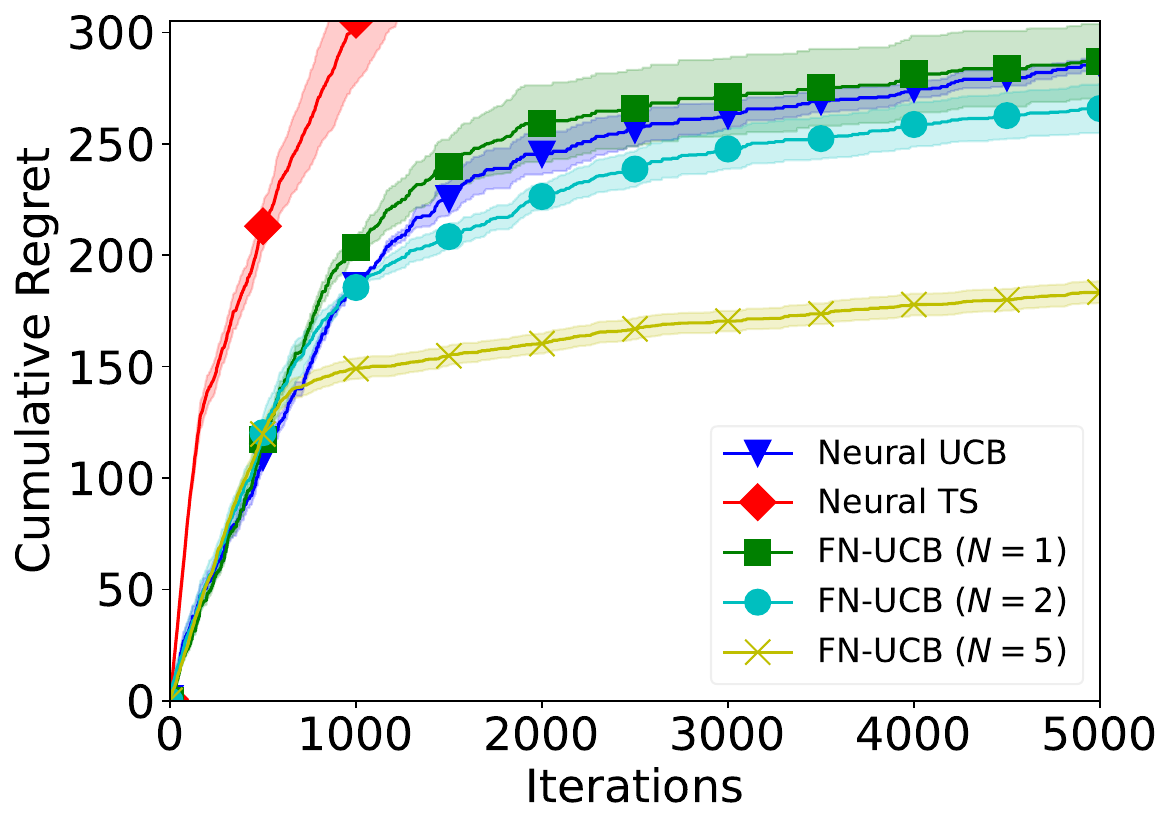}& \hspace{-7.5mm}
         \includegraphics[width=0.265\linewidth]{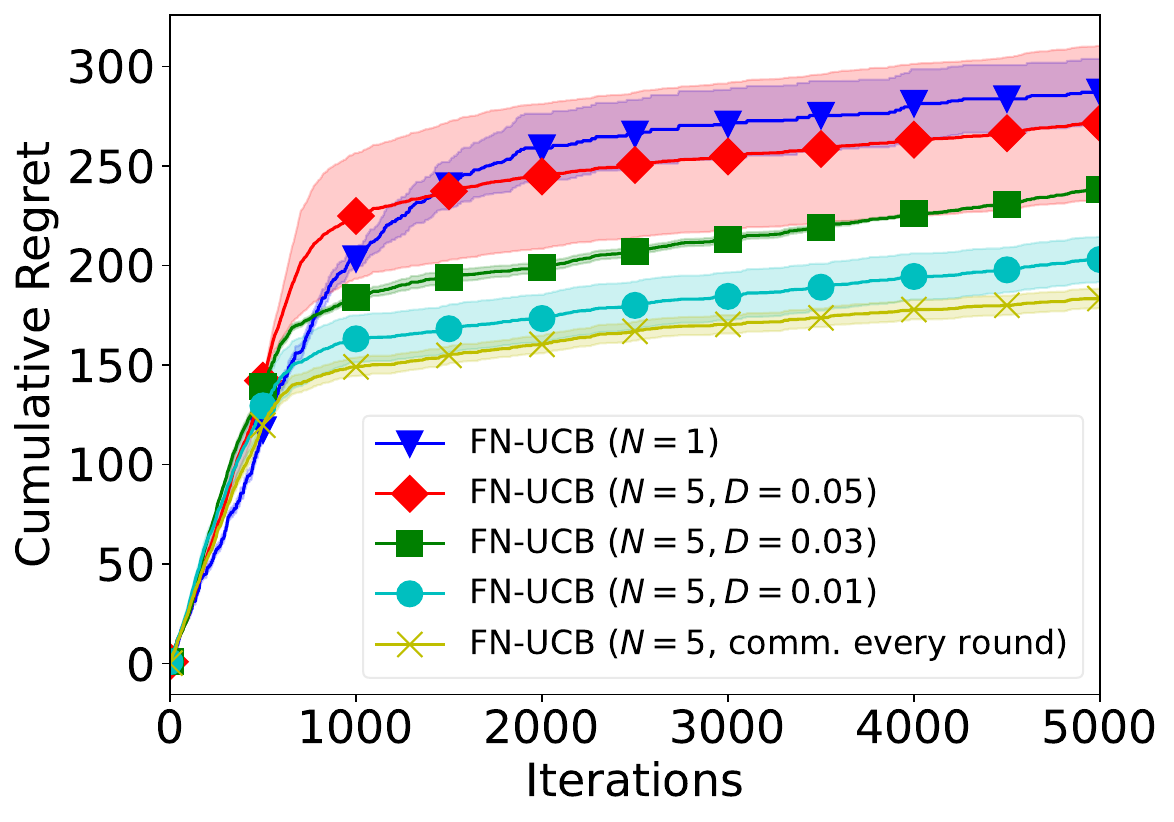}\\
         {\hspace{-3mm}(a) \texttt{shuttle}} & {\hspace{-5mm}(b) \texttt{magic telescope}} & {\hspace{-4mm}(c) \texttt{shuttle} (diag.)} & {\hspace{-3mm}(d) \texttt{shuttle} (diag.)}
     \end{tabular}
\vspace{-2.8mm}
     \caption{
     Results ($m=20$) for (a) \texttt{shuttle} and (b) \texttt{magic}.
     (c) Results for \texttt{shuttle} with diagonal approximation ($m=50$).
     (d) Results for \texttt{shuttle} with different values of $D$. The average number of communication rounds are $3850.7,4442.7,4906.3$ for $D=0.05, 0.03, 0.01$, respectively.
     }
     \label{fig:exp:real}
\vspace{-4mm}
\end{figure}

\vspace{-2mm}
\subsection{Real-world Experiments}
\label{subsec:real:experiments}
\vspace{-2mm}
We adopt the \texttt{shuttle} and \texttt{magic telescope} datasets from the UCI machine learning repository \citep{Dua:2019} and construct the 
%contextual bandit 
experiments following a widely used protocol in previous works \citep{li2010contextual,zhang2020neural,zhou2020neural}.
A $K$-class classification problem can be converted into a $K$-armed contextual bandit problem. 
In each iteration, an input $\mathbf{x}$ is randomly drawn from the dataset and is then used to construct $K$ 
%($K\times d$-dimensional) 
context feature vectors  $\mathbf{x}_1=[\mathbf{x};\mathbf{0}_d;\ldots;\mathbf{0}_d],\mathbf{x}_2=[\mathbf{0}_d;\mathbf{x};\ldots;\mathbf{0}_d],\ldots,\mathbf{x}_K=[\mathbf{0}_d;\ldots;\mathbf{0}_d;\mathbf{x}]$ which correspond to the $K$ classes.
The reward is $1$ if the arm with the correct class is pulled, and $0$ otherwise.
%We use an NN with $L=1$ hidden layer and $m=20$ hidden nodes for both datasets.
%We additionally use an NN ($L=1,m=50$) with diagonal approximations for the shuttle dataset, in order to verify the effectiveness of our \texttt{FN-UCB} with the diagonal approximation, which improves the communication and computational cost of our \texttt{FN-UCB} (Sec.~\ref{sec:fn_ucb}).
%The results (Fig.~\ref{fig:exp:real}) show that 
For fair comparisons, we use the same set of parameters of $\lambda=10$, $\nu_{TKN}=0.1$, and $\nu_{TK}=0.01$ for all methods.
Figs.~\ref{fig:exp:real}a and~\ref{fig:exp:real}b present the results for the two datasets ($1$ hidden layer, $m=20$) and show that our \texttt{FN-UCB} with $N=2$ agents consistently outperforms standard Neural UCB and Neural TS, and its performance also improves with the federation of more agents.
Fig.~\ref{fig:exp:real}c shows the results for \texttt{shuttle} when diagonal approximation (Sec.~\ref{subsec:comm:cost}) is applied to the NNs ($1$ hidden layer, $m=50$); the corresponding results are consistent with those in Fig.~\ref{fig:exp:real}a.\footnote{Since diagonalization increases the scale of the first term in $\text{UCB}^{a}_{t,i}$, we use a heuristic to rescale the values of this term for all contexts 
% to be between $[0,1]$, i.e., 
such that 
the max and min values (among all contexts) are $0$ and $1$ after rescaling.}
Moreover, the regrets in Fig.~\ref{fig:exp:real}c are in general smaller than those in Fig.~\ref{fig:exp:real}a.
This may suggest that in practice, a wider NN with diagonal approximation may be preferable to a narrower NN without diagonal approximation since it not only improves the performance but also reduces the computational and communication costs (Sec.~\ref{subsec:comm:cost}).
Fig.~\ref{fig:exp:real}d plots the regrets of \texttt{shuttle} (with diagonal approximation) for different values of $D$ and shows that more frequent communications lead to better performances and are hence consistent with 
% the results shown in 
that in Fig.~\ref{fig:exp:synth}d.
For completeness, we also compare their performance with that of linear and kernelized contextual bandit algorithms (for the experiments in both Secs.~\ref{subsec:synthetic:experiments} and \ref{subsec:real:experiments}), and the results (Fig.~\ref{fig:exp:with:linear:kernel}, App.~\ref{app:more:experimental:details}) show that they are outperformed by neural contextual bandit algorithms.

\vspace{-2mm}
\section{Conclusion}
\vspace{-2mm}
\label{sec:conclusion}
This paper describes the first federated neural contextual bandit algorithm called \texttt{FN-UCB}.
We use a weighted combination of two UCBs
%, which, respectively, helps accelerate exploration by using observations from other agents and uses an NN with aggregated parameters for reward prediction.
and the choice of this weight required by our theoretical analysis has an interesting interpretation emphasizing accelerated exploration initially and accurate prediction of the aggregated NN later.
We derive upper bounds on the regret and communication complexity of \texttt{FN-UCB}, and verify its effectiveness using empirical experiments.
Our algorithm is not equipped with privacy guarantees,
% and does not consider heterogeneous reward functions, 
which may be a potential limitation and will be tackled in future work.
% A potential negative societal impact is that our paper may further promote the use of NNs in more applications, which may increase energy consumption and contribute to the greenhouse effect.

%to incorporate differential privacy into our \texttt{FN-UCB}, potentially following similar techniques as \cite{dubey2020differentially}.

\newpage
\section*{Reproducibility Statement}
We have included the necessary details to ensure the reproducibility of our theoretical and empirical results.
For our theoretical results, we have stated all our assumptions in Sec.~\ref{sec:background}, added a proof sketch in Sec.~\ref{subsec:proof:sketch}, and included the complete proofs in App.~\ref{app:proof:regret} and App.~\ref{sec:communication:complexity}.
Our detailed experimental settings have been described in Sec.~\ref{subsec:synthetic:experiments}, Sec.~\ref{subsec:real:experiments}, and App.~\ref{app:more:experimental:details}.
Our code has been submitted as  supplementary material.

\subsubsection*{Acknowledgments}
This research/project is supported by A*STAR under its RIE$2020$ Advanced Manufacturing and Engineering (AME) Industry Alignment Fund – Pre Positioning (IAF-PP) (Award A$19$E$4$a$0101$).

\bibliography{iclr2023_conference}
\bibliographystyle{iclr2023_conference}

\newpage
\appendix

%\setcounter{page}{13}

%\section{Appendix}

% \vspace{-1.5mm}
\section{Related Works}
\label{sec:related:works}
% \vspace{-1.5mm}
\paragraph{Federated Bandits.}
Federated learning (FL) has received enormous attention in recent years \citep{kairouz2019advances,li2021survey,li2019federated,mcmahan2016communication}.
A number of recent works have extended the classic $K$-armed bandits (i.e., the arms are not associated with feature vectors) to the federated setting.
%The works of 
\citet{li2022privacy} and \citet{li2020federated} focused on incorporating privacy guarantees into federated $K$-armed bandits in both centralized and decentralized settings.
%The work of~
\citet{shi2021federated} proposed a setting where the goal is to minimize the regret of a global bandit whose reward of an arm is the average of the rewards of the corresponding arm from all agents, which was later extended by adding personalization such that every agent aims to maximize a weighted combination between the global and local rewards \citep{shi2021federatedwithpersonalization}.
Subsequent works on federated $K$-armed bandits have focused on other important aspects such as decentralized communication via the gossip algorithm~\citep{zhu2021federated}, the security aspect via cryptographic techniques~\citep{ciucanu2022samba}, uncoordinated exploration~\citep{yan2022federated}, and robustness against Byzantine attacks~\citep{demirel2022federated}.
Regarding federated linear contextual bandits, \citet{wang2019distributed} proposed a distributed linear contextual bandit algorithm which allows every agent to use the observations from the other agents by only exchanging the sufficient statistics to calculate the Linear UCB policy.
Subsequently,
\citet{dubey2020differentially} extended the method from \citet{wang2019distributed} to consider differential privacy and decentralized communication,
\citet{huang2021federated} considered a setting where every agent is associated with a unique context vector, 
\citet{li2021asynchronous} focused on asynchronous communication, 
and \citet{jadbabaie2022byzantine} considered the robustness against Byzantine attacks.
Federated kernelized/GP bandits (also named federated Bayesian optimization) have been explored by \citet{dai2020federated,dai2021differentially}, which focused on the practical problem of hyperparameter tuning in the federated setting.
The recent works of \citet{li2022communicationKernelized,li2022communicationGeneralized} have, respectively, focused on deriving communication-efficient algorithms for federated kernelized and generalized linear bandits.
In addition to federated bandits, other similar sequential decision-making problems have also been extended to the federated setting, such as federated reinforcement learning \citep{fan2021fault,zhuo2019federated} and federated hyperparameter tuning \citep{holly2021evaluation,khodak2021federated,zhou2021flora}.

\paragraph{Neural Bandits.}
Since the pioneering works of \citet{zhou2020neural} and \citet{zhang2020neural} which, respectively, introduced Neural UCB and Neural TS, a number of recent works have focused on different aspects of neural contextual bandits.
%For example, 
\citet{xu2020neural} reduced the computational cost of Neural UCB by using an NN as a feature extractor and applying Linear UCB only to the last layer of the learned NN, 
\citet{kassraie2021neural} analyzed 
%the growth rate of 
the maximum information gain of the NTK and hence derived no-regret 
%neural contextual bandit 
algorithms,
\citet{gu2021batched} focused on the batch setting in which the policy is only updated at a small number of time steps, 
\citet{nabati2021online} aimed to reduce the memory requirement of Neural UCB, 
\citet{lisicki2021empirical} performed an empirical investigation of neural bandit algorithms to verify their practical effectiveness,
\citet{ban2021ee} adopted a separate NN for exploration in neural contextual bandits,
\citet{ban2021convolutional} applied the convolutional NTK,
\citet{jia2021learning} used perturbed rewards to train the NN to remove the need for explicit exploration,
\citet{nguyen2021offline} incorporated offline policy learning into neural contextual bandits,
\citet{zhu2021pure} studied pure exploration in kernel and neural bandits,
\citet{kassraie2022graph} applied graph NNs in neural bandits to handle graph-structured data,
\citet{salgia2022provably} extended neural bandits beyond the ReLU activation to consider smoother activation functions,
and
\citet{dai2022sample} introduced a scalable batch Neural TS algorithm through sample-then-optimize optimization.

\section{More Background}
\label{app:more:background}
In this section, we give more details on some of the technical background mentioned in Sec.~\ref{sec:background}.
The details in this section all follow the works of \citet{zhang2020neural,zhou2020neural}, and we present them here for completeness.

\paragraph{Definition of the NN $f(x;\theta)$.}
Let $\mathbf{W}_1\in\mathbb{R}^{m\times d}$, $\mathbf{W}_l\in\mathbb{R}^{m\times m},\forall l=2,\ldots,L-1$, and $\mathbf{W}_L\in\mathbb{R}^{m\times1}$, then the NN $f(x;\theta)$ is defined as
\begin{equation*}
\begin{split}
&f_1=\mathbf{W}_1 x, \\
&f_l = \mathbf{W}_l \text{ReLU}(f_{l-1}), \forall l=2\ldots,L,\\
&f(x;\theta) = \sqrt{m} f_L,
\end{split}
\end{equation*}
in which $\text{ReLU}(z)=\max(z,0)$ denotes the rectified linear unit (ReLU) activation function and is applied to each element of $f_{l-1}$.
With this definition of the NN, $\theta$ denotes the collection of all parameters of the NN: $\theta = (\text{vec}(\mathbf{W}_1),\ldots,\text{vec}(\mathbf{W}_L)) \in \mathbb{R}^{p_0}$.

\paragraph{Details of the Initialization Scheme $\theta_0\sim\text{init}(\cdot)$.}
To obtain the initial parameters $\theta_0$, for each $l=1,\ldots,L-1$, let 
%$\mathbf{W}_l=(\mathbf{W},\mathbf{0};\mathbf{0},\mathbf{W})$ 
$\mathbf{W}_l=\left(
\begin{array}{cc} 
  \mathbf{W} & \mathbf{0} \\ 
  \mathbf{0} & \mathbf{W} 
\end{array} 
\right)$
where each entry of $\mathbf{W}$ is independently sampled from $\mathcal{N}(0, 4/m)$, and let $\mathbf{W}_L=(\mathbf{w}^{\top},-\mathbf{w}^{\top})$ where each entry of $\mathbf{w}$ is independently sampled from $\mathcal{N}(0,2/m)$.
This initialization scheme is the same as that used by the works of \citet{zhang2020neural,zhou2020neural}.

\paragraph{Definitions of the NTK Matrices $\mathbf{H}$ and $\mathbf{H}_i$'s.}
To simplify the exposition here, we use $\{x^j\}_{j=1,\ldots,TKN}$ to denote the set of all contexts from all iterations, all arms and \emph{all agents}: $\{x^k_{t,i}\}_{t\in[T],k\in[K],i\in[N]}$. We can then define
\begin{equation*}
\begin{split}
&\widetilde{\mathbf{H}}^{(1)}_{i,j}=\mathbf{\Sigma}^{(1)}_{i,j} =\langle x^i, x^j \rangle, \mathbf{A}^{(l)}_{i,j} = \left(
\begin{array}{cc} 
  \mathbf{\Sigma}^{(l)}_{i,i} & \mathbf{\Sigma}^{(l)}_{i,j} \\ 
  \mathbf{\Sigma}^{(l)}_{i,j} & \mathbf{\Sigma}^{(l)}_{j,j} 
\end{array} 
\right),\\
&\mathbf{\Sigma}^{(l+1)}_{i,j} = 2 \mathbb{E}_{(u,v)\sim \mathcal{N}(\mathbf{0},\mathbf{A}^{(l)}_{i,j})} \max(u,0) \max(v,0),\\
&\widetilde{\mathbf{H}}^{(l+1)}_{i,j} = 2\widetilde{\mathbf{H}}^{(l)}_{i,j}\mathbb{E}_{(u,v)\sim \mathcal{N}(\mathbf{0},\mathbf{A}^{(l)}_{i,j})} \mathbbm{1}(u>0) \mathbbm{1}(v>0) + \mathbf{\Sigma}^{(l+1)}_{i,j}.
\end{split}
\end{equation*}
With these definitions, the NTK matrix is defined as $\mathbf{H}=(\widetilde{\mathbf{H}}^{(L)} + \mathbf{\Sigma}^{(L)}) / 2$.
Similarly, $\mathbf{H}_i$ can be obtained in the same way by only using all contexts from agent $i$ in the definitions above, i.e., now we use $\{x^j\}_{j=1,\ldots,TK}$ to denote $\{x^k_{t,i}\}_{t\in[T],k\in[K]}$ and plug these $TK$ contexts into the definitions above to obtain $\mathbf{H}_i$.

%\begin{equation*}
%\left(
%\begin{array}{cc} 
%  A & B \\ 
%  O & C 
%\end{array} 
%\right)
%\end{equation*}

\section{Proof of Regret Upper Bound (Theorem~\ref{theorem:regret})}
\label{app:proof:regret}
We use $p$ to index different epochs and denote by $P$ the total number of epochs. 
We use $t_p$ to denote the first iteration of epoch $p$, and use $E_p$ to represent the length (i.e., number of iterations) of epoch $p$.
Throughout our theoretical analysis, we will denote different error probabilities as $\delta_1,\ldots,\delta_6$, which we will combine via a union bound at the end of the proof to ensure that our final regret upper bound holds with probability of at least $1-\delta$.

%We analyze two versions of our algorithm. The first version of our algorithm, named \emph{Federated Neural UCB} (FN-UCB), is the standard algorithm; the second version is a reduced variant of FN-UCB named FN-UCB-Lite, in which we use $\text{UCB}^{a}_{t,i}$ only in the first iteration after every synchronization round. 
%That is, for FN-UCB-Lite, we only let $\alpha_t>0$ for iterations $t_p,\forall p\in[P]$, i.e., only for the first iteration in every epoch.

\subsection{Conditions on the Width $m$ of the Neural Networks}
\label{app:conditions:on:m}
We list here the detailed conditions on the width $m$ of the NN that are needed by our theoretical analysis. These include two types of conditions, some of them (conditions 1-4) are required for our regret upper bound to hold (i.e., they are used during the proof to derive the regret upper bound), whereas the others (conditions 5-6) are 
used after the final regret upper bound is derived to ensure that the final regret upper bound is small (see App.~\ref{subsec:regret:total}).
%These conditions are similar to the conditions on $m$ required by the work of~\cite{zhang2020neural,zhou2020neural} (e.g., see condition 4.1 of \cite{zhang2020neural}).

When presenting our detailed proofs starting from the next subsection, we will refer to each of these conditions whenever they are used by the corresponding lemmas. Different lemmas may use different leading constants in their required condition (i.e., lower bound) on $m$, but here we use the same constant $C>0$ for all lower bounds for simplicity, which can be considered as simply taking the maximum among all these different leading constants for different lemmas.
\begin{enumerate}
\item $m\geq CT^6 K^6 N^6 L^6 \log(TKNL/\delta)$,
\item $m \geq C T^4 K^4 N^4 L^6 \log(T^2 K^2 N^2 L / \delta) / \lambda_0^4$,
\item $m \geq C \sqrt{\lambda}L^{-3/2} [\log(TKNL^2/\delta)]^{3/2}$,
\item $m (\log m)^{-3} \geq CTL^{12}\lambda^{-1} + C T^7 \lambda^{-8}L^{18}(\lambda+LT)^6 + CL^{21}T^7 \lambda^{-7}(1+\sqrt{T/\lambda})^6$.
%\end{enumerate}
%The following conditions are required to ensure that our resulting regret upper bound is small (see Appendix \ref{subsec:regret:total}):
%\begin{enumerate}
%\setcounter{enumi}{4}
\item $m(\log m)^{-3} \geq C T^{10}N^6 \lambda^{-4} L^{18}$,
\item $m(\log m)^{-3} \geq C T^{16} N^6 L^{24} \lambda^{-10}(1+\sqrt{T/\lambda})^6$.
\end{enumerate}
Some of these conditions above can be combined, but we leave them as separate conditions to make it easier to refer to the corresponding place in the proof where a particular condition is needed.

Furthermore, to achieve a small upper bound on the cumulative regret, we also need to place some conditions on the learning rate $\eta$ and number of iterations $J$ for the gradient descent training (line 14 of Algo.~\ref{algo:agent}). Specifically, we need to choose the learning rate as 
\begin{equation}
\eta=C_4(m\lambda + mTL)^{-1},
\label{eq:condition:on:eta}
\end{equation}
in which $C_4>0$ is an absolute constant such that $C_4 \leq 1+TL$, and choose 
\begin{equation}
J=\frac{1}{C_4}\left(1+\frac{TL}{\lambda}\right) \log\left( \frac{1}{3C_2 N} \sqrt{\frac{\lambda}{T^3 L}} \right)=\widetilde{O}\left(TL/(\lambda C_4) \right).
\label{eq:condition:on:J}
\end{equation}

\subsection{Definition of Good and Bad Epochs}
\label{subsubsec:definition:good:bad:epochs}
%We use $p$ to index different epochs and denote by $P$ the total number of epochs. 
Denote the matrix $V_{\text{last}}$ (see line 18 of Algo.~\ref{algo:agent}) after epoch $p$ as $V_p$.
As a result, the matrix $V_P$ is calculated using all selected inputs from all agents: $V_P=\sum^{T}_{t=1}\sum^N_{i=1}g(x_{t,i};\theta_0) g(x_{t,i};\theta_0)^{\top} /m + \lambda I$.
Define $V_{0}\triangleq \lambda I$.
Imagine that we have a hypothetical agent which chooses all $T \times N$ queries $\{x_{t,i}\}_{t\in[T],i\in[N]}$ sequentially in a round-robin fashion (i.e., the hypothetical agent chooses $x_{1,1},x_{1,2},\ldots,x_{2,1},x_{2,2},\ldots,x_{T,N}$), and denote the corresponding hypothetical covariance matrix as $\widetilde{V}_{t,i}=\sum^{t-1}_{\tau=1}\sum^{N}_{j=1}g(x_{\tau,j};\theta_0) g(x_{\tau,j};\theta_0)^{\top} / m + \sum^{i}_{j=1} g(x_{t,j};\theta_0) g(x_{t,j};\theta_0)^{\top} / m + \lambda I$.
We represent the indices of this hypothetical agent by $t'\in[TN]$ to distinguish it from our original multi-agent setting. 
%Define $\widetilde{V}_{0}\triangleq \lambda I$.
Define $J_{TN}\triangleq [g(x_{t'};\theta_0)]_{t'\in[TN]}$ which is a $p_0\times (TN)$ matrix,
%This immediately implies that $V_P = J_{TN} J_{TN}^{\top} / m+\lambda I$. 
and define $\mathbf{K}_{TN}\triangleq J_{TN}^{\top} J_{TN} / m$, which is a $(TN) \times (TN)$ matrix.
According to thes definitions, we have that
%Lemma B.7 of~\cite{zhang2020neural} tells us that 
\begin{lemma}[Lemma B.7 of~\cite{zhang2020neural}]
\label{lemma:bound:error:NTK:matrix}
Let $\delta_1\in(0,1)$.
If $m\geq CT^6 N^6 K^6 L^6 \log(TNKL/\delta_1)$, we have with probability of at least $1-\delta_1$ that
\[
\log\det(I + \lambda^{-1}\mathbf{K}_{t'}) \leq \log\det(I + \lambda^{-1}\mathbf{H}) + 1,\forall t'\in[TN].
\]
\end{lemma}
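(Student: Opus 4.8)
The plan is to compare the log-determinant of the \emph{empirical} tangent-feature Gram matrix $\mathbf K_{t'}$ — the top-left $t'\times t'$ block of $\mathbf K_{TN}=J_{TN}^\top J_{TN}/m$, i.e.\ the Gram matrix of the first $t'$ hypothetical inputs $x_1,\dots,x_{t'}$ — with that of the full NTK matrix $\mathbf H$, by passing through the \emph{exact} NTK Gram matrix $\mathbf H_{t'}$ of those same $t'$ inputs. This uses two facts: (i) for $m$ polynomially large, $\mathbf K_{t'}$ is spectrally close to $\mathbf H_{t'}$; and (ii) since the $t'$ hypothetical inputs form a subset of the $TKN$ contexts defining $\mathbf H$, the matrix $\mathbf H_{t'}$ is — after permuting indices — a principal submatrix of $\mathbf H$, and $\log\det(I+\lambda^{-1}\cdot)$ does not decrease when one extends a PSD Gram matrix to a larger point set. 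The ``$+1$'' in the statement will be the small slack incurred when replacing $\mathbf K_{t'}$ by $\mathbf H_{t'}$.

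First I would invoke the standard over-parameterization bound for ReLU networks at the prescribed initialization \cite{arora2019exact} (as used in \cite{zhang2020neural,zhou2020neural}): for any fixed pair of unit-norm contexts $x,x'$, the quantity $|\langle g(x;\theta_0),g(x';\theta_0)\rangle/m-[\mathbf H]_{x,x'}|$ is small with high probability once $m$ is polynomially large. Union-bounding over the $\le (TKN)^2$ pairs of contexts and using $\|M\|_{\mathrm{op}}\le\|M\|_F\le TN\max_{ij}|M_{ij}|$, the hypothesis $m\ge CT^6K^6N^6L^6\log(TKNL/\delta_1)$ ensures that, with probability at least $1-\delta_1$ and simultaneously for every $t'\in[TN]$,
\[
\|\mathbf K_{t'}-\mathbf H_{t'}\|_{\mathrm{op}}\;\le\;\tfrac{\lambda}{TN},\qquad\text{so that}\qquad \mathbf K_{t'}\preceq \mathbf H_{t'}+\tfrac{\lambda}{TN}\,I .
\]
(Checking that the stated polynomial in $m$ is enough to push the aggregated operator-norm error below $\lambda/(TN)$, using $\lambda=1+2/T=\Theta(1)$, is routine bookkeeping I would import from the cited NTK-concentration lemmas.) Then, by monotonicity of $\log\det$ and the inequality $(1+a)I+M\preceq(1+a)(I+M)$ for $a\ge 0$ and $M\succeq 0$,
\[
\log\det(I+\lambda^{-1}\mathbf K_{t'})\;\le\;\log\det\!\big((1+\tfrac1{TN})I+\lambda^{-1}\mathbf H_{t'}\big)\;\le\; t'\log\!\big(1+\tfrac1{TN}\big)+\log\det(I+\lambda^{-1}\mathbf H_{t'}),
\]
and since $t'\le TN$ the first term is at most $TN\cdot\tfrac1{TN}=1$.

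It remains to show $\log\det(I+\lambda^{-1}\mathbf H_{t'})\le\log\det(I+\lambda^{-1}\mathbf H)$. Since the NTK is positive definite, write $\mathbf H_{t'}=\Psi^\top\Psi$, where $\Psi$ collects the NTK feature vectors of the $t'$ inputs in the (finite-dimensional) span of the feature vectors of all $TKN$ contexts, and $\mathbf H=\widetilde\Psi^\top\widetilde\Psi$, where $\widetilde\Psi$ appends the feature columns of the remaining contexts; then $\widetilde\Psi\widetilde\Psi^\top\succeq\Psi\Psi^\top$, and by the Weinstein--Aronszajn identity $\log\det(I+\lambda^{-1}\mathbf H_{t'})=\log\det(I+\lambda^{-1}\Psi\Psi^\top)\le\log\det(I+\lambda^{-1}\widetilde\Psi\widetilde\Psi^\top)=\log\det(I+\lambda^{-1}\mathbf H)$. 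Chaining this with the previous display gives $\log\det(I+\lambda^{-1}\mathbf K_{t'})\le 1+\log\det(I+\lambda^{-1}\mathbf H)$ for all $t'\in[TN]$ on the same probability-$(1-\delta_1)$ event, as claimed. The only non-elementary step — and hence the main obstacle — is the first one: making the tangent-feature Gram concentration quantitative enough to reach the threshold $\lambda/(TN)$ under the given lower bound on $m$; this is precisely the content this lemma imports from \cite{zhang2020neural} (their Lemma B.7), whereas the two determinant manipulations above are elementary.
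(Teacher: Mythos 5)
Your overall architecture is sound, and you have correctly identified the two ingredients: (i) concentration of the empirical tangent-feature Gram matrix to the exact NTK Gram matrix on the selected contexts, and (ii) monotonicity of $\log\det(I+\lambda^{-1}\cdot)$ under extending a PSD Gram matrix from the $t'$ selected contexts to all $TKN$ contexts (your Weinstein--Aronszajn / feature-appending argument for step (ii) is correct and is genuinely needed here, since $\mathbf K_{t'}$ lives on a data-dependent subset of the contexts defining $\mathbf H$). Note that the paper itself offers no proof of this lemma --- it imports it verbatim (adapted) from Lemma B.7 of \cite{zhang2020neural} --- so the relevant comparison is with the proof in that reference.

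The genuine gap is in how you convert entrywise concentration into the ``$+1$'' slack. You require $\|\mathbf K_{t'}-\mathbf H_{t'}\|_{\mathrm{op}}\le\lambda/(TN)$ so that the perturbation $\tfrac{1}{TN}I$ contributes at most $t'\log(1+\tfrac{1}{TN})\le 1$. From an entrywise error $\epsilon$ the best general bound on the operator norm of a $(TN)\times(TN)$ symmetric error matrix is $TN\epsilon$ (tight for the constant-$\epsilon$ matrix), so your route needs $\epsilon\le\lambda(TN)^{-2}$, i.e.\ $m\gtrsim L^6(TN)^8$ after applying the Arora-et-al.\ concentration with rate $\epsilon^{-4}$. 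The stated hypothesis only gives $m\ge C(TKN)^6L^6\log(\cdot)$, i.e.\ $\epsilon\approx(TKN)^{-3/2}$, which satisfies $\epsilon\le\lambda(TN)^{-2}$ only when $K^3\gtrsim TN$; for fixed $K$ and large $T$ your bookkeeping does not close. The standard proof avoids this by using concavity of $\log\det$: $\log\det(I+\lambda^{-1}\mathbf K_{t'})\le\log\det(I+\lambda^{-1}\mathbf H_{t'})+\operatorname{tr}\big((I+\lambda^{-1}\mathbf H_{t'})^{-1}\lambda^{-1}(\mathbf K_{t'}-\mathbf H_{t'})\big)\le\log\det(I+\lambda^{-1}\mathbf H_{t'})+\sqrt{TN}\,\|\mathbf K_{t'}-\mathbf H_{t'}\|_F/\lambda$, which only needs $\|\mathbf K_{t'}-\mathbf H_{t'}\|_F\le\lambda/\sqrt{TN}$, i.e.\ entrywise accuracy $\lambda(TN)^{-3/2}$ --- exactly what the stated width delivers. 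Replacing your operator-norm perturbation step with this trace bound (and keeping your submatrix-monotonicity step) repairs the argument under the given condition on $m$.
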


The condition on $m$ given in Lemma \ref{lemma:bound:error:NTK:matrix} corresponds to condition 1 listed in App.~\ref{app:conditions:on:m}, except that $\delta_1$ is used here instead of $\delta$ in App.~\ref{app:conditions:on:m}.
%The next inequality will be important to justify our definition of good and bad epochs.
Lemma \ref{lemma:bound:error:NTK:matrix} allows us to derive the following equation, which we will use (at the end of this section) to justify that the total number of "bad" epochs is not too large.
\begin{equation}
\begin{split}
\sum^{P-1}_{p=0} \log \frac{\text{det} V_{p+1}}{\text{det} V_p }  = \log \frac{\text{det} V_P}{\text{det} V_0} &\stackrel{(a)}{=} \log \frac{\text{det} \left(J_{TN} J_{TN}^{\top} /m + \lambda I \right)}{\text{det} V_0}\\
&= \log \frac{\text{det} \left(\lambda \left(\lambda^{-1} J_{TN} J_{TN}^{\top} /m + I \right) \right)}{\text{det} V_0}\\
&\stackrel{(b)}{=} \log \frac{\lambda^{p_0} \text{det}  \left(\lambda^{-1} J_{TN} J_{TN}^{\top} /m + I \right)}{\lambda^{p_0}}\\
&= \log\det \left(\lambda^{-1} J_{TN} J_{TN}^{\top} /m + I \right)\\
&\stackrel{(c)}{=} \log\det \left(\lambda^{-1} J_{TN}^{\top} J_{TN} / m + I \right)\\
&= \log\det \left(\lambda^{-1} \mathbf{K}_{TN} + I \right)\\
&\stackrel{(d)}{\leq} \log\det \left(\lambda^{-1} \mathbf{H} + I \right) + 1\\
&\stackrel{(e)}{=} \widetilde{d}\log(1+TKN/\lambda) + 1 \triangleq R'.
%&\leq 2\gamma_{TNK} + 1 \triangleq R'
\end{split}
\label{eq:sum:of:log:det}
\end{equation}
Step $(a)$ is because $V_P = J_{TN} J_{TN}^{\top} / m+\lambda I$ according to our definition of $J_{TN}$ above.
Step $(b)$ follows from our definition of $V_0=\lambda I$ above, as well as some standard properties of matrix determinant.
Step $(c)$ follows because: $\text{det}(\mathbf{A} \mathbf{A}^{\top} + I)=\text{det}(\mathbf{A}^{\top} \mathbf{A} + I)$.
Step $(d)$ has made use of Lemma~\ref{lemma:bound:error:NTK:matrix} above, which suggests that \eqref{eq:sum:of:log:det} holds with probability of at least $1-\delta_1$.
Step $(e)$ follows from the definition of $\widetilde{d}\triangleq \frac{\log\det (I + \mathbf{H}/\lambda)}{\log(1+TKN/\lambda)}$ (Sec.~\ref{sec:background}).
In the last step, we have defined $R' \triangleq \widetilde{d}\log(1+TKN/\lambda) + 1$.
%Furthermore, Lemma 3 of~\cite{chowdhury2017kernelized} tells us that 
%In addition, also note that because $\widetilde{d}\log(1+TKN/\lambda) \leq 2\gamma_{TKN}$ (Sec.~\ref{sec:background}), we can alternatively define  $2\gamma_{TNK} + 1 \triangleq R'$.
We further define $\overline{R} \triangleq \lceil R' \rceil$, in which $\lceil \cdot \rceil$ denotes the ceiling operator.

%This allows us to show that
%\begin{equation}
%\begin{split}
%\sum^{P-1}_{p=0} \log \frac{\text{det} V_{p+1}}{\text{det} V_p }  = \log \frac{\text{det} V_P}{\text{det} V_0} &\leq \sum^{TN}_{t'=1} \norm{g(x_{t'};\theta_0) / \sqrt{m}}_{\widetilde{V}_{t'-1}^{-1}}^2\\
%&\leq \sum^{TN}_{t'=1} \frac{1}{\lambda} \widetilde{\sigma}^2_{t'-1,\text{approx}}(x_{t'})\\
%&\leq \sum^{TN}_{t'=1}  \widetilde{\sigma}^2_{t'-1,\text{exact}}(x_{t'}) + TN \varepsilon_{\text{NTK}}(m,TN)\\
%&\leq C_1 \gamma_{TN} + TN \varepsilon_{\text{NTK}}(m,TN) \triangleq R',
%\end{split}
%\label{eq:sum:of:log:det}
%\end{equation}
%in which we have defined $C_1 \triangleq \frac{2}{\log(1+\lambda^{-1})}$, and $\gamma_{TN}$ denotes the maximum information gain of the NTK kernel given any set of any $TN$ observations.
%The first inequality follows by making use of Lemma 11 of~\cite{abbasi2011improved} and by considering all inputs involved in the calculation of $V_P$ as being sequentially selected by our hypothetical agent introduced above.

Now we define all epochs $p$'s which satisfy the following condition as "good epochs":
%separate all epochs into good epochs and bad epochs according to the following criteria:
\begin{equation}
1 \leq \frac{\text{det}V_p}{\text{det} V_{p-1}} \leq e,
\label{eq:good:epoch:condition}
\end{equation}
%\begin{equation}
%1 \leq \frac{\text{det}V_p}{\text{det} V_{p-1}} \leq 2,
%\label{eq:good:epoch:condition}
%\end{equation}
and define all other epochs as "bad epochs".
The first inequality trivially holds for all epochs according to the way in which the matrices are constructed. It is easy to verify that the second inequality holds for at least $R$ epochs (with probability of at least $1-\delta_1$). This is because if the second inequality is violated for more than $\overline{R}$ epochs (i.e., if $\log\frac{\text{det}V_p}{\text{det} V_{p-1}} > 1$ for more than $\overline{R}$ epochs), then $\sum^{P-1}_{p=0} \log \frac{\text{det} V_{p+1}}{\text{det} V_p } > \overline{R}$, which violates \eqref{eq:sum:of:log:det}. 
This suggests that \emph{there are no more than $\overline{R}$ bad epochs} (with probability of at least $1-\delta_1$).
From here onwards, we will denote the set of good epochs by $\mathcal{E}^{\text{good}}$ and the set of bad epochs by $\mathcal{E}^{\text{bad}}$.
%From here onwards, we will call those epochs $p$ where equation~\eqref{eq:good:epoch:condition} is satisfied as "good" epochs and denote it by $\mathcal{E}^{\text{good}}$, and the other epochs as "bad" epochs and denote it by $\mathcal{E}^{\text{bad}}$.
%Note that we have implicitly assumed that the base of the logarithmic function is $2$.

\subsection{Validity of the Upper Confidence Bound}
\label{subsec:validity:of:ucbs}
In this section, we prove that the upper confidence bound used in our algorithm, $(1- \alpha_t) \text{UCB}^{a}_{t,i}(x) + \alpha_t \text{UCB}^{b}_{t,i}(x)$ (used in line 7 of Algo.~\ref{algo:agent}), is a valid high-probability upper bound on the reward function $h$.
We will achieve this by separately proving that $\text{UCB}^{a}_{t,i}$ and $\text{UCB}^{b}_{t,i}$ are valid high-probability upper bounds on $h$ in the next two sections.
Note that for both UCBs, unlike Neural UCB \citep{zhou2020neural} and Neural TS \citep{zhang2020neural} which use $\theta_t$ (the parameters of trained NNs) to calculate the exploration term (the second terms of $\text{UCB}^{a}_{t,i}$ and $\text{UCB}^{b}_{t,i}$), we instead use $\theta_0$. This is consistent with \citet{kassraie2021neural} who have shown that the use of $\theta_0$ gives accurate uncertainty estimation.

\subsubsection{Validity of $\text{UCB}^{a}_{t,i}$ as A High-Probability Upper Bound on $h$:}
\label{subsec:validity:of:ucb:2}
To begin with, we will need the following lemma from \cite{zhang2020neural}.
\begin{lemma}[Lemma B.3 of~\cite{zhang2020neural}]
\label{lemma:approx:h:by:linear:func}
Let $\delta_2\in(0,1)$.
There exists a constant $C>0$ such that 
%for any $\delta\in(0,1)$, 
if
\begin{equation*}
m \geq C T^4 K^4 N^4 L^6 \log(T^2 K^2 N^2 L / \delta_2) / \lambda_0^4,
\end{equation*}
then with probability of $\geq 1-\delta_2$ over random initializations of $\theta_0$, there exists a $\theta^*\in\mathbb{R}^{p_0}$ such that
\begin{equation}
h(x) = \langle g(x;\theta_0), \theta^* - \theta_0 \rangle, \quad \sqrt{m} \norm{\theta^* - \theta_0}_{2} \leq \sqrt{2 \mathbf{h}^{\top} \mathbf{H}^{-1} \mathbf{h}} \leq B, \forall x\in\mathcal{X}_{t,i}, t\in[T],i\in[N].
\end{equation}
\end{lemma}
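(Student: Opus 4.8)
The plan is to exhibit $\theta^{*}$ explicitly as a linear combination of the gradient features at initialization, which reduces the two claims to (i) solving a linear system governed by the \emph{empirical} neural tangent Gram matrix and (ii) an entrywise concentration bound showing that this empirical Gram matrix is close to the population NTK matrix $\mathbf{H}$.

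First I would collect all $TKN$ contexts into $\{x^{j}\}_{j\in[TKN]}$, write $J\triangleq[g(x^{j};\theta_0)]_{j}\in\mathbb{R}^{p_0\times TKN}$, and set $\mathbf{G}\triangleq J^{\top}J/m$, i.e.\ the Gram matrix of the neural tangent features $g(x^{j};\theta_0)/\sqrt{m}$. The candidate will be $\theta^{*}\triangleq\theta_0+\tfrac{1}{m}J\mathbf{G}^{-1}\mathbf{h}$, which is well defined once $\mathbf{G}$ is invertible (verified below). Since $\tfrac{1}{m}g(x^{j'};\theta_0)^{\top}J$ is exactly the $j'$-th row of $\mathbf{G}$, a direct computation gives $\langle g(x^{j'};\theta_0),\theta^{*}-\theta_0\rangle=\tfrac{1}{m}g(x^{j'};\theta_0)^{\top}J\mathbf{G}^{-1}\mathbf{h}=(\mathbf{G}\mathbf{G}^{-1}\mathbf{h})_{j'}=h(x^{j'})$, which is the exact interpolation claim. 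For the norm, $m\norm{\theta^{*}-\theta_0}_{2}^{2}=\tfrac{1}{m}\mathbf{h}^{\top}\mathbf{G}^{-1}J^{\top}J\mathbf{G}^{-1}\mathbf{h}=\mathbf{h}^{\top}\mathbf{G}^{-1}\mathbf{h}$, so $\sqrt{m}\norm{\theta^{*}-\theta_0}_{2}=\sqrt{\mathbf{h}^{\top}\mathbf{G}^{-1}\mathbf{h}}$.

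Second I would compare $\mathbf{G}$ with $\mathbf{H}$. By the standard NTK-at-initialization concentration bound (from the NTK literature, as invoked by the works this lemma is quoted from), choosing $m\geq CT^{4}K^{4}N^{4}L^{6}\log(T^{2}K^{2}N^{2}L/\delta_2)/\lambda_0^{4}$ ensures, with probability $\geq1-\delta_2$ over the random draw of $\theta_0$, that $\max_{j,j'}|\mathbf{G}_{jj'}-\mathbf{H}_{jj'}|$ is small enough that $\norm{\mathbf{G}-\mathbf{H}}_{\mathrm{op}}\leq\norm{\mathbf{G}-\mathbf{H}}_{F}\leq\lambda_0/2$ (the union bound over the $(TKN)^{2}$ entries contributes only the logarithmic factor). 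Combining this with Assumption~\ref{assumption:main} ($\mathbf{H}\succeq\lambda_0 I$) yields $\mathbf{G}\succeq\mathbf{H}-(\lambda_0/2)I\succeq\mathbf{H}-\tfrac{1}{2}\mathbf{H}=\tfrac{1}{2}\mathbf{H}\succ0$, so $\mathbf{G}$ is indeed invertible and, taking inverses of the positive definite ordering, $\mathbf{G}^{-1}\preceq2\mathbf{H}^{-1}$. Therefore $\sqrt{m}\norm{\theta^{*}-\theta_0}_{2}=\sqrt{\mathbf{h}^{\top}\mathbf{G}^{-1}\mathbf{h}}\leq\sqrt{2\,\mathbf{h}^{\top}\mathbf{H}^{-1}\mathbf{h}}\leq B$, where the last inequality is the definition of $B$.

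The only nontrivial ingredient is the entrywise closeness of $\mathbf{G}$ to $\mathbf{H}$; everything else is elementary linear algebra. That concentration estimate is not reproved here --- it is imported from existing NTK analyses --- and the role of the polynomial width requirement and of $\lambda_0$ in the hypothesis is precisely to force $\norm{\mathbf{G}-\mathbf{H}}_{\mathrm{op}}$ below half the least eigenvalue of $\mathbf{H}$, which is exactly what the positive-semidefinite ordering argument needs to both invert $\mathbf{G}$ and control $\mathbf{G}^{-1}$ by $2\mathbf{H}^{-1}$.
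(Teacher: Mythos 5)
Your construction $\theta^{*}=\theta_0+\tfrac{1}{m}J\mathbf{G}^{-1}\mathbf{h}$, together with the Gram-matrix concentration $\norm{\mathbf{G}-\mathbf{H}}_{\mathrm{op}}\leq\lambda_0/2$ and the resulting $\mathbf{G}^{-1}\preceq2\mathbf{H}^{-1}$, is correct and is essentially the same argument as the proof of the cited Lemma B.3 of Zhang et al.\ (itself following Lemma 5.1 of Zhou et al.), which this paper imports without reproving; the only adaptation is replacing $TK$ contexts by $TKN$, which is exactly what your width condition and union bound account for.
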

%\paragraph{Validity of $\text{UCB}^{b}_{t,i}$:}
%Recall that we have defined in the main text that $\nu_{t} = B + R\sqrt{2(\log(1/\delta_{\text{UCB}_2}) + 1 + \gamma_{t})}$.
%Following similar steps of proof as Appendix B.2 in the work of~\cite{zhang2020neural}, we can obtain 
The condition on $m$ required by Lemma \ref{lemma:approx:h:by:linear:func} corresponds to condition 2 listed in App.~\ref{app:conditions:on:m}, except that $\delta_2$ is used here instead of $\delta$ as in App.~\ref{app:conditions:on:m}.
The following lemma formally guarantees the validity of $\text{UCB}^{a}_{t,i}$ as a high-probability upper-bound on $h$.
\begin{lemma}
%[\cite{zhang2020neural}]
\label{lemma:confidence:bound:ucb:2}
Let $\delta_3 \in(0,1)$ and $\nu_{TKN} = B + R\sqrt{2(\log(1/\delta_3) + 1) + \widetilde{d}\log(1+TKN/\lambda)}$.
We have with probability of at least $1-\delta_1 - \delta_2 - \delta_3$ for all $t\in[T],i\in[N]$, that
\begin{equation*}
\begin{split}
|h(x) - \langle g(x;\theta_0) /\sqrt{m}, \overline{\theta}_{t,i} \rangle| &\leq  \nu_{TKN} \sqrt{\lambda} \norm{g(x;\theta_0) / \sqrt{m}}_{\overline{V}_{t,i}^{-1}}, \forall x\in \mathcal{X}_{t,i}
\end{split}
\end{equation*}
\end{lemma}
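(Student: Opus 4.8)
The plan is to recognize $\text{UCB}^{a}_{t,i}$ as the LinUCB / kernel-UCB confidence bound (\cite{abbasi2011improved,chowdhury2017kernelized}) instantiated with the random feature map $\phi(x) \triangleq g(x;\theta_0)/\sqrt{m}$, and to run the standard self-normalized martingale argument adapted to the federated data-sharing structure of \cite{wang2019distributed}. First I would condition on the event of Lemma~\ref{lemma:approx:h:by:linear:func} (probability $\ge 1-\delta_2$, using condition~2 on $m$): there is a $\theta^*$ with $h(x) = \langle g(x;\theta_0), \theta^* - \theta_0\rangle = \langle \phi(x), w^* \rangle$ where $w^* \triangleq \sqrt{m}(\theta^*-\theta_0)$ and $\|w^*\|_2 \le B$. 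Then the reward is exactly linear in $\phi$ with parameter $w^*$, and $\overline{\theta}_{t,i} = \overline{V}_{t,i}^{-1}(B_{\text{sync}}+B_{\text{new},i})$ is precisely the $\lambda$-regularized least-squares estimate built from the observation set $\mathcal{S}_{t,i}$ consisting of all agents' observations from the epochs preceding the current one, together with agent $i$'s observations so far within the current epoch.

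Next comes the routine decomposition. Writing $y_{\tau,j} = \langle\phi(x_{\tau,j}),w^*\rangle + \epsilon_{\tau,j}$ for $(\tau,j)\in\mathcal{S}_{t,i}$ gives $\overline{\theta}_{t,i} - w^* = \overline{V}_{t,i}^{-1}\sum_{(\tau,j)\in\mathcal{S}_{t,i}}\epsilon_{\tau,j}\phi(x_{\tau,j}) - \lambda\overline{V}_{t,i}^{-1}w^*$, so by Cauchy--Schwarz in the $\overline{V}_{t,i}^{-1}$ inner product,
\[
|h(x) - \langle\phi(x),\overline{\theta}_{t,i}\rangle| \le \|\phi(x)\|_{\overline{V}_{t,i}^{-1}}\Big( \big\|\sum_{(\tau,j)\in\mathcal{S}_{t,i}}\epsilon_{\tau,j}\phi(x_{\tau,j})\big\|_{\overline{V}_{t,i}^{-1}} + \lambda\|w^*\|_{\overline{V}_{t,i}^{-1}}\Big),
\]
and $\lambda\|w^*\|_{\overline{V}_{t,i}^{-1}} \le \sqrt{\lambda}\,\|w^*\|_2 \le \sqrt{\lambda}\,B$ since $\overline{V}_{t,i}\succeq\lambda I$. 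For the noise term I would invoke the self-normalized concentration bound of \cite{abbasi2011improved} (or its kernelized form), which yields $\big\|\sum_{(\tau,j)}\epsilon_{\tau,j}\phi(x_{\tau,j})\big\|_{\overline{V}_{t,i}^{-1}}^2 \le 2R^2\log\big(\det(\overline{V}_{t,i})^{1/2}\lambda^{-p_0/2}/\delta_3\big)$, and then bound $\log\det(\overline{V}_{t,i}/\lambda) = \log\det(I+\lambda^{-1}\sum_{(\tau,j)\in\mathcal{S}_{t,i}}\phi(x_{\tau,j})\phi(x_{\tau,j})^{\top})$ by monotonicity (the summand set is a subset of all $TN$ selected contexts, hence dominated by $\mathbf{K}_{TN}$) followed by Lemma~\ref{lemma:bound:error:NTK:matrix} (probability $\ge 1-\delta_1$, condition~1 on $m$), giving $\log\det(\overline{V}_{t,i}/\lambda) \le \log\det(I+\mathbf{H}/\lambda)+1 = \widetilde{d}\log(1+TKN/\lambda)+1$. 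Combining the bias and noise contributions, using $\lambda = 1+2/T > 1$, and a union bound over the three events produces the claimed inequality with $\nu_{TKN} = B + R\sqrt{2(\log(1/\delta_3)+1)+\widetilde{d}\log(1+TKN/\lambda)}$, valid with probability $\ge 1-\delta_1-\delta_2-\delta_3$.

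The delicate step, and the only place the federated setting really bites, is justifying the martingale structure behind the self-normalized inequality: because the $N$ agents act in parallel within an epoch, $\mathcal{S}_{t,i}$ is not a prefix of the naive global (iteration-then-agent) ordering of the observations. One has to exhibit a valid predictable ordering for which $\mathcal{S}_{t,i}$ is an initial segment --- e.g., for a fixed agent $i$, list all pre-epoch observations of all agents first (in any epoch-consistent order), then agent $i$'s current-epoch observations in iteration order, then the remaining observations --- and verify that in this ordering each chosen context $x_{\tau,j}$ is measurable with respect to the $\sigma$-field generated by the preceding observations (it is a deterministic function of earlier-epoch data of all agents together with agent $j$'s own earlier-in-epoch data, including the NN trained from them) while the corresponding noise $\epsilon_{\tau,j}$ stays conditionally $R$-sub-Gaussian. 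This is the construction used by \cite{wang2019distributed}; at worst it costs an additional union bound over epochs, whose number is itself controlled (Appendix~\ref{subsubsec:definition:good:bad:epochs}) and which only contributes logarithmic factors absorbed into the $\widetilde{\mathcal{O}}$ notation later.
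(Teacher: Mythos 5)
Your proposal is correct and takes essentially the same approach as the paper: the paper's proof simply invokes the argument of Lemma 4.3 of \cite{zhang2020neural}, which is exactly your LinUCB-style decomposition --- linear realizability of $h$ in the NTK features via Lemma~\ref{lemma:approx:h:by:linear:func} (cost $\delta_2$), a self-normalized concentration bound in the spirit of Theorem 1 of \cite{chowdhury2017kernelized} (cost $\delta_3$, producing the $\nu_{TKN}$ expression), and the log-determinant control of Lemma~\ref{lemma:bound:error:NTK:matrix} (cost $\delta_1$), combined by a union bound. Your explicit treatment of the predictable ordering needed to apply the self-normalized martingale inequality to the federated observation set $\mathcal{S}_{t,i}$ is a genuine subtlety that the paper leaves implicit (it only asserts that the relevant lemma ``can be directly applied in our setting''), but it does not alter the route.
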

%Lemma~\ref{lemma:confidence:bound:ucb:2} immediately implies the validity of the high-probability upper bound of $\text{UCB}^{b}_{t,i}$ (Algorithm~\ref{algo:agent}). Next, we prove that $\text{UCB}^{a}_{t,i}$ is also a valid upper bound (with high probability).

\begin{proof}
Lemma~\ref{lemma:confidence:bound:ucb:2} can be proved by following similar steps as the proof of Lemma 4.3 in the work of~\cite{zhang2020neural}.
%, as well as using the inequality $\log\det(I + \mathbf{H}/\lambda) \leq 2\gamma_{TKN}$.
Specifically, the proof of Lemma~\ref{lemma:confidence:bound:ucb:2} requires Lemmas B.3, B.6 and B.7 of~\cite{zhang2020neural} (after being adapted for our setting).
The adapted versions of Lemmas B.3 and B.7 of~\cite{zhang2020neural} have been presented in our Lemma~\ref{lemma:approx:h:by:linear:func} and Lemma~\ref{lemma:bound:error:NTK:matrix}, respectively.
%Specifically, see the lemma below for Lemma B.3 of~\cite{zhang2020neural} (adapted for our setting).
%On the other hand, Lemma B.7 of~\cite{zhang2020neural} (adapted for our setting) has been presented earlier in our Lemma~\ref{lemma:bound:error:NTK:matrix}.
Of note, Lemma~\ref{lemma:bound:error:NTK:matrix} and Lemma~\ref{lemma:approx:h:by:linear:func} require some conditions on the width $m$ of the NN,
%Of note, the validity of Lemmas B.3 and B.7 requires some conditions on the width $m$ of the NN. 
%The required conditions on $m$ from Lemma~\ref{lemma:bound:error:NTK:matrix} and Lemma~\ref{lemma:approx:h:by:linear:func} have been 
which have been listed as conditions 1 and 2 in Appendix~\ref{app:conditions:on:m}.
Lastly, Lemma B.6 of~\cite{zhang2020neural}, which makes use of Theorem 1 of~\cite{chowdhury2017kernelized}, can be directly applied in our setting and introduces an error probability of $\delta_3$ (which appears in the expression of $\nu_{TKN}$).
As a result, Lemma~\ref{lemma:confidence:bound:ucb:2} holds with probability of at least $1-\delta_1 - \delta_2 - \delta_3$, in which the error probabilities come from Lemma~\ref{lemma:bound:error:NTK:matrix} ($\delta_1$), Lemma~\ref{lemma:approx:h:by:linear:func} ($\delta_2$) and the application of Lemma B.6 of~\cite{zhang2020neural} ($\delta_3$).

%Lemma~\ref{lemma:confidence:bound:ucb:2} holds with high probability, in which the error probabilities come from the use of Lemmas B.3, B.6 and B.7 of~\cite{zhang2020neural}. 
%In our setting, these error probabilities correspond to $\delta_{\text{linear,func}}$, $\delta_{\text{UCB}_2}$ and $\delta_{\text{ntk,error}}$, respectively.
%Therefore, Lemma \ref{lemma:confidence:bound:ucb:2} holds with probability of at least $1-\delta_{\text{linear,func}} - \delta_{\text{UCB}_2} - \delta_{\text{ntk,error}}$.

%\noindent\fbox{
%    \parbox{\textwidth}{
%Setting all three of them to $\delta_{\text{UCB}_2}/3$ ensures that Lemma~\ref{lemma:confidence:bound:ucb:2} holds with probability of at least $1-\delta_{\text{UCB}_2}$; as a result, doing this will add an additional factor of $3$ within the $\log$ in the first two conditions on $m$ (Appendix~\ref{app:conditions:on:m}) which can be absorbed by the constant $C$, and will also introduce a factor of $3$ within the $\log$ in the expression of $\nu_{TKN}$ (Lemma~\ref{lemma:confidence:bound:ucb:2}) which we have ignored for simplicity.
%    }%
%}

\end{proof}

\subsubsection{Validity of $\text{UCB}^{b}_{t,i}$ as A High-Probability Upper Bound on $h$:}
\label{app:subsec:validity:of:ucb:b}
%\paragraph{Validity of $\text{UCB}^{a}_{t,i}$:}
Note that $\text{UCB}^{b}_{t,i}$ is updated only in every communication round. We denote the set of iterations after which $\text{UCB}^{b}_{t,i}$ is updated (i.e., the last iteration in every epoch) as $\mathcal{T}_{-1} \triangleq \{t_p-1\}_{p=2,\ldots,P-1}$, which immediately implies that $\mathcal{T}_{-1}\subset [T]$ and hence $|\mathcal{T}_{-1}| \leq T$.
\begin{lemma}
\label{lemma:confidence:bound:ucb:1}
Let $\delta_4,\delta_5\in(0,1)$, and $\nu_{TK} = B + R\sqrt{2(\log(N/\delta_4) + 1) + \widetilde{d}_{\max}\log(1+TK/\lambda)}$
Suppose the width $m$ of the NN satisfies $m \geq C \sqrt{\lambda}L^{-3/2} [\log(TKNL^2/\delta_5)]^{3/2}$ for some constant $C>0$, as well as condition 4 in Appendix~\ref{app:conditions:on:m}.
Suppose the learning rate $\eta$ and number of iterations $J$ of the gradient descent training satisfy the conditions in \eqref{eq:condition:on:eta} and \eqref{eq:condition:on:J} (App.~\ref{app:conditions:on:m}), respectively.
We have with probability of at least $1-\delta_4 - \delta_5$ for all $t\in\mathcal{T}_{-1},i\in[N]$, that
\[
|h(x) - f(x;\theta_{\text{sync,NN}})| \leq  \nu_{TK}  \sqrt{\lambda} \frac{1}{N}\sum^N_{i=1} \norm{g(x;\theta_0) / \sqrt{m}}_{(V^{\text{local}}_{i})^{-1}} + \varepsilon_{\text{linear}}(m,T), \forall x\in \mathcal{X}_{t,i}.
\]
\end{lemma}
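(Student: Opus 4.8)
The plan is to follow the three-step decomposition announced in Section~\ref{subsec:proof:sketch}, using the \emph{averaged linear predictor} $\bar h_t(x)\triangleq \frac1N\sum_{i=1}^N \langle g(x;\theta_0)/\sqrt m,\ \theta_{t,i}^{\text{local}}\rangle$ as a bridge between the aggregated network $f(x;\theta_{\text{sync,NN}})$ and the reward $h(x)$. Throughout I write $\mathbf{g}=g(x;\theta_0)/\sqrt m$ for brevity and recall that, under the symmetric initialization of Appendix~\ref{app:more:background}, $f(x;\theta_0)=0$ exactly.

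\textbf{Step 1 (aggregated NN $\approx \bar h_t$).} For each agent $i$, let $\widehat f^{\text{lin}}_{t,i}$ be the minimizer of the \emph{linearized} ridge objective obtained by replacing $f(x_{\tau,i};\theta)$ in $\mathcal L_{t,i}$ with its first-order Taylor expansion $\langle g(x_{\tau,i};\theta_0),\theta-\theta_0\rangle$ around $\theta_0$; a direct least-squares computation gives $\widehat f^{\text{lin}}_{t,i}(x)=\langle \mathbf{g},\theta_{t,i}^{\text{local}}\rangle$. I would then invoke the standard NTK over-parameterization machinery of \cite{zhang2020neural,zhou2020neural} (the analogues of Lemmas~B.1--B.5 of \cite{zhang2020neural}, adapted so that each agent runs gradient descent on at most $T$ local points): under the stated lower bound on $m$ (including condition~4 of Appendix~\ref{app:conditions:on:m}), the learning rate $\eta=C_4(m\lambda+mTL)^{-1}$, and $J$ as in~\eqref{eq:condition:on:J}, the iterate $\theta^i_t$ satisfies $|f(x;\theta^i_t)-\langle \mathbf{g},\theta_{t,i}^{\text{local}}\rangle|\le\varepsilon_1(m,T)$ for all $x$; and, since all $\theta^i_t$ remain in the NTK ball around the common $\theta_0$ where $f$ is close to its linearization, the average $\theta_{\text{sync,NN}}=\frac1N\sum_i\theta^i_t$ also stays in that ball, giving $|f(x;\theta_{\text{sync,NN}})-\frac1N\sum_i f(x;\theta^i_t)|\le\varepsilon_2(m,T)$. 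Triangle inequality and averaging over $i$ yield $|f(x;\theta_{\text{sync,NN}})-\bar h_t(x)|\le\varepsilon_{\text{linear}}(m,T)$, and this is the only place $\varepsilon_{\text{linear}}$ enters. This step introduces the error probability $\delta_5$ and needs $m\ge C\sqrt\lambda L^{-3/2}[\log(TKNL^2/\delta_5)]^{3/2}$.

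\textbf{Step 2 (per-agent linear predictor $\approx h$).} Fix $i$. Then $\theta_{t,i}^{\text{local}}$ is exactly the kernel-ridge / Linear-UCB estimator built from agent $i$'s $t\le T$ local neural-tangent features, so the argument behind Lemma~\ref{lemma:confidence:bound:ucb:2} applies verbatim with the single-agent NTK matrix $\mathbf{H}_i$, effective dimension $\widetilde d_i\le\widetilde d_{\max}$, and $TK$ replacing $TKN$, giving
\[
|h(x)-\langle \mathbf{g},\theta_{t,i}^{\text{local}}\rangle|\le \nu_{TK}\sqrt\lambda\,\norm{\mathbf{g}}_{(V^{\text{local}}_{t,i})^{-1}},\qquad \forall x\in\mathcal X_{t,i}.
\]
Applying this for every $i\in[N]$ requires a union bound over the $N$ agents, which is why $\nu_{TK}$ carries $\log(N/\delta_4)$ instead of $\log(1/\delta_4)$; the underlying self-normalized bound (Theorem~1 of \cite{chowdhury2017kernelized}) together with the $\mathbf{H}_i$-versions of Lemmas~\ref{lemma:approx:h:by:linear:func} and~\ref{lemma:bound:error:NTK:matrix} accounts for $\delta_4$ (and the relevant width conditions are again subsumed by conditions~2 and~4).

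\textbf{Step 3 (aggregate).} Averaging Step~2 over $i$ and routing through $\bar h_t$,
\[
|h(x)-f(x;\theta_{\text{sync,NN}})|\le \frac1N\sum_{i=1}^N \nu_{TK}\sqrt\lambda\,\norm{\mathbf{g}}_{(V^{\text{local}}_{t,i})^{-1}} + \varepsilon_{\text{linear}}(m,T).
\]
Since $A\mapsto \mathbf{g}^\top A\mathbf{g}$ is linear, $\frac1N\sum_i \norm{\mathbf{g}}_{(V^{\text{local}}_{t,i})^{-1}}^2 = \mathbf{g}^\top\big(\frac1N\sum_i (V^{\text{local}}_{t,i})^{-1}\big)\mathbf{g} = \norm{\mathbf{g}}_{V_{\text{sync,NN}}^{-1}}^2$; concavity of $\sqrt{\cdot}$ (Jensen) then gives $\frac1N\sum_i \norm{\mathbf{g}}_{(V^{\text{local}}_{t,i})^{-1}} \le \norm{\mathbf{g}}_{V_{\text{sync,NN}}^{-1}}$, collapsing the average into the single width in the claim. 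Restricting $t$ to $\mathcal T_{-1}$ (so the $\theta^i_t$, $\theta_{\text{sync,NN}}$ and $(V^{\text{local}}_{t,i})^{-1}$ are precisely those used in the following epoch, and $|\mathcal T_{-1}|\le T$ so the union bounds cost only constants after adjusting the $\delta$'s) and combining the events of probability $1-\delta_4$ and $1-\delta_5$ completes the proof.

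The main obstacle I expect is Step~1: quantitatively controlling (i) the gap between the gradient-descent-trained network $\theta^i_t$ and the closed-form linearized estimator $\theta_{t,i}^{\text{local}}$, and (ii) the fact that \emph{averaging parameters across agents} does not amplify this gap. These force the polynomial lower bound on $m$ (condition~4) jointly with the precise choices of $\eta$ and $J$ in~\eqref{eq:condition:on:eta}--\eqref{eq:condition:on:J}, and they are exactly where the residual $\varepsilon_{\text{linear}}(m,T)$ originates. Steps~2 and~3 are comparatively routine given Lemma~\ref{lemma:confidence:bound:ucb:2} and the Jensen argument.
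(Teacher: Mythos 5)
Your proposal is correct and follows essentially the same route as the paper's proof: the same three-step decomposition through the averaged linear predictor, the same local variant of Lemma~\ref{lemma:confidence:bound:ucb:2} with a union bound over the $N$ agents (accounting for the $\log(N/\delta_4)$ in $\nu_{TK}$), and the same Jensen/concavity argument collapsing $\frac1N\sum_i\norm{\mathbf{g}}_{(V^{\text{local}}_{t,i})^{-1}}$ into $\norm{\mathbf{g}}_{V_{\text{sync,NN}}^{-1}}$. The only cosmetic difference is that in Step~1 you route through $\frac1N\sum_i f(x;\theta^i_t)$, whereas the paper linearizes $f$ directly at the averaged parameters $\theta_{\text{sync,NN}}$ (which lie in the same NTK ball by the triangle inequality) — both yield the same $\varepsilon_{\text{linear}}(m,T)$ up to constants.
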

\begin{proof}
%To begin with, 
Note that the condition on $m$ listed in the lemma, $m \geq C \sqrt{\lambda}L^{-3/2} [\log(TKNL^2/\delta_5)]^{3/2}$, corresponds to condition 3 listed in App.~\ref{app:conditions:on:m} except that $\delta_5$ is used here instead of $\delta$. Therefore, the validity of Lemma \ref{lemma:confidence:bound:ucb:1} requires conditions 3 and 4 on $m$ (App.~\ref{app:conditions:on:m}) to be satisfied.
For ease of exposition, we separate our proof into three steps.

\textbf{Step 1: NN Output $f(x; \theta_{\text{sync,NN}})$ Is Close to (Averaged) Linear Prediction}

Based on Lemma C.2 of~\cite{zhang2020neural}, 
%if the conditions 3.~and 4.~on $m$ (Appendix~\ref{app:conditions:on:m}) are satisfied, 
if the conditions on $m$ listed in Lemma \ref{lemma:confidence:bound:ucb:1} is satisfied,
then for any $\widetilde{\theta}$ such that $\norm{\widetilde{\theta} - \theta_0}_2 \leq 2\sqrt{t / (m\lambda)}$, there exists a constant $C_1>0$ such that we have with probability of at least $1-\delta_5$ over random initializations $\theta_0$ that 
\begin{equation}
\begin{split}
|f(x; \widetilde{\theta}) - \langle g(x;\theta_0), \widetilde{\theta} - \theta_0 \rangle | &\leq C_1 t^{2/3} m^{-1/6} \lambda^{-2/3} L^3\sqrt{\log m} \\
&\leq C_1 T^{2/3} m^{-1/6} \lambda^{-2/3} L^3\sqrt{\log m}\\
&\triangleq \varepsilon_{\text{linear}, 1}(m,T),
\label{eq:linearization:nn}
\end{split}
\end{equation}
which holds $\forall x\in\mathcal{X}_{t,i}, t\in[T], i\in[N]$.
%The addition term of $N$ in condition 3 in Appendix~\ref{app:conditions:on:m} comes from our requirement that \eqref{eq:linearization:nn} holds for (the contexts of) all $N$ agents.

Also note that according to Lemma C.1 of ~\cite{zhang2020neural}, 
%if (\emph{a}) the conditions 3 and 4 on $m$ (Appendix~\ref{app:conditions:on:m}) are satisfied and (\emph{b}) the learning rate $\eta$ for gradient descent satisfies $\eta\leq C_3(m\lambda + tmL)^{-1}$ for some constant $C_3$ (note that the result here holds for any iteration $j\in[J]$ of gradient descent, so it does not need any requirement on $J$), 
if conditions 3 and 4 on $m$ listed in App.~\ref{app:conditions:on:m}, as well as the condition on $\eta$ \eqref{eq:condition:on:eta}, are satisfied,
then we have with probability of at least $1-\delta_5$ over random initializations $\theta_0$ that $\norm{\theta^{i}_t - \theta_0}_2 \leq 2\sqrt{t / m\lambda},\forall i\in[N]$. 
An immediate implication is that the aggregated NN parameters $\theta_{\text{sync,NN}}=\frac{1}{N}\sum^N_{i=1}  \theta^{i}_t$ also satisfies: 
%$\norm{\theta_{\text{sync,NN}} - \theta_0}_2 \leq 2\sqrt{t / m\lambda}$.
\[
\norm{\theta_{\text{sync,NN}} - \theta_0}_2 = \norm{\frac{1}{N}\sum^N_{i=1}  \theta^{i}_t - \theta_0}_2 \leq \frac{1}{N}\sum^N_{i=1} \norm{\theta^{i}_t - \theta_0}_2 \leq 2\sqrt{t / m\lambda}.
\]
This implies that \eqref{eq:linearization:nn} holds for $\theta_{\text{sync,NN}}$ with probability of at least $1-2\delta_5$:
%all $\theta^{(i)}_t$'s and hence also holds 
%for $\theta_{\text{sync,NN}}$:
\begin{equation}
|f(x; \theta_{\text{sync,NN}}) - \langle g(x;\theta_0), \theta_{\text{sync,NN}} - \theta_0 \rangle | \leq \varepsilon_{\text{linear}, 1}(m,T).
\label{eq:sync:nn:linear}
\end{equation}
%That is, $\norm{\theta_{\text{sync,NN}} - \theta_0}_2 \leq 2\sqrt{t / m\lambda}$ with probability of at least $1-\delta/N$.
%Note that Lemma C.1 of ~\cite{zhang2020neural} places an upper bound on the learning rate $\eta$ of the gradient descent algorithm with which the $\theta^{(i)}_t$'s are obtained.
%We need to use a union bound over all $N$ agents to ensure that it holds for all agents. This is the reason for one of the additional factor of $\log N$ in the conditions on $m$ (Appendix~\ref{app:conditions:on:m}).

Next, note that the $\theta^{i}_t$ is obtained by training only using agent $i$'s local observations (line 14 of Algo.~\ref{algo:agent}).
Define $\theta_{t,i}^{\text{local}} = (V^{\text{local}}_{t,i})^{-1}(\sum^{t}_{\tau=1} y_{\tau,i} g(x_{\tau,i};\theta_0) / \sqrt{m} )$.
Note that $\theta_{t,i}^{\text{local}}$ is calculated in the same way as $\overline{\theta}_{t,i}$ (line 4 of Algo.~\ref{algo:agent}), except that its calculation only involves agent $i$'s local observations.
% and $V^{\text{local}}_{t,i}$. 
Next, making use of Lemmas C.1 and C.4 of~\cite{zhang2020neural}, we can follow similar steps as equation C.3 of~\cite{zhang2020neural} (in Appendix C.2 of~\cite{zhang2020neural}) to show that there exists constants $C_2>0$ and $C_3>0$ such that we have $\forall x\in\mathcal{X}_{t,i}, t\in[T], i\in[N]$ that
%with probability of at least $1-\delta_{\text{nn}}$ that 
\begin{equation}
\begin{split}
|\langle g(x;\theta_0), \theta^{i}_t &- \theta_0 \rangle - \langle g(x;\theta_0) / \sqrt{m}, \theta_{t,i}^{\text{local}} \rangle  | \\
&\leq C_2(1-\eta m \lambda)^{J}\sqrt{tL/\lambda} + C_3 m^{-1/6} \sqrt{\log m} L^4 t^{5/3} \lambda^{-5/3} (1+\sqrt{t/\lambda})\\
&\leq C_2(1-\eta m \lambda)^{J}\sqrt{TL/\lambda} +  C_3 m^{-1/6} \sqrt{\log m} L^4 T^{5/3} \lambda^{-5/3} (1+\sqrt{T/\lambda})\\
&\triangleq \varepsilon_{\eta,J} +  \varepsilon_{\text{linear}, 2}(m,T).
\end{split}
\label{eq:diff:between:linear:and:gp:mean}
\end{equation}
We refer to $\langle g(x;\theta_0) / \sqrt{m}, \theta_{t,i}^{\text{local}} \rangle$ as the \textbf{linear prediction} because it is the prediction of the linear model with the neural tangent features $g(x;\theta_0) / \sqrt{m}$ as the input features, conditioned on the local observations of agent $i$.
Note that similar to~\eqref{eq:sync:nn:linear} which also relies on Lemma C.1 of~\cite{zhang2020neural}, \eqref{eq:diff:between:linear:and:gp:mean} also requires conditions 3 and 4 on $m$, as well as the condition on $\eta$, in App.~\ref{app:conditions:on:m} to be satisfied.
%(\emph{a}) the conditions 3.~and 4.~on $m$ (Appendix~\ref{app:conditions:on:m}) are satisfied and (\emph{b}) the learning rate $\eta$ for gradient descent satisfies $\eta\leq C_3(m\lambda + tmL)^{-1}$ for some constant $C_3$. 
\eqref{eq:diff:between:linear:and:gp:mean} holds with probability of at least $1-2\delta_5$, where the error probabilities come from the use of Lemmas C.1 and C.4 of~\cite{zhang2020neural}.
%By choosing the learning rate $\eta$ and the number of iterations $J$ for gradient descent in a similar way as~\cite{zhang2020neural}, we can make sure that the first error term $\varepsilon_{\eta,J}$ is upper-bounded by a constant (e.g., it can be upper-bounded by $1/3$ in~\cite{zhang2020neural}).
%Also note that $\varepsilon_{\text{linear}, 2}(m,T)$ decreases as $m$ increases and goes to $0$ as $m\rightarrow \infty$.

%Note that $\theta_{t,i}^{\text{local}}$ is calculated in the same way as $\overline{\theta}_{t,i}$, except that its calculation only involves agent $i$'s local observations and $V^{\text{local}}_{t,i}$. 

Next, we can bound the difference between $f(x;\theta_{\text{sync,NN}})$ (i.e., the prediction of the NN with the aggregated parameters) and the averaged linear predictions of all agents calculated using their local observations:
\begin{equation}
\begin{split}
|f(x;\theta_{\text{sync,NN}}) &- \frac{1}{N}\sum^N_{i=1} \langle g(x;\theta_0) / \sqrt{m}, \theta_{t,i}^{\text{local}} \rangle|
\leq |f(x;\theta_{\text{sync,NN}}) - \frac{1}{N}\sum^N_{i=1} \langle g(x;\theta_0) , \theta^{i}_t - \theta_0 \rangle | \\
&\quad + |\frac{1}{N}\sum^N_{i=1} \langle g(x;\theta_0), \theta^{i}_t - \theta_0 \rangle 
- \frac{1}{N}\sum^N_{i=1} \langle g(x;\theta_0) / \sqrt{m}, \theta_{t,i}^{\text{local}} \rangle|\\
&\leq |f(x; \theta_{\text{sync,NN}}) - \langle g(x;\theta_0), \theta_{\text{sync,NN}} - \theta_0 \rangle |  \\
&\quad  + \frac{1}{N}\sum^N_{i=1} | \langle g(x;\theta_0), \theta^{i}_t - \theta_0 \rangle 
- \langle g(x;\theta_0) / \sqrt{m}, \theta_{t,i}^{\text{local}} \rangle|\\
&\leq \varepsilon_{\text{linear}, 1}(m,T) + \frac{1}{N}\sum^N_{i=1} (\varepsilon_{\eta,J} + \varepsilon_{\text{linear}, 2}(m,T)) \\
&\leq \varepsilon_{\text{linear}, 1}(m,T) + \varepsilon_{\eta,J} + \varepsilon_{\text{linear}, 2}(m,T) \\
&\triangleq \varepsilon_{\text{linear}}(m,T).
\end{split}
\label{eq:agg:nn:is:close:to:gp:mean}
\end{equation}
In the second inequality, we plugged in the definition of $\theta_{\text{sync,NN}}=\frac{1}{N}\sum^N_{i=1}  \theta^{i}_t$.
In the third inequality, we have made use of \eqref{eq:sync:nn:linear} and \eqref{eq:diff:between:linear:and:gp:mean};
\eqref{eq:agg:nn:is:close:to:gp:mean} holds with probability of at least $1-4\delta_5$, where the error probabilities come from \eqref{eq:sync:nn:linear} ($2\delta_5$) and \eqref{eq:diff:between:linear:and:gp:mean} ($2\delta_5$), respectively. Now we replace $\delta_5$ by $\delta_5/4$, which ensures that \eqref{eq:agg:nn:is:close:to:gp:mean} holds with probability of at least $1-\delta_5$.
This will only introduce a factor of $4$ within the $\log$ of condition 3 on $m$ (App.~\ref{app:conditions:on:m}), which is ignored since it can be absorbed by the constant $C$.

%Now we replace $\delta_{\text{nn}}$ by $\delta_{\text{nn}} / 3$, which ensures that equation~\eqref{eq:agg:nn:is:close:to:gp:mean} holds with probability of at least $1-\delta_{\text{nn}}$, and it will only introduce a factor of $3$ within the $\log$ of condition 3.~on $m$ (Appendix~\ref{app:conditions:on:m}) which can be absorbed by the constant $C$.
%Note that $\varepsilon_{\text{linear}}(m,T)$ can also be made arbitrarily small by making $m$ sufficiently large. That is, as long as the width $m$ of the NN is wide enough, the output of the NN with the aggregated parameters can be well-approximated by a weighted average of the GP posterior mean with the neural tangent features.

%We need to take a union bound over the two events above, however, we omit it for simplicity, because the resulting constants will only be reflected in the conditions on $m$ (Appendix~\ref{app:conditions:on:m}) and can be absorbed into the corresponding constants.

\textbf{Step 2: Linear Prediction Is Close to the Reward Function $h(x)$}

In the proof in this section, we will also need a "local" variant of the confidence bound of Lemma~\ref{lemma:confidence:bound:ucb:2}, i.e., the confidence bound of Lemma~\ref{lemma:confidence:bound:ucb:2} calculated only using the local observations of an agent $i$:
%Define $\nu_{TK} = B + R\sqrt{2(\log(N/\delta_{\text{UCB}_1}) + 1) + \widetilde{d}_{\max}\log(1+TK/\lambda) }$, we have that
\begin{lemma}[\cite{zhang2020neural}]
\label{lemma:confidence:bound:ucb:1:local}
We have with probability of at least $1-\delta_4$ for all $t\in \mathcal{T}_{-1} \subset [T],i\in[N]$, that
\[
|h(x) - \langle g(x;\theta_0) / \sqrt{m}, \theta^{\text{local}}_{t,i} \rangle| \leq  \nu_{TK} \sqrt{\lambda} \norm{g(x;\theta_0) / \sqrt{m}}_{(V_{t,i}^{\text{local}})^{-1}}, \forall x\in \mathcal{X}_{t,i}.
\]
\end{lemma}
\begin{proof}
Similar to the proof of Lemma~\ref{lemma:confidence:bound:ucb:2} (App.~\ref{subsec:validity:of:ucb:2}), the proof of Lemma \ref{lemma:confidence:bound:ucb:1:local} also requires of Lemmas B.3, B.6 and B.7 from~\cite{zhang2020neural}, 
%which are exactly the same as those corresponding lemmas from~\cite{zhang2020neural} except that we need an additional union bound over all $N$ agents.
which, in this case, can be directly applied to our setting (except that we need an additional union bound over all $N$ agents).
The implication of the additional union bound on the error probabilities is taken care of by the additional term of $N$ within the $\log$ in the expression of $\nu_{TK}$ (Lemma \ref{lemma:confidence:bound:ucb:1}), and in conditions 1 and 2 on $m$ (see App.~\ref{app:conditions:on:m}, and also Lemmas \ref{lemma:approx:h:by:linear:func} and~\ref{lemma:bound:error:NTK:matrix}).
The required lower bounds on $m$ by the local variants of Lemmas B.3 and B.7 (required in the proof here) are smaller than those given in Lemmas \ref{lemma:approx:h:by:linear:func} and~\ref{lemma:bound:error:NTK:matrix} and hence do not need to appear in the conditions in Appendix~\ref{app:conditions:on:m}.
By letting the sum of the three error probabilities (resulting from the applications of Lemmas B.3, B.6 and B.7 of~\cite{zhang2020neural}) be $\delta_4$, 
we can ensure that Lemma \ref{lemma:confidence:bound:ucb:1:local} holds with probability of at least $1-\delta_4$.
For simplicity, we let the error probability for Lemma B.6 be $\delta_4$, which leads to the cleaner expression of $\nu_{TK}$ in Lemma~\ref{lemma:confidence:bound:ucb:1}. 
This means that the error probabilities for Lemmas B.3 and B.7 are very small, which can be accounted for by simply increasing the value of the absolute constant $C$ in conditions 1 and 2 on $m$ (App.~\ref{app:conditions:on:m}) and hence does not affect our main theoretical analysis.
\end{proof}

\textbf{Step 3: Combining Results from Step 1 and Step 2}

Next, we are ready to prove the validity of $\text{UCB}^{b}_{t,i}$ by using the averaged linear prediction $\frac{1}{N}\sum^N_{i=1} \langle g(x;\theta_0) / \sqrt{m}, \theta_{t,i}^{\text{local}} \rangle$ as an intermediate term:
\begin{equation}
\begin{split}
|f(x;&\theta_{\text{sync,NN}}) - h(x)| \\
&= |f(x;\theta_{\text{sync,NN}}) - \frac{1}{N}\sum^N_{i=1} \langle g(x;\theta_0) / \sqrt{m}, \theta_{t,i}^{\text{local}} \rangle
+ \frac{1}{N}\sum^N_{i=1} \langle g(x;\theta_0) / \sqrt{m}, \theta_{t,i}^{\text{local}} \rangle
- h(x)|\\
&\leq \frac{1}{N}\sum^N_{i=1} | \langle g(x;\theta_0) / \sqrt{m}, \thetaß†_{t,i}^{\text{local}} \rangle - h(x)| + \varepsilon_{\text{linear}}(m,T)\\
&\leq \frac{1}{N}\sum^N_{i=1} \nu_{TK} \sqrt{\lambda} \norm{g(x;\theta_0) / \sqrt{m}}_{(V^{\text{local}}_{t,i})^{-1}} + \varepsilon_{\text{linear}}(m,T)\\
&= \frac{1}{N}\sum^N_{i=1} \nu_{TK} \sqrt{\lambda} \norm{g(x;\theta_0) / \sqrt{m}}_{(V^{\text{local}}_{i})^{-1}} + \varepsilon_{\text{linear}}(m,T)\\
% &=  \nu_{TK} \frac{1}{N}\sum^N_{i=1} \sqrt{ \lambda g(x;\theta_0)^{\top} (V^{\text{local}}_{t,i})^{-1} g(x;\theta_0) / m} + \varepsilon_{\text{linear}}(m,T)\\
% &\leq \nu_{TK}  \sqrt{ \frac{1}{N}\sum^N_{i=1} \lambda g(x;\theta_0)^{\top} (V^{\text{local}}_{t,i})^{-1} g(x;\theta_0) / m} + \varepsilon_{\text{linear}}(m,T)\\
% &= \nu_{TK}  \sqrt{  \lambda g(x;\theta_0)^{\top} \left( \frac{1}{N}\sum^N_{i=1} (V^{\text{local}}_{t,i})^{-1} \right) g(x;\theta_0) / m} + \varepsilon_{\text{linear}}(m,T)\\
% &= \nu_{TK}  \sqrt{  \lambda g(x;\theta_0)^{\top} \left( V_{\text{sync,NN}}^{-1} \right) g(x;\theta_0) / m} + \varepsilon_{\text{linear}}(m,T)\\
% &= \nu_{TK}  \sqrt{\lambda} \norm{g(x;\theta_0) / \sqrt{m}}_{V_{\text{sync,NN}}^{-1}} + \varepsilon_{\text{linear}}(m,T).
\end{split}
\label{eq:last:eq:ucb:1}
\end{equation}
The second inequality has made use of \eqref{eq:agg:nn:is:close:to:gp:mean}, and the third inequality follows from Lemma~\ref{lemma:confidence:bound:ucb:1:local}.
% , the fourth inequality results from the concavity of the square root function.
% In the second last equality, we have plugged in the definition of $V_{\text{sync,NN}}^{-1} = \frac{1}{N}\sum^{N}_{i=1} (V^{\text{local}}_{t,i})^{-1}$.
In the last inequality, we have made the substitution of $(V^{\text{local}}_{i})^{-1}=(V^{\text{local}}_{t,i})^{-1}$. This is because in the proof of Lemma \ref{lemma:confidence:bound:ucb:1} here, we only consider the iterations of $t\in\mathcal{T}_{-1} \triangleq \{t_p-1\}_{p=2,\ldots,P-1}$, i.e., \emph{the last iteration of every epoch}. As a result, this ensures that $(V^{\text{local}}_{i})^{-1}=(V^{\text{local}}_{t,i})^{-1}$ because every time the central server obtains $(V^{\text{local}}_{i})^{-1}$ through $(V^{\text{local}}_{i})^{-1}=(V^{\text{local}}_{t,i})^{-1},\forall i\in[N]$ (line 4 of Algo.~\ref{algo:server}), we have that the current iteration $t$ is \emph{the last iteration of the previous epoch}.
As a results, \eqref{eq:last:eq:ucb:1} holds with probability of at least $1-\delta_4 - \delta_5$, in which the error probabilities come from \eqref{eq:agg:nn:is:close:to:gp:mean} ($\delta_5$) and Lemma~\ref{lemma:confidence:bound:ucb:1:local} ($\delta_4$).
In other words, Lemma~\ref{lemma:confidence:bound:ucb:1} (i.e., the validity of $\text{UCB}^{b}_{t,i}$) holds with probability of at least $1-\delta_4 - \delta_5$.

%\noindent\fbox{
%    \parbox{\textwidth}{
%Now letting both $\delta_{\text{nn}}$ and $\delta_{\text{UCB}_1}$ be equal to $\delta_{\text{UCB}_1}/2$, we have that Equation~\eqref{eq:last:eq:ucb:1}, as well as Lemma~\ref{lemma:confidence:bound:ucb:1} (i.e., the validity of $\text{UCB}^{b}_{t,i}$) holds with probability of at least $1-\delta_{\text{UCB}_1}$.
%Again, doing this will introduce another factor of $2$ within the $\log$ in condition 3.~on $m$ (Appendix~\ref{app:conditions:on:m}) which can be absorbed by the constant $C$, and will introduce a factor of $2$ into the expression of $\nu_{TK}$ which we have ignored for simplicity.
%    }%
%}

\end{proof}

\subsection{Regret Upper Bound for Good Epochs}
\label{subsec:regret:good:epochs}
In this section, we derive an upper bound on the total regrets incurred in all good epochs $\mathcal{E}^{\text{good}}$ (defined in App.~\ref{subsubsec:definition:good:bad:epochs}).

% \subsubsection{Auxiliary Inequalities}
% We firstly derive two auxiliary results which will be used in the proofs later. 
\subsubsection{Auxiliary Inequality}
We firstly derive an auxiliary result which will be used in the proofs later. 

% To begin with, 
For agent $i$ and iteration $t$ in a good epoch $p \in \mathcal{E}^{\text{good}}$, we have that
\begin{equation}
\begin{split}
\sqrt{\lambda} \norm{g(x;\theta_0) / \sqrt{m}}_{\overline{V}_{t,i}^{-1}} &= \sqrt{\lambda g(x;\theta_0)^{\top} \overline{V}_{t,i}^{-1} g(x;\theta_0) / m}\\
&\leq \sqrt{\lambda g(x;\theta_0)^{\top} \widetilde{V}_{t,i}^{-1} g(x;\theta_0)  / m \frac{\text{det}\widetilde{V}_{t,i}}{\text{det}\overline{V}_{t,i}} }\\
&\leq \sqrt{\lambda g(x;\theta_0)^{\top} \widetilde{V}_{t,i}^{-1} g(x;\theta_0)  / m \frac{\text{det}V_{p}}{\text{det}V_{p-1}} }\\
&\leq \sqrt{e \lambda g(x;\theta_0)^{\top} \widetilde{V}_{t,i}^{-1} g(x;\theta_0)  / m }\\
&=\sqrt{e \lambda} \norm{g(x;\theta_0) / \sqrt{m}}_{\widetilde{V}_{t,i}^{-1}}.
\end{split}
\label{eq:change:det:for:good:epochs}
\end{equation}
Recall that $\overline{V}_{t,i}$ (line 4 of Algo.~\ref{algo:agent}) is used by agent $i$ in iteration $t$ to select $x_{t,i}$ (via $\text{UCB}^{a}_{t,i}$),
and that the matrix $\widetilde{V}_{t,i}$ is defined for the hypothetical agent which sequentially chooses all $TN$ queries $\{x_{t,i}\}_{t\in[T],i\in[N]}$ in a round-robin fashion (App.~\ref{subsubsec:definition:good:bad:epochs}).
The first inequality in \eqref{eq:change:det:for:good:epochs} above follows from Lemma 12 of~\cite{abbasi2011improved}.
The second inequality is because $V_p$ contains more information than $\widetilde{V}_{t,i}$ (since $V_p$ is calculated using all the inputs selected \emph{after} epoch $p$), and $V_{p-1}$ contains less information than $\overline{V}_{t,i}$ (because compared with $V_{p-1}$, $\overline{V}_{t,i}$ additionally contains the local inputs selected by agent $i$ in the current epoch $p$).
In the last inequality, we have made use of the definition of good epochs, i.e., $(\text{det}V_{p}) / (\text{det}V_{p-1}) \leq e$ (App.~\ref{subsubsec:definition:good:bad:epochs}).

\subsubsection{Upper Bound on the Instantaneous Regret $r_{t,i}$}
Here we assume that both $\text{UCB}^{a}_{t,i}$ and $\text{UCB}^{b}_{t,i}$ hold (hence we ignore the error probabilities here), which we have proved in App.~\ref{subsec:validity:of:ucbs}.
We now derive an upper bound on the instantaneous regret $r_{t,i} = h(x^*_{t,i}) - h(x_{t,i})$ for agent $i$ and iteration $t$ in a good epoch $p \in \mathcal{E}^{\text{good}}$:
%by making use of the confidence bounds from Lemma~\ref{lemma:confidence:bound:ucb:1} and Lemma~\ref{lemma:confidence:bound:ucb:2}:
\begin{equation}
\begin{split}
r_{t,i} &= h(x^*_{t,i}) - h(x_{t,i}) \\
&= \alpha h(x^*_{t,i}) + (1-\alpha) h(x^*_{t,i}) - h(x_{t,i})\\
&\leq \alpha \text{UCB}^{b}_{t,i}(x^*_{t,i}) + \alpha \varepsilon_{\text{linear}}(m,T) + (1-\alpha) \text{UCB}^{a}_{t,i}(x^*_{t,i}) - h(x_{t,i})\\
&\leq \alpha \text{UCB}^{b}_{t,i}(x_{t,i}) + (1-\alpha) \text{UCB}^{a}_{t,i}(x_{t,i}) + \alpha \varepsilon_{\text{linear}}(m,T) - h(x_{t,i})\\
&= \alpha \left( \text{UCB}^{b}_{t,i}(x_{t,i}) - h(x_{t,i}) \right) + (1-\alpha) \left( \text{UCB}^{a}_{t,i}(x_{t,i}) - h(x_{t,i})\right) + \alpha \varepsilon_{\text{linear}}(m,T)\\
&\leq \alpha \Big(  2 \nu_{TK}  \frac{1}{N} \sum^N_{j=1} \sqrt{\lambda} \norm{g(x_{t,i};\theta_0) / \sqrt{m}}_{(V^{\text{local}}_{j})^{-1}} + \varepsilon_{\text{linear}}(m,T) \Big) + \\
&\quad (1-\alpha) \Big( 2\nu_{TKN} \sqrt{\lambda} \norm{g(x_{t,i};\theta_0) / \sqrt{m}}_{\overline{V}_{t,i}^{-1}}\Big) + \alpha \varepsilon_{\text{linear}}(m,T)\\
&\leq \alpha \Big(  2 \nu_{TK}  \frac{1}{N} \sum^N_{j=1} \sqrt{\lambda} \norm{g(x_{t,i};\theta_0) / \sqrt{m}}_{(V^{\text{local}}_{j})^{-1}} + \varepsilon_{\text{linear}}(m,T) \Big) + \\
&\quad (1-\alpha) \Big( 2\nu_{TKN} \sqrt{e \lambda} \norm{g(x_{t,i};\theta_0) / \sqrt{m}}_{\widetilde{V}_{t,i}^{-1}}\Big) + \alpha \varepsilon_{\text{linear}}(m,T)\\
&= \alpha  2 \nu_{TK}  \frac{1}{N} \sum^N_{j=1} \sqrt{\lambda} \norm{g(x_{t,i};\theta_0) / \sqrt{m}}_{(V^{\text{local}}_{j})^{-1}} + \\
&\quad (1-\alpha)  2\nu_{TKN} \sqrt{e \lambda} \norm{g(x_{t,i};\theta_0) / \sqrt{m}}_{\widetilde{V}_{t,i}^{-1}} + 2 \alpha \varepsilon_{\text{linear}}(m,T)\\
%&\triangleq \alpha_t  2 \nu_{TK}  \frac{1}{\sqrt{N}} \sum^N_{j=1} \overline{\sigma}^{\text{local}}_{t_p-1,j}(x_{t,i}) + (1-\alpha_t)  2\nu_{TKN} \sqrt{e} \widetilde{\sigma}_{t,i}(x_{t,i}) + 2 \alpha_t \varepsilon_{\text{linear}}(m,T).
&\triangleq (1-\alpha)  2\nu_{TKN} \sqrt{e} \widetilde{\sigma}_{t,i}(x_{t,i}) + \alpha 2 \nu_{TK}  \frac{1}{N} \sum^N_{j=1} \widetilde{\sigma}^{\text{local}}_{t_p-1,j}(x_{t,i}) + 2 \alpha \varepsilon_{\text{linear}}(m,T).
\end{split}
\label{eq:upper:bound:inst:regret}
\end{equation}
The first inequality makes use of Lemma \ref{lemma:confidence:bound:ucb:2} (i.e., the validity of $\text{UCB}^{a}_{t,i}$) and Lemma \ref{lemma:confidence:bound:ucb:1} (i.e., the validity of $\text{UCB}^{b}_{t,i}$).
The second inequality follows from the way in which $x_{t,i}$ is selected (line 7 of Algo.~\ref{algo:agent}): $x_{t,i} = {\arg\max}_{x\in\mathcal{X}_{t,i}} (1- \alpha) \text{UCB}^{a}_{t,i}(x) + \alpha \text{UCB}^{b}_{t,i}(x)$.
The third inequality again makes use of Lemma \ref{lemma:confidence:bound:ucb:2} and Lemma \ref{lemma:confidence:bound:ucb:1}, as well as the expressions of $\text{UCB}^{a}_{t,i}$ and $\text{UCB}^{b}_{t,i}$.
In the fourth inequality, we have made used of the auxiliary inequality of \eqref{eq:change:det:for:good:epochs} 
% and \eqref{eq:good:epoch:unpack:V:sync} 
we derived in the last section.
Recall that we have discussed that $(V^{\text{local}}_{i})^{-1}=(V^{\text{local}}_{t,i})^{-1}$ for all $t=t_p-1$ at the end of App.~\ref{app:subsec:validity:of:ucb:b}.
Therefore, in the last step, we have defined $\widetilde{\sigma}^{\text{local}}_{t_p-1,j}(x_{t,i}) \triangleq \sqrt{\lambda} \norm{g(x_{t,i};\theta_0) / \sqrt{m}}_{(V^{\text{local}}_{t_p-1,j})^{-1}} = \sqrt{\lambda} \norm{g(x_{t,i};\theta_0) / \sqrt{m}}_{(V^{\text{local}}_{j})^{-1}}$ which represents the GP posterior standard deviation (using the kernel of $\widetilde{k}(x,x')=g(x;\theta_0)^{\top} g(x';\theta_0) / m$) conditioned on all agent $j$'s local observations before iteration $t_p$.
Note that $\widetilde{\sigma}^{\text{local}}_{t_p-1,j}(x_{t,i})$ is the same as the one defined in 
% Sec.~\ref{sec:fn_ucb} 
Sec.~\ref{subsec:weight:two:ucbs}
of the main text where we explain the weight between the two UCBs.
Similarly, we have also defined $\widetilde{\sigma}_{t,i}(x_{t,i}) \triangleq \sqrt{\lambda} \norm{g(x_{t,i};\theta_0) / \sqrt{m}}_{\widetilde{V}_{t,i}^{-1}}$, which represents the GP posterior standard deviation conditioned on the observations of the hypothetical agent before $x_{t,i}$ is selected (App.~\ref{subsubsec:definition:good:bad:epochs}).
% (and it hence is the same as the term that appears in the analysis of standard GP-UCB).

%Next, as a result of our assumption that $|h(x)| \leq 1$, we further have that
%\begin{equation}
%\begin{split}
%r_{t,i} &\leq \min\{ \alpha_t  2 \nu_t  \frac{1}{\sqrt{N}} \sum^N_{j=1} \overline{\sigma}^{\text{local}}_{t_p-1,j}(x_{t,i}) + (1-\alpha_t)  2\nu_{TN} \sqrt{2} \widetilde{\sigma}_{t,i}(x_{t,i}) + 2 \alpha_t \varepsilon_{\text{linear}}(m,T), 2\}\\
%&\leq \min\{ \alpha_t  2 \nu_t  \frac{1}{\sqrt{N}} \sum^N_{j=1} \overline{\sigma}^{\text{local}}_{t_p-1,j}(x_{t,i}), 2\} + \min\{ (1-\alpha_t)  2\nu_{TN} \sqrt{2} \widetilde{\sigma}_{t,i}(x_{t,i}), 2\} + \\
%&\quad \min\{ 2 \alpha_t \varepsilon_{\text{linear}}(m,T), 2\}\\
%&\leq \min\{ \alpha_t  2 \nu_t  \frac{1}{\sqrt{N}} \sum^N_{j=1} \overline{\sigma}^{\text{local}}_{t_p-1,j}(x_{t,i}), 2\} + \min\{ (1-\alpha_t)  2\nu_{TN} \sqrt{2} \widetilde{\sigma}_{t,i}(x_{t,i}), 2\} + \\
%&\quad 2 \alpha_t \varepsilon_{\text{linear}}(m,T)\\
%\end{split}
%\label{eq:upper:bound:inst:regret}
%\end{equation}

%Next, we will separately sum up (across all good epochs and all agents) the first and second terms of the upper bound from equation \eqref{eq:upper:bound:inst:regret}.
Next, we will separately derive upper bounds on the summation (across all good epochs and all agents) of the first and second terms of the upper bound from \eqref{eq:upper:bound:inst:regret}.

\subsubsection{Upper Bound on the Sum of the First term of \eqref{eq:upper:bound:inst:regret}}
\label{app:subsubsec:upper:bound:sum:of:first:term:good:epochs}
Here, similar to~\cite{kassraie2021neural}, we denote as $\kappa_0$ an upper bound on the value of the NTK function for any input: $\langle g(x;\theta_0) / \sqrt{m}, g(x;\theta_0) / \sqrt{m} \rangle \leq \kappa_0,\forall x\in\mathcal{X}_{t,i},t\in[T],i\in[N]$.
As a result, we can use it to show that both $\widetilde{\sigma}_{t,i}(x)$ and $\widetilde{\sigma}^{\text{local}}_{t_p-1,j}(x)$ can be upper-bounded: $\widetilde{\sigma}_{t,i}(x) \leq \sqrt{\kappa_0}$ and $\widetilde{\sigma}^{\text{local}}_{t_p-1,j}(x) \leq \sqrt{\kappa_0}$.
%As a result, we can show that 
%%it is easy to verify (using the closed-form equations for the GP posterior variance) that 
%the GP posterior variance at any input $x$ is upper-bounded by $\kappa_0$: $\widetilde{\sigma}^2_{t,i}(x) \leq \kappa_0$ and $(\widetilde{\sigma}^{\text{local}}_{t_p-1,j}(x))^2 \leq \kappa_0$.
To show this, following the notations of Appendix \ref{subsubsec:definition:good:bad:epochs}, we denote $\widetilde{V}_{t,i}= J_{t,i} J^{\top}_{t,i} + \lambda I$ where $J_{t,i} = \left[\left[g(x_{\tau,j};\theta_0)\right]_{\tau \in [t-1], j\in[N]}, \left[g(x_{t,j};\theta_0)\right]_{j\in[i]}\right]$ which is a $p_0\times [(t-1)N + i]$ matrix.
Then we have
\begin{equation}
\begin{split}
\widetilde{\sigma}^2_{t,i}(x) &= \lambda \norm{g(x;\theta_0) / \sqrt{m}}_{\widetilde{V}_{t,i}^{-1}}^2\\
&=\lambda g(x;\theta_0)^{\top}  (J_{t,i} J^{\top}_{t,i} + \lambda I)^{-1}  g(x;\theta_0) / m\\
&=\lambda g(x;\theta_0)^{\top}
\Big( \frac{1}{\lambda} I - \frac{1}{\lambda}J_{t,i} \big( I + J_{t,i}^{\top} \frac{1}{\lambda} J_{t,i} \big)^{-1} J_{t,i}^{\top} \frac{1}{\lambda} \Big)
g(x;\theta_0) / m\\
&= g(x;\theta_0)^{\top} g(x;\theta_0) / m - (g(x;\theta_0)^{\top} / \sqrt{m}) J_{t,i} \big( \lambda I + J_{t,i}^{\top} J_{t,i} \big)^{-1} J_{t,i}^{\top} (g(x;\theta_0) / \sqrt{m})\\
&= g(x;\theta_0)^{\top} g(x;\theta_0) / m - \norm{(g(x;\theta_0)^{\top} / \sqrt{m}) J_{t,i}}^2_{\left( \lambda I + J_{t,i}^{\top} J_{t,i} \right)^{-1}}\\
&\leq g(x;\theta_0)^{\top} g(x;\theta_0) / m \leq \kappa_0
\end{split}
\label{eq:proof:using:gp:post:var}
\end{equation}
where we used the matrix inversion lemma in the third equality.
Using similar derivations also allows us to show that $(\widetilde{\sigma}^{\text{local}}_{t_p-1,j}(x))^2 \leq \kappa_0$.
Therefore, we have that $\widetilde{\sigma}_{t,i}(x) \leq \sqrt{\kappa_0}$ and $\widetilde{\sigma}^{\text{local}}_{t_p-1,j}(x) \leq \sqrt{\kappa_0}$.

Denoting the set of iterations from all good epochs as $\mathcal{T}^{\text{good}}$, we can derive an upper bound the first term of \eqref{eq:upper:bound:inst:regret}, summed across all agents $i\in[N]$ and all iteration in good epochs $\mathcal{T}^{\text{good}}$:
\begin{equation}
\begin{split}
\sum^N_{i=1} \sum_{t \in \mathcal{T}^{\text{good}}} (1-\alpha) 2 \nu_{TKN} &\sqrt{e} \widetilde{\sigma}_{t,i}(x_{t,i}) \stackrel{(a)}{\leq} 2\sqrt{e}\nu_{TKN} \sum^N_{i=1} \sum^T_{t=1} \widetilde{\sigma}_{t,i}(x_{t,i}) \\
&\stackrel{(b)}{=} 2\sqrt{e}\nu_{TKN} \sum^N_{i=1} \sum^T_{t=1} \min\{ \widetilde{\sigma}_{t,i}(x_{t,i}), \sqrt{\kappa_0}\}\\
&\stackrel{(c)}{\leq} 2\sqrt{e}\nu_{TKN} \sum^N_{i=1} \sum^T_{t=1} \min\{ \sqrt{\kappa_0} \widetilde{\sigma}_{t,i}(x_{t,i}), \sqrt{\kappa_0}\}\\
&\leq 2\sqrt{e}\nu_{TKN} \sqrt{\kappa_0} \sum^N_{i=1} \sum^T_{t=1} \min\{ \widetilde{\sigma}_{t,i}(x_{t,i}), 1\}\\
&\stackrel{(d)}{\leq} 2\sqrt{e}\nu_{TKN} \sqrt{\kappa_0} \sqrt{TN \sum^N_{i=1} \sum^T_{t=1} \min\{ \widetilde{\sigma}_{t,i}^2(x_{t,i}), 1\} }\\
&\stackrel{(e)}{\leq} 2\sqrt{e}\nu_{TKN} \sqrt{\kappa_0} \sqrt{TN [ 2\lambda \log\det (\lambda^{-1}\mathbf{K}_{TN} + I) ] }\\
&\stackrel{(f)}{\leq} 2\sqrt{2e}\nu_{TKN} \sqrt{\kappa_0} \sqrt{TN\lambda [\log\det (\lambda^{-1}\mathbf{H} + I) + 1] }\\
&= 2\sqrt{e}\nu_{TKN} \sqrt{\kappa_0} \sqrt{TN\lambda \left[\widetilde{d} \log(1+TNK/\lambda) + 1\right] }\\
\end{split}
\label{eq:proof:upper:bound:first:term:good:epochs}
\end{equation}
%\begin{equation}
%\begin{split}
%\sum^N_{i=1} \sum_{t \in \mathcal{T}^{\text{good}}} (1-\alpha_t) 2 \nu_{TN} &\sqrt{2} \widetilde{\sigma}_{t,i}(x_{t,i}) \leq 2\sqrt{2}\nu_{TN} \sum^N_{i=1} \sum^T_{t=1} \widetilde{\sigma}_{t,i}(x_{t,i}) \\
%&\leq 2\sqrt{2}\nu_{TN} \sum^N_{i=1} \sum^T_{t=1} \widetilde{\sigma}_{t,i,\text{exact}}(x_{t,i}) + 2\sqrt{2}\nu_{TN} NT \varepsilon_{\text{NTK}}(m, TN) \\
%&\leq 2\sqrt{2}\nu_{TN} \sqrt{TN \sum^N_{i=1}\sum^T_{t=1} \widetilde{\sigma}^2_{t,i,\text{exact}}(x_{t,i}) }  + 2 \sqrt{2}\nu_{TN} NT \varepsilon_{\text{NTK}}(m, TN)\\
%&\leq 2\sqrt{2}\nu_{TN} \sqrt{TN C_1 \gamma_{TN} }  + 2\sqrt{2}\nu_{TN} NT \varepsilon_{\text{NTK}}(m, TN)
%\end{split}
%\end{equation}
%This equation holds with probability of at least $1-\delta_1$.
Step $(a)$ follows from $\alpha \leq 1,\forall t\geq1$ and summing across all iterations $[T]$ instead of only those iterations $\mathcal{T}^{\text{good}}$ in good epochs. 
Step $(b)$ follows because $\widetilde{\sigma}_{t,i}(x) \leq \sqrt{\kappa_0}$ as discussed above.
In step $(c)$, we have assumed that $\kappa_0\geq1$; however, if $\kappa_0 < 1$, the proof still goes through since we can directly upper-bound $\min\{ \widetilde{\sigma}_{t,i}(x_{t,i}), \sqrt{\kappa_0}\}$ by $\min\{ \widetilde{\sigma}_{t,i}(x_{t,i}), 1\}$,
%$\sqrt{\kappa_0}$ by $1$, 
after which the only modification we need to make to the equation above is to remove the dependency on multiplicative term of $\sqrt{\kappa_0}$.
Step $(d)$ results from the Cauchy–Schwarz inequality.
%We have made use of the definition of $C_1 \triangleq \frac{2}{\log(1+\lambda^{-1})}$.
Step $(e)$ can be derived following the proof of Lemma 4.8 of~\cite{zhang2020neural} (in Appendix B.7 of \cite{zhang2020neural}).
Step $(f)$ follows from Lemma~\ref{lemma:bound:error:NTK:matrix} and hence holds with probability of at least $1-\delta_1$.
The last equality simply plugs in the definition of the effective dimension $\widetilde{d}$ (Sec.~\ref{sec:background}).

\subsubsection{Upper Bound on the Sum of the Second term of \eqref{eq:upper:bound:inst:regret}}
\label{app:subsubsec:upper:bound:sum:of:second:term:good:epochs}
In this subsection, we derive an upper bound on the sum of the second term in equation~\eqref{eq:upper:bound:inst:regret} across all good epochs and all agents.
%\emph{across a particular good epoch}. 
%We denote by $t_p$ the starting index of the good epoch $p$, and by $E_p$ the length of this good epoch.

For the proof here, we need a "local" version of Lemma~\ref{lemma:bound:error:NTK:matrix}, i.e., a version of Lemma~\ref{lemma:bound:error:NTK:matrix} which only makes use of the contexts of an agent $i$.
Define $\mathbf{K}_{t,i}$ as the local counterpart to $\mathbf{K}_{t'}$ (from Lemma~\ref{lemma:bound:error:NTK:matrix}), i.e., $\mathbf{K}_{t,i}$ is the $t \times t$ matrix calculated using only agent $j$'s local contexts up to (and including) iteration $t$. 
Specifically, define $J_{t,i}\triangleq [g(x_{\tau,i};\theta_0)]_{\tau\in[t]}$ which is a $p_0\times t$ matrix, then $\mathbf{K}_{t,i}$ is defined as $\mathbf{K}_{t,i}\triangleq J_{t,i}^{\top} J_{t,i} / m$, which is a $t \times t$ matrix.
Also recall that in the main text, we have defined $\mathbf{H}_i$ as the local counterpart of $\mathbf{H}$ for agent $i$ (Sec.~\ref{sec:background}).
The next lemma gives our desired local version of Lemma~\ref{lemma:bound:error:NTK:matrix}.
\begin{lemma}[Lemma B.7 of~\cite{zhang2020neural}]
\label{lemma:bound:error:NTK:matrix:local}
If $m\geq CT^6 K^6 L^6 \log(TNKL/\delta_6)$, we have with probability of at least $1-\delta_6$ that
\[
\log\det(I + \lambda^{-1}\mathbf{K}_{t,i}) \leq \log\det(I + \lambda^{-1}\mathbf{H}_i) + 1,
\]
for all $t\in[T],i\in[N]$.
\end{lemma}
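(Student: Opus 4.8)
The plan is to reduce Lemma~\ref{lemma:bound:error:NTK:matrix:local} to Lemma~\ref{lemma:bound:error:NTK:matrix} (i.e., Lemma B.7 of~\cite{zhang2020neural}) applied separately to each agent, followed by a union bound over the $N$ agents. First I would fix an agent $i\in[N]$ and observe that $\mathbf{K}_{t,i}=J_{t,i}^\top J_{t,i}/m$ and $\mathbf{H}_i$ are \emph{exactly} the empirical and limiting NTK Gram matrices on the set of at most $TK$ contexts $\{x^k_{\tau,i}\}_{\tau\in[T],k\in[K]}$ observed by agent $i$, which by Assumption~\ref{assumption:main} satisfy the normalization $\norm{x}_2=1$ and the symmetry condition $[x]_j=[x]_{j+d/2}$ that the NTK concentration bound in~\cite{zhang2020neural} requires. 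Hence Lemma~\ref{lemma:bound:error:NTK:matrix}, instantiated with this $TK$-point set playing the role of the full $TKN$-point set and with failure probability $\delta_6/N$, gives: provided $m\geq C(TK)^6 L^6 \log(TKL\cdot N/\delta_6)$, with probability at least $1-\delta_6/N$ we have $\log\det(I+\lambda^{-1}\mathbf{K}_{t,i})\leq \log\det(I+\lambda^{-1}\mathbf{H}_i)+1$ simultaneously for all $t\in[T]$ (the ``for all $t$'' is already built into that lemma, since $\mathbf{K}_{t,i}$ is a principal submatrix of $\mathbf{K}_{T,i}$ and log-det is monotone in the set of rows/columns retained, or equivalently it is part of the conclusion of Lemma~\ref{lemma:bound:error:NTK:matrix}). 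Since $(TK)^6=T^6K^6$, this is precisely the width requirement stated in the lemma.

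Next I would take a union bound over the $N$ agents: on the intersection of the $N$ events, each of which has failure probability $\delta_6/N$, the bound $\log\det(I+\lambda^{-1}\mathbf{K}_{t,i})\leq \log\det(I+\lambda^{-1}\mathbf{H}_i)+1$ holds for all $t\in[T]$ and all $i\in[N]$, with total failure probability at most $\delta_6$. This yields the claimed statement under the stated condition $m\geq CT^6K^6L^6\log(TNKL/\delta_6)$.

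The only point needing care is the bookkeeping of the width requirement through the union bound. Because each per-agent invocation of Lemma~\ref{lemma:bound:error:NTK:matrix} involves only $TK$ (rather than $TKN$) contexts, the polynomial prefactor in $m$ is $T^6K^6L^6$ rather than $T^6K^6N^6L^6$; the extra factor of $N$ coming from the union bound enters only inside the logarithm, via the substitution $\delta_6\mapsto\delta_6/N$, which is why $\log(TNKL/\delta_6)$ appears in the hypothesis. Beyond this accounting, there is no genuine technical obstacle: the argument simply transports the hypotheses and conclusion of Lemma B.7 of~\cite{zhang2020neural} verbatim to the per-agent context set, so I would not reprove the underlying NTK concentration estimate here.
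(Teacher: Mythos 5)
Your proposal is correct and follows essentially the same route as the paper: the paper also obtains this lemma by instantiating Lemma~\ref{lemma:bound:error:NTK:matrix} (Lemma B.7 of the cited work) on each agent's $TK$ contexts and taking a union bound over the $N$ agents, which is exactly why the prefactor drops from $T^6K^6N^6L^6$ to $T^6K^6L^6$ while the factor of $N$ survives only inside the logarithm. Your bookkeeping of the width requirement and the failure probability $\delta_6/N$ matches the paper's reasoning.
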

We needed to take a union bound over all $N$ agents, which explains the factor of $N$ within the $\log$ in the lower bound on $m$ given in Lemma \ref{lemma:bound:error:NTK:matrix:local}.
Note that the required lower bound on $m$ by Lemma \ref{lemma:bound:error:NTK:matrix:local} is smaller than that of Lemma \ref{lemma:bound:error:NTK:matrix} (by a factor of $N^6$), therefore, the condition on $m$ in Lemma \ref{lemma:bound:error:NTK:matrix:local} is ignored in the conditions listed in App.~\ref{app:conditions:on:m}.

Of note, throughout the entire epoch $p$, $\widetilde{\sigma}^{\text{local}}_{t_p-1,j}(x_{t,i})$ is calculated \emph{conditioned on all the local observations of agent $j$ before iteration $t_p$}.
%Also note that because of the way in which $\alpha_t$ is chosen, we have that $\alpha_t \leq \widetilde{\sigma}^{\text{local}}_{t,i,\min} / \widetilde{\sigma}^{\text{local}}_{t,i,\max} \leq \widetilde{\sigma}^{\text{local}}_{t,j}(x_{t,j}) / \widetilde{\sigma}^{\text{local}}_{t,j}(x_{t,i})$.
%We use $\mathcal{T}_{1} \triangleq \{t_p\}_{p=2,\ldots,P}$
%to denote the collection of the indices of the first iteration $t$ after every communication round, which immediately implies that $\mathcal{T}_1 \subset [T]$.
Denote by $\mathcal{T}^{(p)}$ the iteration indices in epoch $p$: $\mathcal{T}^{(p)}=\{t_p,\ldots,t_p+E_p-1\}$.
In the proof in this section, as we have discussed in the first paragraph of Sec.~\ref{subsec:theoretical:results},
we analyze a simpler variant of our algorithm where we only set $\alpha>0$ in the first iteration after a communication round, 
%i.e., $\alpha_t > 0,\forall t\in \mathcal{T}_1$ and $\alpha_t=0,\forall t\in[T]\setminus \mathcal{T}_1$.
i.e., $\alpha > 0,\forall t\in \{t_p\}_{p\in[P]}$ and $\alpha=0,\forall t\in[T]\setminus \{t_p\}_{p\in[P]}$.
Now we are ready to derive an upper bound on the second term in \eqref{eq:upper:bound:inst:regret}, summed over all agents and all good epochs:
\begin{equation}
\begin{split}
\sum^N_{i=1} \sum_{p \in \mathcal{E}^{\text{good}}} \sum_{t\in\mathcal{T}^{(p)}} \alpha  2 \nu_{TK}  \frac{1}{N} &\sum^N_{j=1} \widetilde{\sigma}^{\text{local}}_{t_p-1,j}(x_{t,i}) \stackrel{(a)}{\leq} 2 \nu_{TK}  \frac{1}{N}  \sum^N_{i=1} \sum_{p \in [P]} \sum_{t\in\mathcal{T}^{(p)}} \alpha  \sum^N_{j=1} \widetilde{\sigma}^{\text{local}}_{t_p-1,j}(x_{t,i})\\
&\stackrel{(b)}{\leq} 2 \nu_{TK}  \frac{1}{N}  \sum^N_{i=1} \sum^N_{j=1} \sum_{p \in [P]}  \alpha \widetilde{\sigma}^{\text{local}}_{t_p-1,j}(x_{t_p,i})\\
&\stackrel{(c)}{\leq} 2 \nu_{TK}  \frac{1}{N}  \sum^N_{i=1} \sum^N_{j=1} \sum_{p \in [P]}  \widetilde{\sigma}^{\text{local}}_{t_p-1,j}(x_{t_p,j})\\
&\stackrel{(d)}{\leq} 2 \nu_{TK}  \frac{1}{N}  \sum^N_{i=1} \sum^N_{j=1} \sum^T_{t=1}  \widetilde{\sigma}^{\text{local}}_{t-1,j}(x_{t,j})\\
%&\stackrel{(e)}{\leq} 2 \nu_{TK} \sqrt{\kappa_0} \frac{1}{\sqrt{N}}  \sum^N_{i=1} \sum^N_{j=1} \sum^T_{t=1} \min\{ \widetilde{\sigma}^{\text{local}}_{t-1,j}(x_{t,j}), 1\}\\
%&\stackrel{(f)}{\leq} 2 \nu_{TK} \sqrt{\kappa_0} \frac{1}{\sqrt{N}}  \sum^N_{i=1} \sum^N_{j=1} \sqrt{ T \sum^T_{t=1} \min\{ \left(\widetilde{\sigma}^{\text{local}}_{t-1,j}(x_{t,j})\right)^2, 1\} }\\
%&\stackrel{(g)}{\leq} 2 \nu_{TK} \sqrt{\kappa_0} \frac{1}{\sqrt{N}}  \sum^N_{i=1} \sum^N_{j=1} \sqrt{ T [2\lambda \log\det(\lambda^{-1} \mathbf{K}_{T,j} + I)] }\\
%&\stackrel{(h)}{\leq} 2\sqrt{2} \nu_{TK} \sqrt{\kappa_0} \frac{1}{\sqrt{N}}  \sum^N_{i=1} \sum^N_{j=1} \sqrt{ T \lambda [\log\det(\lambda^{-1} \mathbf{H}_{j} + I) + 1] }\\
%&= 2\sqrt{2} \nu_{TK} \sqrt{\kappa_0} \sqrt{N} \sum^N_{j=1} \sqrt{ T \lambda \left[\widetilde{d}_j \log(1 + TK/\lambda) ) + 1\right] }.
\end{split}
\label{eq:good:epochs:second:term:eq:1}
\end{equation}
The inequality in step $(a)$ results from summing across all epoch $p\in[P]$ instead of only good epochs $p\in\mathcal{E}^{\text{good}}$.
%The inequality in step $(a)$ results from summing across all iterations $t\in[T]$ (or equivalently, $t\in\mathcal{T}^{(p)},p\in[P]$) instead of only those iterations in good epochs: $t\in\mathcal{T}^{(p)},p\in\mathcal{E}^{\text{good}}$.
Step $(b)$ follows since $\alpha_t=0,\forall t\in[T]\setminus \{t_p\}_{p\in[P]}$ as we discussed above, therefore, for every epoch $p$, we only need to keep the first term of $t=t_p$ in the summation of $t\in\mathcal{T}^{(p)}$.
To understand step $(c)$, recall that in the main text 
% (Sec.~\ref{sec:fn_ucb}, the paragraph "The Weight between the Two UCBs"), 
(Sec.~\ref{subsec:weight:two:ucbs}),
we have defined: $\widetilde{\sigma}^{\text{local}}_{t,i,\min} \triangleq \min_{x\in\mathcal{X}} \widetilde{\sigma}^{\text{local}}_{t,i}(x)$ and $\widetilde{\sigma}^{\text{local}}_{t,i,\max} \triangleq \max_{x\in\mathcal{X}} \widetilde{\sigma}^{\text{local}}_{t,i}(x),\forall i\in[N]$. 
Next, note that our algorithm selects $\alpha$ by: $\alpha=\min_{i\in[N]} \alpha_{t,i}$ (line 4 of Algo.~\ref{algo:server}) where $\alpha_{t,i} = \widetilde{\sigma}^{\text{local}}_{t,i,\min} / \widetilde{\sigma}^{\text{local}}_{t,i,\max}$ (line 15 of Algo.~\ref{algo:agent}) and $t=t_p-1$ since $\alpha_{t,i}$ is calculated only in the last iteration of every epoch.
As a result, we have that 
\[
\alpha =\min_{i\in[N]} \alpha_{t_p-1,i} = \min_{i\in[N]} \frac{ \widetilde{\sigma}^{\text{local}}_{t_p-1,i,\min} }{ \widetilde{\sigma}^{\text{local}}_{t_p-1,i,\max}}
 \leq \frac{\widetilde{\sigma}^{\text{local}}_{t_p-1,j,\min}}{\widetilde{\sigma}^{\text{local}}_{t_p-1,j,\max}} \leq \frac{\widetilde{\sigma}^{\text{local}}_{t_p-1,j}(x_{t_p,j}) }{ \widetilde{\sigma}^{\text{local}}_{t_p-1,j}(x_{t_p,i})},
\]
which tells us that $\alpha \widetilde{\sigma}^{\text{local}}_{t_p-1,j}(x_{t_p,i}) \leq \widetilde{\sigma}^{\text{local}}_{t_p-1,j}(x_{t_p,j})$ and hence leads to step $(c)$.
Step $(d)$ results from summing across all iterations $[T]$ instead of only the first iteration of every epoch.
% : $t\in\{t_p\}_{p\in[P]}$.

Next, we can derive an upper bound on the inner summation over $t=1,\ldots,T$ from \eqref{eq:good:epochs:second:term:eq:1}:
\begin{equation}
\begin{split}
%\sum^N_{i=1} \sum_{p \in \mathcal{E}^{\text{good}}} \sum_{t\in\mathcal{T}^{(p)}} \alpha_t  2 \nu_{TK}  \frac{1}{\sqrt{N}} &\sum^N_{j=1} \widetilde{\sigma}^{\text{local}}_{t_p-1,j}(x_{t,i}) \stackrel{(a)}{\leq} 2 \nu_{TK}  \frac{1}{\sqrt{N}}  \sum^N_{i=1} \sum_{p \in [P]} \sum_{t\in\mathcal{T}^{(p)}} \alpha_t  \sum^N_{j=1} \widetilde{\sigma}^{\text{local}}_{t_p-1,j}(x_{t,i})\\
%&\stackrel{(b)}{\leq} 2 \nu_{TK}  \frac{1}{\sqrt{N}}  \sum^N_{i=1} \sum^N_{j=1} \sum_{p \in [P]}  \alpha_t \widetilde{\sigma}^{\text{local}}_{t_p-1,j}(x_{t_p,i})\\
%&\stackrel{(c)}{\leq} 2 \nu_{TK}  \frac{1}{\sqrt{N}}  \sum^N_{i=1} \sum^N_{j=1} \sum_{p \in [P]}  \widetilde{\sigma}^{\text{local}}_{t_p-1,j}(x_{t_p,j})\\
\sum^T_{t=1}  \widetilde{\sigma}^{\text{local}}_{t-1,j}(x_{t,j}) &\stackrel{(a)}{\leq} \sqrt{\kappa_0}  \sum^T_{t=1} \min\{ \widetilde{\sigma}^{\text{local}}_{t-1,j}(x_{t,j}), 1\}\\
&\stackrel{(b)}{\leq} \sqrt{\kappa_0}  \sqrt{ T \sum^T_{t=1} \min\{ \left(\widetilde{\sigma}^{\text{local}}_{t-1,j}(x_{t,j})\right)^2, 1\} }\\
&\stackrel{(c)}{\leq}  \sqrt{\kappa_0}  \sqrt{ T [2\lambda \log\det(\lambda^{-1} \mathbf{K}_{T,j} + I)] }\\
&\stackrel{(d)}{\leq} \sqrt{2} \sqrt{\kappa_0} \sqrt{ T \lambda [\log\det(\lambda^{-1} \mathbf{H}_{j} + I) + 1] }\\
&= \sqrt{2} \sqrt{\kappa_0} \sqrt{ T \lambda \left[\widetilde{d}_j \log(1 + TK/\lambda) ) + 1\right] }.
\end{split}
\label{eq:good:epochs:second:term:eq:2}
\end{equation}
Step $(a)$ is obtained in the same way as steps $(b)$ and $(c)$ in \eqref{eq:proof:upper:bound:first:term:good:epochs} (App.~\ref{app:subsubsec:upper:bound:sum:of:first:term:good:epochs}), i.e., we have made use of $\widetilde{\sigma}^{\text{local}}_{t_p-1,j}(x) \leq \sqrt{\kappa_0}$ and assumed that $\kappa_0\geq1$. Again note that if $\kappa_0 < 1$, then the proof still goes through since $\widetilde{\sigma}^{\text{local}}_{t-1,j}(x_{t,j}) \leq \min\{ \widetilde{\sigma}^{\text{local}}_{t-1,j}(x_{t,j}), \sqrt{\kappa_0}\} \leq \min\{ \widetilde{\sigma}^{\text{local}}_{t-1,j}(x_{t,j}), 1\}$, after which the only modification we need to make to the equation above is to remove the dependency on multiplicative term of $\sqrt{\kappa_0}$.
Step $(b)$ makes use of the Cauchy–Schwarz inequality.
Step $(c)$, similar to step $(e)$ of \eqref{eq:proof:upper:bound:first:term:good:epochs}, is derived following the proof of Lemma 4.8 of~\cite{zhang2020neural} (in Appendix B.7 of \cite{zhang2020neural}).
Step $(d)$ follows from Lemma \ref{lemma:bound:error:NTK:matrix:local} and hence holds with probability of at least $1-\delta_6$.
In the last equality, we have simply plugged in the definition of $\widetilde{d}_j$ (Sec.~\ref{sec:background}).

Now we can plug \eqref{eq:good:epochs:second:term:eq:2} into \eqref{eq:good:epochs:second:term:eq:1} to obtain
\begin{equation}
\begin{split}
\sum^N_{i=1} \sum_{p \in \mathcal{E}^{\text{good}}} \sum_{t\in\mathcal{T}^{(p)}} &\alpha 2 \nu_{TK}  \frac{1}{N} \sum^N_{j=1} \widetilde{\sigma}^{\text{local}}_{t_p-1,j}(x_{t,i}) \\
&\leq 2 \nu_{TK}  \frac{1}{N}  \sum^N_{i=1} \sum^N_{j=1} \sqrt{2} \sqrt{\kappa_0} \sqrt{ T \lambda \left[\widetilde{d}_j \log(1 + TK/\lambda) ) + 1\right] }\\
&= 2\sqrt{2} \nu_{TK} \sqrt{\kappa_0} \sum^N_{j=1} \sqrt{ T \lambda \left[\widetilde{d}_j \log(1 + TK/\lambda) ) + 1\right] }.
\end{split}
\label{eq:final:upper:bound:on:second:term:eq:13}
\end{equation}

\subsubsection{Putting Things Together}
\label{app:subsubsec:good:epochs:putting:things:together}
Finally, recall that our derived upper bound on $r_{t,i}$ in \eqref{eq:upper:bound:inst:regret} contains three terms (the third term is simply an error term), and now we can make use of our derived upper bound on the first term (App.~\ref{app:subsubsec:upper:bound:sum:of:first:term:good:epochs}) and the second term (App.~\ref{app:subsubsec:upper:bound:sum:of:second:term:good:epochs}), summed over all agents and all good epochs, to obtain an upper bound on the total regrets incurred in all good epochs:
\begin{equation}
\begin{split}
R_T^{\text{good}} &= \sum^N_{i=1} \sum_{t \in \mathcal{T}^{\text{good}}} r_{t,i} \\
&\leq 2\sqrt{e}\nu_{TKN} \sqrt{\kappa_0} \sqrt{TN\lambda \left[\widetilde{d} \log(1+TNK/\lambda) + 1\right] } + \\
&\quad 2\sqrt{2} \nu_{TK} \sqrt{\kappa_0} \sum^N_{j=1} \sqrt{ T \lambda \left[\widetilde{d}_j \log(1 + TK/\lambda) ) + 1\right] } + TN\varepsilon_{\text{linear}}(m,T)\\
&=\widetilde{O}\Big(\sqrt{\widetilde{d}} \sqrt{TN\widetilde{d}} + \sqrt{\widetilde{d}}_{\max} N \sqrt{T \widetilde{d}_{\max}} + TN\varepsilon_{\text{linear}}(m,T)
\Big)\\
&=\widetilde{O}\left(\widetilde{d} \sqrt{TN} + \widetilde{d}_{\max} N \sqrt{T} + TN\varepsilon_{\text{linear}}(m,T)
\right).
\end{split}
\label{eq:final:regret:upper:bound:in:good:epochs}
\end{equation}
In the second last equality, we have used $\nu_{TKN}=\widetilde{O}(\sqrt{\widetilde{d}})$ and $\nu_{TK}=\widetilde{O}(\sqrt{\widetilde{d}_{\max}})$.

\subsection{Regret Upper Bound for Bad Epochs}
\label{subsec:regret:bad:epochs}
In this section, we derive an upper bound on the total regrets from all bad epochs.
% for FN-UCB-Lite, which we show is tighter than that of FN-UCB shown above.
%Here we assume that $\text{UCB}^{a}_{t,i}$ holds.
To begin with, we firstly derive an upper bound on the total regrets of any bad epoch $p$ denoted as $R^{[p]}$:
%, and then use the fact that there are not so many bad epochs.
\begin{equation}
\begin{split}
R^{[p]} = \sum^N_{i=1}\sum^{t_p + E_p - 1}_{t=t_p} r_{t,i} &\stackrel{(a)}{\leq}  \sum^N_{i=1}\Big(2 + 2 + \sum^{t_p + E_p - 2}_{t=t_p+1}r_{t,i}\Big)\\
&\stackrel{(b)}{\leq} \sum^N_{i=1}\Big[4 + \sum^{t_p + E_p - 2}_{t=t_p+1} \big( \text{UCB}^{a}_{t,i}(x^*_{t,i}) - h(x_{t,i}) \big)\Big]\\
&\stackrel{(c)}{\leq} \sum^N_{i=1}\Big[4 + \sum^{t_p + E_p - 2}_{t=t_p+1} \big( \text{UCB}^{a}_{t,i}(x_{t,i}) - h(x_{t,i}) \big)\Big]\\
&\stackrel{(d)}{\leq} \sum^N_{i=1}\Big[4 + \sum^{t_p + E_p - 2}_{t=t_p+1} 2\nu_{TKN} \sqrt{\lambda} \norm{g(x_{t,i};\theta_0) / \sqrt{m}}_{\overline{V}_{t,i}^{-1}} \Big]\\
%&\leq \sum^N_{i=1}\left(4 + \sum^{t_p + E_p - 2}_{t=t_p} 2\nu_{TN} \min\{  \sqrt{\lambda} \norm{g(x_{t,i};\theta_0) / \sqrt{m}}_{\overline{V}_{t,i}^{-1}}, 1 \}\right)\\
&\stackrel{(e)}{\leq} \sum^N_{i=1}\Big(4 + 2\nu_{TKN} \sqrt{\kappa_0} \sum^{t_p + E_p - 2}_{t=t_p} \min\{ \sqrt{\lambda} \norm{g(x_{t,i};\theta_0) / \sqrt{m}}_{\overline{V}_{t,i}^{-1}}, 1 \}\Big)\\
&\stackrel{(f)}{\leq} \sum^N_{i=1}\Big(4 + 2\nu_{TKN} \sqrt{\kappa_0\lambda} \sum^{t_p + E_p - 2}_{t=t_p} \min\{  \norm{g(x_{t,i};\theta_0) / \sqrt{m}}_{\overline{V}_{t,i}^{-1}}, 1 \}\Big)
\end{split}
\label{eq:regret:bad:epochs:eq:1}
\end{equation}
Step $(a)$ follows from simply upper-bounding the regrets of the first and last iteration within this epoch by $2$.
Step $(b)$ makes use of the validity of $\text{UCB}^{a}_{t,i}$ (Lemma~\ref{lemma:confidence:bound:ucb:2}).
Step $(c)$ follows because $\alpha=0,\forall t\in[T]\setminus \{t_p\}_{p\in[P]}$ (i.e., we set $\alpha=0$ except for the first iteration of all epochs), which implies that after the first iteration of an epoch, $x_{t,i}$ is selected by only maximizing $\text{UCB}^{a}_{t,i}$ (line 7 of Algo.~\ref{algo:agent}).
Step $(d)$ again uses Lemma~\ref{lemma:confidence:bound:ucb:2}, as well as the expression of $\text{UCB}^{a}_{t,i}$.
Step $(e)$ is obtained in the same way as steps $(b)$ and $(c)$ in \eqref{eq:proof:upper:bound:first:term:good:epochs} (App.~\ref{app:subsubsec:upper:bound:sum:of:first:term:good:epochs}).
Specifically, since $\langle g(x;\theta_0), g(x;\theta_0) \rangle \leq \kappa_0$ (App.~\ref{app:subsubsec:upper:bound:sum:of:first:term:good:epochs}), therefore, $\sqrt{\lambda} \norm{g(x_{t,i};\theta_0) / \sqrt{m}}_{\overline{V}_{t,i}^{-1}} \leq \sqrt{\kappa_0}$, 
which can be proved by following the same steps as \eqref{eq:proof:using:gp:post:var}.
%which can be verified using the expressions of the GP posterior variance. 
As a result, if we assume that $\kappa \geq 1$, then $\sqrt{\lambda} \norm{g(x_{t,i};\theta_0) / \sqrt{m}}_{\overline{V}_{t,i}^{-1}} = \min \{ \sqrt{\lambda} \norm{g(x_{t,i};\theta_0) / \sqrt{m}}_{\overline{V}_{t,i}^{-1}},  \sqrt{\kappa_0} \} \leq \sqrt{\kappa_0} \min \{ \sqrt{\lambda} \norm{g(x_{t,i};\theta_0) / \sqrt{m}}_{\overline{V}_{t,i}^{-1}},  1\}$; in the other case where $\kappa_0 < 1$, then $\sqrt{\lambda} \norm{g(x_{t,i};\theta_0) / \sqrt{m}}_{\overline{V}_{t,i}^{-1}} = \min \{ \sqrt{\lambda} \norm{g(x_{t,i};\theta_0) / \sqrt{m}}_{\overline{V}_{t,i}^{-1}},  \sqrt{\kappa_0} \} \leq \min \{ \sqrt{\lambda} \norm{g(x_{t,i};\theta_0) / \sqrt{m}}_{\overline{V}_{t,i}^{-1}},  1\}$.
Here we have assumed $\kappa_0 \geq 1$ for simplicity, since when $\kappa_0<1$, the equation above still holds except that we can remove the dependency on $\sqrt{\kappa_0}$.
Step $(f)$ follows because $\lambda =1+2/T >1$.

Next, we derive an upper bound on the inner summation in \eqref{eq:regret:bad:epochs:eq:1}.
\begin{equation}
\begin{split}
&\sum^{t_p + E_p - 2}_{t=t_p} \min\{  \norm{g(x_{t,i};\theta_0) / \sqrt{m}}_{\overline{V}_{t,i}^{-1}}, 1 \}\\
&\stackrel{(a)}{\leq}  \sqrt{ (E_p-1) \sum^{t_p + E_p - 2}_{t=t_p} \min\{  \norm{g(x_{t,i};\theta_0) / \sqrt{m}}_{\overline{V}_{t,i}^{-1}}^2, 1 \} } \\
&\stackrel{(b)}{\leq} \sqrt{ (E_p-1) 2 \log\frac{\text{det}\overline{V}_{t_p+E_p-2,i} }{\text{det}\overline{V}_{t_p,i} } } \\
&\stackrel{(c)}{\leq} \sqrt{ 2 ((t_p+E_p-2) - t_{\text{last}}) \log\frac{\text{det}V_{t_p+E_p-2,i} }{\text{det}V_{\text{last}} } } \\
&\stackrel{(d)}{\leq} \sqrt{2 D}.
%&= \left(4 + 2\nu_{TKN} \sqrt{2 \kappa_0\lambda D} \right) N\\
\end{split}
\label{eq:regret:bad:epochs:eq:2}
\end{equation}
Step $(a)$ follows from the Cauchy–Schwarz inequality.
Step $(b)$ makes use of Lemma 11 of~\cite{abbasi2011improved}.
In step $(c)$, we used the notations of $t_{\text{last}}=t_p-1$, $\overline{V}_{t_p,i}=V_{\text{last}}$ (this is because in the first iteration $t_p$ of an epoch, $W_{\text{new},i}=\mathbf{0}_{p_0\times p_0}$ and hence $\overline{V}_{t_p,i}=V_{\text{last}}=W_{\text{sync}}+\lambda I$), and $V_{t_p+E_p-2,i} = \overline{V}_{t_p+E_p-2,i} + g(x_{t,i};\theta_0) g(x_{t,i};\theta_0)^{\top} / m$, and also used $\text{det}\overline{V}_{t_p+E_p-2,i} \leq \text{det}V_{t_p+E_p-2,i}$.
To understand step $(d)$, note that the term in step $(c)$: $((t_p+E_p-2) - t_{\text{last}}) \log\frac{\text{det}V_{t_p+E_p-2,i} }{\text{det}V_{\text{last}} }$ is exactly the criterion we use to check whether to start a communication round in iteration $t=t_p+E_p-2$ (line 11 of Algo.~\ref{algo:agent}).
Since $t=t_p+E_p-2$ is not the last iteration in this epoch (i.e., we did not start a communication round after checking this criterion in iteration $t=t_p+E_p-2$), therefore, this criterion is not satisfied in iteration $t=t_p+E_p-2$, i.e., $((t_p+E_p-2) - t_{\text{last}}) \log\frac{\text{det}V_{t_p+E_p-2,i} }{\text{det}V_{\text{last}} } \leq D$, which explains step $(d)$.

Next, we can plug \eqref{eq:regret:bad:epochs:eq:2} into \eqref{eq:regret:bad:epochs:eq:1} to obtain:
\begin{equation}
\begin{split}
R^{[p]} = \sum^N_{i=1}\sum^{t_p + E_p - 1}_{t=t_p} r_{t,i} \leq \sum^N_{i=1}\left(4 + 2\nu_{TKN} \sqrt{\kappa_0\lambda}   \sqrt{2D}  \right) = \left(4 + 2\nu_{TKN} \sqrt{2 \kappa_0\lambda D} \right) N,
\end{split}
\label{eq:regret:bad:epochs:eq:3}
\end{equation}
which gives an upper bound on the total regret from \emph{any} bad epoch.
%The first inequality follows from simply upper-bounding the regrets of the first and last iteration within this epoch by $2$.
%The second inequality makes use of the validity of $\text{UCB}^{a}_{t,i}$ (Lemma~\ref{lemma:confidence:bound:ucb:2}).
%In the third inequality, we have made use our algorithmic choice that we let $\alpha_t=0$ except for the first iteration after a synchronization round, so that we don't need to worry about the impact of $\text{UCB}^{b}_{t,i}$ because $x_{t,i}$ is selected by simply maximizing $\text{UCB}^{a}_{t,i}$.
%The seventh inequality makes use of Lemma 11 of~\cite{abbasi2011improved}.
%In the eighth inequality, we replaced some notations: $t_{\text{last}}=t_p-1$ and $\overline{V}_{t_p,i}=V_{\text{last}}$, and used $\text{det}\overline{V}_{t_p+E_p-2,i} \leq \text{det}V_{t_p+E_p-2,i}$.
%Of note, the term in the eighth line: $((t_p+E_p-2) - t_{\text{last}}) \log\frac{\text{det}V_{t_p+E_p-2,i} }{\text{det}V_{\text{last}} }$ is exactly the criterion we use to check whether to start a communication round in iteration $t=t_p+E_p-2$, and because this is not the last iteration in this epoch (i.e., we did not start a communication round after checking this criterion in this iteration), we have that this term is $<D$. This explains the next inequality.
%We have used $\lambda =1+2/T >1$.
Now recall that as we have discussed in App.~\ref{subsubsec:definition:good:bad:epochs}, there are no more than $\overline{R}$ bad epochs (with probability of at least $1-\delta_1$).
Therefore, the total regret of \emph{all} bad epochs can be upper-bounded by:
\begin{equation}
\begin{split}
R_T^{\text{bad}} &\leq \overline{R} \left(4 + 2\nu_{TKN} \sqrt{2 \kappa_0\lambda D} \right) N\\
&\leq \left( \widetilde{d}\log(1+TKN/\lambda) + 1\right) \left(4 + 2\nu_{TKN} \sqrt{2 \kappa_0\lambda D} \right) N\\
&=\widetilde{O}\Big(\widetilde{d} \sqrt{\widetilde{d}} \sqrt{D} N  \Big)\\
&=\widetilde{O}\Big((\widetilde{d})^{3/2} \sqrt{D} N  \Big).
\end{split}
\label{eq:final:regret:upper:bound:in:bad:epochs}
\end{equation}
In the second last equality, we have used $\nu_{TKN}=\widetilde{O}(\sqrt{\widetilde{d}})$.
%This upper bound holds with probability of at least $1-\delta_1$, because \eqref{eq:sum:of:log:det} holds with probability of at least $1-\delta_1$.
%\begin{equation}
%\begin{split}
%R_T^{\text{bad}} &\leq R \left(4 + 2\nu_{TN} \sqrt{ 2 \lambda D} \right) N\\
%&\leq \left(C_1 \gamma_{TN} + TN \varepsilon_{\text{NTK}}(m,TN) + 1\right) \left(4 + 2\nu_{TN} \sqrt{ 2 \lambda D} \right) N\\
%&= \mathcal{O}\left( \nu_{TN} \sqrt{ D}  N \left(\gamma_{TN} + TN \varepsilon_{\text{NTK}}(m, TN)\right) \right)\\
%&= \mathcal{O}\left( \sqrt{ D} N \sqrt{\gamma_{TN}}  \left(\gamma_{TN} + TN \varepsilon_{\text{NTK}}(m, TN)\right) \right)\\
%\end{split}
%\end{equation}
%In the last step, we have used $\nu_{TN}=\mathcal{O}(\sqrt{\gamma_{TN}})$.
By choosing $D=\mathcal{O}(\frac{T}{N \widetilde{d}})$ (line 1 of Algo.~\ref{algo:agent}), we can further express the above upper bound on the total regrets from all bad epochs as:
\begin{equation}
\begin{split}
R_T^{\text{bad}} &= \mathcal{O}\Big( \sqrt{\frac{T}{N \widetilde{d}}}    (\widetilde{d})^{3/2} N \Big)\\
&=\mathcal{O}\left( \widetilde{d}\sqrt{TN} \right).
\end{split}
\label{eq:final:final:regret:upper:bound:in:bad:epochs}
\end{equation}
%\begin{equation}
%\begin{split}
%R_T^{\text{bad}} &= \mathcal{O}\left( \sqrt{\frac{T}{N \gamma_{TN}}} N \sqrt{\gamma_{TN}}  \left(\gamma_{TN} + TN \varepsilon_{\text{NTK}}(m, TN)\right) \right)\\
%&=\mathcal{O}\left( \sqrt{TN}  \left(\gamma_{TN} + TN \varepsilon_{\text{NTK}}(m, TN)\right) \right).
%\end{split}
%\end{equation}

\subsection{Final Regret Upper Bound}
\label{subsec:regret:total}
Here we derive an upper bound on the total cumulative regret by adding up the regrets resulting from all good epochs (App.~\ref{subsec:regret:good:epochs}) and all bad epochs (App.~\ref{subsec:regret:bad:epochs}):
\begin{equation}
\begin{split}
R_T &= R^{\text{good}}_T+R^{\text{bad}}_T \\
&= \widetilde{O}\Big(\widetilde{d} \sqrt{TN} + \widetilde{d}_{\max} N \sqrt{T} + TN\varepsilon_{\text{linear}}(m,T) + \widetilde{d}\sqrt{TN}
\Big)\\
&=\widetilde{O}\Big(\widetilde{d} \sqrt{TN} + \widetilde{d}_{\max} N \sqrt{T} + TN\varepsilon_{\text{linear}}(m,T)
\Big).
\end{split}
\label{app:eq:final:regret:upper:bound:eq:1}
\end{equation}
This regret upper bound holds with probability of at least $1-\delta_1-\delta_2 - \delta_3 - \delta_4 - \delta_5-\delta_6$.
We let $\delta_3=\delta_4=\delta/3$, which leads to the expressions of $\nu_{TKN}$ and $\nu_{TK}$ given in the main paper (Sec.~\ref{sec:fn_ucb}).
We let $\delta_1=\delta_2=\delta_5=\delta_6=\delta/12$, and this will only introduce an additional factor of $\log 12$ in the first three conditions on $m$ in App.~\ref{app:conditions:on:m} which can be absorbed by the constant $C$.

Next, the last term from the upper bound in \eqref{app:eq:final:regret:upper:bound:eq:1} can be further written as:
\begin{equation}
\begin{split}
TN&\varepsilon_{\text{linear}}(m,T) = TN\Big(\varepsilon_{\text{linear}, 1}(m,T) + \varepsilon_{\text{linear}, 2}(m,T) + \varepsilon_{\eta,J}
\Big)\\
&=TN C_1 T^{2/3} m^{-1/6} \lambda^{-2/3} L^3\sqrt{\log m}  + TN C_3 m^{-1/6} \sqrt{\log m} L^4 T^{5/3} \lambda^{-5/3} (1+\sqrt{T/\lambda}) \\
&\quad + TN C_2(1-\eta m \lambda)^{J}\sqrt{TL/\lambda}.
\end{split}
\label{eq:error:term:final:regret:bound}
\end{equation}
It can be easily verified that as long as $m(\log m)^{-3} \geq 3^6 C_1^6 T^{10}N^6 \lambda^{-4} L^{18}$ and $m(\log m)^{-3} \geq 3^6 C_3^6 T^{16} N^6 L^{24} \lambda^{-10}(1+\sqrt{T/\lambda})^6$ (which are ensured by conditions 5 and 6 on $m$ in App.~\ref{app:conditions:on:m}), then the first and second terms in~\eqref{eq:error:term:final:regret:bound} can both be upper-bounded by $1/3$.
Moreover, if the conditions on $\eta$ and $J$ presented in App.~\ref{app:conditions:on:m} are satisfied, i.e., if we choose the learning rate as $\eta=C_4(m\lambda + mTL)^{-1}$ in which $C_4>0$ is an absolute constant such that $C_4 \leq 1+TL$, and choose $J=\frac{1}{C_4}\Big(1+\frac{TL}{\lambda}\Big) \log\Big( \frac{1}{3C_2 N} \sqrt{\frac{\lambda}{T^3 L}} \Big)=\widetilde{O}\left(TL/(\lambda C_4) \right)$, then the third term in~\eqref{eq:error:term:final:regret:bound} can also be upper-bounded by $1/3$.

As a result, the last term from the upper bound in \eqref{app:eq:final:regret:upper:bound:eq:1} can be upper-bounded by $1$, and hence the regret upper bound becomes:
\begin{equation}
\begin{split}
R_T = \widetilde{O}\Big(\widetilde{d} \sqrt{TN} + \widetilde{d}_{\max} N \sqrt{T}
\Big).
\end{split}
\label{eq:final:regret:upper:bound:added:rebuttal}
\end{equation}

\paragraph{Worst-Case Regret Upper Bound in Terms of the Maximum Information Gain $\gamma$.}
Next, we perform some further analysis of the final regret upper bound derived above, which allows us to inspect the order of growth of our regret upper bound in the worst-case scenario (i.e., without assuming that the effective dimensions are upper-bounded by constants).
We have defined in Sec.~\ref{sec:background} that $\widetilde{d} \leq  2\gamma_{TKN} / \log(1+TKN/\lambda)$, $\widetilde{d}_i \leq 2\gamma_{TK} / \log(1+TK/\lambda),\forall i\in[N]$ and $\widetilde{d}_{\max}=\max_{i\in[N]}\widetilde{d}$.
As a result, in our derivations in \eqref{eq:final:regret:upper:bound:in:good:epochs} and \eqref{eq:final:regret:upper:bound:in:bad:epochs}, we can replace $\widetilde{d} \log(1+TKN/\lambda)$ by $2\gamma_{TKN}$ and replace $\widetilde{d}_{j} \log(1+TK/\lambda)$ by $2\gamma_{TK}$, after which the regret upper bound becomes
\begin{equation}
\begin{split}
R_T = \widetilde{O}\Big( \gamma_{TKN} \sqrt{TN} + \gamma_{TK} N \sqrt{T}
\Big).
\end{split}
\end{equation}
% \begin{equation}
% \begin{split}
% R_T = \widetilde{O}\Big( \gamma_{TKN} \sqrt{TN} + \gamma_{TK} N^{3/2} \sqrt{T}
% \Big).
% \end{split}
% \end{equation}
The growth rate of the maximum information gain of NTK has been characterized by previous works: $\gamma_{T}=\widetilde{\mathcal{O}}(T^{\frac{d-1}{d}})$~\citep{kassraie2021neural,vakili2021uniform}.
This implies that our regret upper bound can be further expressed as 
\[
R_T=\widetilde{O}\left(K^{\frac{(d-1)}{d}} (TN)^{\frac{3d-2}{2d}} + K^{\frac{(d-1)}{d}} T^{\frac{3d-2}{2d}} N \right) = \widetilde{O}\left(K^{\frac{(d-1)}{d}}T^{\frac{3d-2}{2d}} N \right).
\]

\subsection{Regret Upper Bound for \texttt{FN-UCB} (\texttt{Less Comm.})}
\label{app:proof:regret:fn:ucb:less:comm}
Here we explain how the proof above can be modified to derive a regret upper bound \texttt{FN-UCB} (\texttt{Less Comm.}).
To begin with, note that in terms of the regret analysis, the only difference between \texttt{FN-UCB} (\texttt{Less Comm.}) and \texttt{FN-UCB} is that $\text{UCB}^{b}_{t,i}$ of every agent $i$ is now modified to be: $\text{UCB}^{b}_{t,i}(x)=f(x;\theta_{\text{sync,NN}}) + \nu_{TK} \sqrt{\lambda} \norm{g(x;\theta_0) / \sqrt{m}}_{V_{\text{sync,NN}}^{-1}}$, in which the matrix $V_{\text{sync,NN}}^{-1}$ is obtained by: $V_{\text{sync,NN}}^{-1} = \frac{1}{N}\sum^{N}_{i=1} (V^{\text{local}}_{t,i})^{-1}$.
Note that every time the matrix $V_{\text{sync,NN}}^{-1}$ is calculated, we have that $t=t_p-1$.

\textbf{Firstly}, we prove that the modified $\text{UCB}^{b}_{t,i}$ is also a valid high-probability upper bound on the reward function $f$. To achieve this, all we need to do is to add a few steps to \eqref{eq:last:eq:ucb:1} in \textbf{Step 3} of the proof of.
Specifically, we can further analyze \eqref{eq:last:eq:ucb:1} by:
\begin{equation}
\begin{split}
|f(x;&\theta_{\text{sync,NN}}) - h(x)| \\
% &\leq |f(x;\theta_{\text{sync,NN}}) - \frac{1}{N}\sum^N_{i=1} \langle g(x;\theta_0) / \sqrt{m}, \theta_{t,i}^{\text{local}} \rangle
% + \frac{1}{N}\sum^N_{i=1} \langle g(x;\theta_0) / \sqrt{m}, \theta_{t,i}^{\text{local}} \rangle
% - h(x)|\\
% &\leq \frac{1}{N}\sum^N_{i=1} | \langle g(x;\theta_0) / \sqrt{m}, \thetaß†_{t,i}^{\text{local}} \rangle - h(x)| + \varepsilon_{\text{linear}}(m,T)\\
&\leq \frac{1}{N}\sum^N_{i=1} \nu_{TK} \sqrt{\lambda} \norm{g(x;\theta_0) / \sqrt{m}}_{(V^{\text{local}}_{i})^{-1}} + \varepsilon_{\text{linear}}(m,T)\\
&=  \nu_{TK} \frac{1}{N}\sum^N_{i=1} \sqrt{ \lambda g(x;\theta_0)^{\top} (V^{\text{local}}_{i})^{-1} g(x;\theta_0) / m} + \varepsilon_{\text{linear}}(m,T)\\
&\leq \nu_{TK}  \sqrt{ \frac{1}{N}\sum^N_{i=1} \lambda g(x;\theta_0)^{\top} (V^{\text{local}}_{i})^{-1} g(x;\theta_0) / m} + \varepsilon_{\text{linear}}(m,T)\\
&= \nu_{TK}  \sqrt{  \lambda g(x;\theta_0)^{\top} \left( \frac{1}{N}\sum^N_{i=1} (V^{\text{local}}_{i})^{-1} \right) g(x;\theta_0) / m} + \varepsilon_{\text{linear}}(m,T)\\
&= \nu_{TK}  \sqrt{  \lambda g(x;\theta_0)^{\top} \left( V_{\text{sync,NN}}^{-1} \right) g(x;\theta_0) / m} + \varepsilon_{\text{linear}}(m,T)\\
&= \nu_{TK}  \sqrt{\lambda} \norm{g(x;\theta_0) / \sqrt{m}}_{V_{\text{sync,NN}}^{-1}} + \varepsilon_{\text{linear}}(m,T).
\end{split}
\label{eq:last:eq:ucb:1:new:less:comm}
\end{equation}
The first inequality directly follows from \eqref{eq:last:eq:ucb:1}, and the second inequality results from the concavity of the square root function.
In the second last equality, we have plugged in the definition of $V_{\text{sync,NN}}^{-1} = \frac{1}{N}\sum^{N}_{i=1} (V^{\text{local}}_{t,i})^{-1}$.
% As a results, \eqref{eq:last:eq:ucb:1} holds with probability of at least $1-\delta_4 - \delta_5$, in which the error probabilities come from Equation~\eqref{eq:agg:nn:is:close:to:gp:mean} ($\delta_5$) and Lemma~\ref{lemma:confidence:bound:ucb:1:local} ($\delta_4$).
% In other words, Lemma~\ref{lemma:confidence:bound:ucb:1} (i.e., the validity of $\text{UCB}^{b}_{t,i}$) holds with probability of at least $1-\delta_4 - \delta_5$.
As a result, Lemma \ref{lemma:confidence:bound:ucb:1} which guarantees the validity of $\text{UCB}^{b}_{t,i}$ can be modified to be:
\begin{equation}
|h(x) - f(x;\theta_{\text{sync,NN}})| \leq  \nu_{TK}  \sqrt{\lambda} \norm{g(x;\theta_0) / \sqrt{m}}_{V_{\text{sync,NN}}^{-1}} + \varepsilon_{\text{linear}}(m,T), \forall x\in \mathcal{X}_{t,i}.
\label{eq:modified:ucb:b}
\end{equation}

\textbf{Secondly}, we will need the following auxiliary inequality 
% Next, we also need the following auxiliary result 
for agent $i$ and iteration $t$ in a good epoch $p \in \mathcal{E}^{\text{good}}$:
\begin{equation}
\begin{split}
\sqrt{\lambda} \norm{g(x_{t,i};\theta_0) / \sqrt{m}}_{V_{\text{sync,NN}}^{-1}} &= \sqrt{ \lambda g(x_{t,i};\theta_0)^{\top} V_{\text{sync,NN}}^{-1} g(x_{t,i};\theta_0) / m }\\
& = \sqrt{ \lambda g(x_{t,i};\theta_0)^{\top} \Big( \frac{1}{N}\sum^N_{j=1} (V^{\text{local}}_{j})^{-1} \Big) g(x_{t,i};\theta_0) / m }\\
& = \sqrt{ \frac{1}{N}\sum^N_{j=1} \lambda g(x_{t,i};\theta_0)^{\top}  (V^{\text{local}}_{j})^{-1} g(x_{t,i};\theta_0) / m }\\
& \leq \frac{1}{\sqrt{N}} \sum^N_{j=1} \sqrt{\lambda g(x_{t,i};\theta_0)^{\top}  (V^{\text{local}}_{j})^{-1} g(x_{t,i};\theta_0) / m }\\
&\leq \frac{1}{\sqrt{N}} \sum^N_{j=1} \sqrt{\lambda} \norm{g(x_{t,i};\theta_0) / \sqrt{m}}_{(V^{\text{local}}_{j})^{-1}}.
\end{split}
\label{eq:good:epoch:unpack:V:sync:new:less:comm}
\end{equation}
The first inequality is because $\sqrt{a+b} \leq \sqrt{a} + \sqrt{b}$. 
% Note that in the equation above, $V^{\text{local}}_{t_p,j}$ is indexed by $t_p$ because in epoch $p$, every $V^{\text{local}}_{t_p,j}$ used in the aggregation to obtain $V_{\text{sync,NN}}^{-1}$ is calculated using agent $j$'s local observations before iteration $t_p$, i.e., before the first iteration of epoch $p$.

\textbf{Thirdly}, we need to modify the proof of the regret upper bound for good epochs (App.~\ref{subsec:regret:good:epochs}).
Specifically, we can derive an upper bound on the instantaneous regret $r_{t,i} = h(x^*_{t,i}) - h(x_{t,i})$ for agent $i$ and iteration $t$ in a good epoch $p\in\mathcal{E}^{\text{good}}$ (in a similar way to \eqref{eq:upper:bound:inst:regret}):
\begin{equation}
\begin{split}
r_{t,i} &= h(x^*_{t,i}) - h(x_{t,i}) \\
% &= \alpha_t h(x^*_{t,i}) + (1-\alpha_t) h(x^*_{t,i}) - h(x_{t,i})\\
% &\leq \alpha_t \text{UCB}^{b}_{t,i}(x^*_{t,i}) + \alpha_t \varepsilon_{\text{linear}}(m,T) + (1-\alpha_t) \text{UCB}^{a}_{t,i}(x^*_{t,i}) - h(x_{t,i})\\
% &\leq \alpha_t \text{UCB}^{b}_{t,i}(x_{t,i}) + (1-\alpha_t) \text{UCB}^{a}_{t,i}(x_{t,i}) + \alpha_t \varepsilon_{\text{linear}}(m,T) - h(x_{t,i})\\
% &= \alpha_t \left( \text{UCB}^{b}_{t,i}(x_{t,i}) - h(x_{t,i}) \right) + (1-\alpha_t) \left( \text{UCB}^{a}_{t,i}(x_{t,i}) - h(x_{t,i})\right) + \alpha_t \varepsilon_{\text{linear}}(m,T)\\
&\leq \alpha \Big(  2 \nu_{TK}  \sqrt{\lambda} \norm{g(x_{t,i};\theta_0) / \sqrt{m}}_{V_{\text{sync,NN}}^{-1}} + \varepsilon_{\text{linear}}(m,T) \Big) + \\
&\quad (1-\alpha) \Big( 2\nu_{TKN} \sqrt{\lambda} \norm{g(x_{t,i};\theta_0) / \sqrt{m}}_{\overline{V}_{t,i}^{-1}}\Big) + \alpha \varepsilon_{\text{linear}}(m,T)\\
&\leq \alpha \Big(  2 \nu_{TK}  \frac{1}{\sqrt{N}} \sum^N_{j=1} \sqrt{\lambda} \norm{g(x_{t,i};\theta_0) / \sqrt{m}}_{(V^{\text{local}}_{j})^{-1}} + \varepsilon_{\text{linear}}(m,T) \Big) + \\
&\quad (1-\alpha) \Big( 2\nu_{TKN} \sqrt{e \lambda} \norm{g(x_{t,i};\theta_0) / \sqrt{m}}_{\widetilde{V}_{t,i}^{-1}}\Big) + \alpha \varepsilon_{\text{linear}}(m,T)\\
&= \alpha  2 \nu_{TK}  \frac{1}{\sqrt{N}} \sum^N_{j=1} \sqrt{\lambda} \norm{g(x_{t,i};\theta_0) / \sqrt{m}}_{(V^{\text{local}}_{j})^{-1}} + \\
&\quad (1-\alpha)  2\nu_{TKN} \sqrt{e \lambda} \norm{g(x_{t,i};\theta_0) / \sqrt{m}}_{\widetilde{V}_{t,i}^{-1}} + 2 \alpha \varepsilon_{\text{linear}}(m,T)\\
%&\triangleq \alpha_t  2 \nu_{TK}  \frac{1}{\sqrt{N}} \sum^N_{j=1} \overline{\sigma}^{\text{local}}_{t_p-1,j}(x_{t,i}) + (1-\alpha_t)  2\nu_{TKN} \sqrt{e} \widetilde{\sigma}_{t,i}(x_{t,i}) + 2 \alpha_t \varepsilon_{\text{linear}}(m,T).
&\triangleq (1-\alpha)  2\nu_{TKN} \sqrt{e} \widetilde{\sigma}_{t,i}(x_{t,i}) + \alpha  2 \nu_{TK}  \frac{1}{\sqrt{N}} \sum^N_{j=1} \widetilde{\sigma}^{\text{local}}_{t_p-1,j}(x_{t,i}) + 2 \alpha \varepsilon_{\text{linear}}(m,T).
\end{split}
\label{eq:upper:bound:inst:regret:new:less:comm}
\end{equation}
In the first inequality, we have made use of \eqref{eq:modified:ucb:b} which ensures the validity of the modified $\text{UCB}^{b}_{t,i}$ as a high probability upper bound on $h$. The second inequality follows from \eqref{eq:good:epoch:unpack:V:sync:new:less:comm}.
In the last equality, we have defined $\widetilde{\sigma}^{\text{local}}_{t_p-1,j}(x_{t,i})$ in the same way as \eqref{eq:upper:bound:inst:regret}.
The steps regarding the term involving $(1-\alpha)$ are the same as those from \eqref{eq:upper:bound:inst:regret}.
As a result, the only change we have made to instantaneous regret upper bound from \eqref{eq:upper:bound:inst:regret} is that in the second term, we have replaced $\frac{1}{N}$ by $\frac{1}{\sqrt{N}}$.
Further propagating this change through the proof for the regret upper bound for all good epochs (App.~\ref{app:subsubsec:upper:bound:sum:of:second:term:good:epochs} and App.~\ref{app:subsubsec:good:epochs:putting:things:together}), we have that:
% \begin{equation}
% \begin{split}
% \sum^N_{i=1} \sum_{p \in \mathcal{E}^{\text{good}}} \sum_{t\in\mathcal{T}^{(p)}} &\alpha 2 \nu_{TK}  \frac{1}{\sqrt{N}} \sum^N_{j=1} \widetilde{\sigma}^{\text{local}}_{t_p-1,j}(x_{t,i}) \\
% &\leq 2 \nu_{TK}  \frac{1}{\sqrt{N}}  \sum^N_{i=1} \sum^N_{j=1} \sqrt{2} \sqrt{\kappa_0} \sqrt{ T \lambda \left[\widetilde{d}_j \log(1 + TK/\lambda) ) + 1\right] }\\
% &= 2\sqrt{2} \nu_{TK} \sqrt{\kappa_0} \sqrt{N} \sum^N_{j=1} \sqrt{ T \lambda \left[\widetilde{d}_j \log(1 + TK/\lambda) ) + 1\right] }.
% \end{split}
% \end{equation}
\begin{equation}
\begin{split}
R_T^{\text{good}} = \sum^N_{i=1} \sum_{t \in \mathcal{T}^{\text{good}}} r_{t,i} =\widetilde{O}\left(\widetilde{d} \sqrt{TN} + \widetilde{d}_{\max} N^{3/2} \sqrt{T} + TN\varepsilon_{\text{linear}}(m,T)
\right).
\end{split}
\label{eq:final:regret:upper:bound:in:good:epochs:new:less:comm}
\end{equation}
\textbf{Lastly}, also note that the regret upper bound for the bad epochs (i.e., the proof in App.~\ref{subsec:regret:bad:epochs}) remains unchanged.
Therefore, the final regret upper bound for \texttt{FN-UCB} (\texttt{Less Comm.}) is
\begin{equation}
\begin{split}
R_T &= R^{\text{good}}_T+R^{\text{bad}}_T \\
&= \widetilde{O}\Big(\widetilde{d} \sqrt{TN} + \widetilde{d}_{\max} N^{3/2} \sqrt{T} + TN\varepsilon_{\text{linear}}(m,T) + \widetilde{d}\sqrt{TN}
\Big)\\
&=\widetilde{O}\Big(\widetilde{d} \sqrt{TN} + \widetilde{d}_{\max} N^{3/2} \sqrt{T} + TN\varepsilon_{\text{linear}}(m,T)
\Big)\\
&=\widetilde{O}\Big(\widetilde{d} \sqrt{TN} + \widetilde{d}_{\max} N^{3/2} \sqrt{T}
\Big).
\end{split}
\label{app:eq:final:regret:upper:bound:eq:1:new:less:comm}
\end{equation}

\section{Proof of Upper Bound on Communication Complexity (Theorem~\ref{theorem:communication})}
\label{sec:communication:complexity}
In this section, we derive an upper bound on the communication complexity (i.e., the total number of communication rounds) of our \texttt{FN-UCB} algorithm (including its variant \texttt{FN-UCB} (\texttt{Less Comm.})).
Define $\zeta \triangleq \sqrt{D T / R}$. An immediate implication is that there can be at most $\lceil T/\zeta \rceil$ epochs whose length is larger than $\zeta$. Next, we try to derive an upper bound on the number of epochs whose length is smaller than $\zeta$.

Note that if an epoch $p$ contains less than $\zeta$ iterations, then because of our criterion to start a communication round (line 10 of Algo.~\ref{algo:agent}), we have that $\log\frac{\text{det} V_{p}}{\text{det} V_{p-1}} > \frac{D}{\zeta}$. Also recall that equation~\eqref{eq:sum:of:log:det} (Appendix \ref{subsubsec:definition:good:bad:epochs}) tells us that:
\begin{equation}
\begin{split}
\sum^{P-1}_{p=0} \log \frac{\text{det} V_{p+1}}{\text{det} V_p } \leq R'  \leq  \overline{R},
\end{split}
\end{equation}
with probability of at least $1-\delta_1 \geq 1-\delta$.
Therefore, there can be at most $\lceil \frac{\overline{R}}{D / \zeta} \rceil = \lceil \frac{\overline{R}\zeta}{D} \rceil$ such epochs whose length is smaller than $\zeta$.
As a result, the total number of epochs can be upper-bounded by:
\begin{equation}
\lceil T/\zeta \rceil + \lceil \frac{\overline{R}\zeta}{D} \rceil = \mathcal{O}\Big(\sqrt{\frac{T\overline{R}}{D}}\Big).
\end{equation}

Recall that $\overline{R}=\widetilde{\mathcal{O}}( \widetilde{d} )$ (App.~\ref{subsubsec:definition:good:bad:epochs}). 
%If we let the width $m$ of the NN be large enough, then we can make the second term arbitrarily small, i.e., smaller than any constant we specify. So $R$ can be written as $R=\mathcal{O}(\gamma_{TN})$, and hence 
Therefore, with probability of at least $1-\delta_1\geq1-\delta$, the total number of epochs can be upper-bounded by $\widetilde{\mathcal{O}}(\sqrt{\frac{T\widetilde{d}}{D}})$.

% Since we have chosen $D=\widetilde{\mathcal{O}}(\frac{T}{N \widetilde{d}})$ (line 1 of Algo.~\ref{algo:agent}), therefore, the total number of epochs can be upper-bounded by $\widetilde{\mathcal{O}}(\sqrt{\frac{T\widetilde{d}}{\frac{T}{N \widetilde{d}}}})=\widetilde{\mathcal{O}}(\widetilde{d}\sqrt{N})=\widetilde{\mathcal{O}}\left( \gamma_{TKN}\sqrt{N} \right)$, which is sub-linear in $T$ since $\gamma_{TKN}=\widetilde{\mathcal{O}}((TKN)^{\frac{d-1}{d}})$.
Since we have chosen $D=\widetilde{\mathcal{O}}(\frac{T}{N \widetilde{d}})$ (line 1 of Algo.~\ref{algo:agent}), therefore, the total number of epochs can be upper-bounded by $\widetilde{\mathcal{O}}(\sqrt{\frac{T\widetilde{d}}{\frac{T}{N \widetilde{d}}}})=\widetilde{\mathcal{O}}(\widetilde{d}\sqrt{N})$. Now we can further make use of the relationship between $\widetilde{d}$ and $\gamma_{TKN}$: $\widetilde{d} \leq  2\gamma_{TKN} / \log(1+TKN/\lambda)$, which allows us to show that the worst-case communication complexity is upper-bounded by: $\widetilde{\mathcal{O}}(\widetilde{d}\sqrt{N})=\widetilde{\mathcal{O}}\left( \gamma_{TKN}\sqrt{N} \right)=\widetilde{\mathcal{O}}\left( (TKN)^{\frac{d-1}{d}}\sqrt{N} \right)=\widetilde{O}(T^{\frac{d-1}{d}} K^{\frac{d-1}{d}} N^{\frac{3d-2}{2d}})$, which is still sub-linear in $T$ even in the worst case.

The proof here, and hence Theorem~\ref{theorem:communication}, makes use of Lemma~\ref{lemma:bound:error:NTK:matrix}. Therefore, we only need condition 1 on $m$ listed in App.~\ref{app:conditions:on:m} to hold, and do not require any condition on $\eta$ and $J$.

\section{More Experimental Details}
\label{app:more:experimental:details}
Our code can be found at: \url{https://github.com/daizhongxiang/Federated-Neural-Bandits}.

Some of the experimental details (e.g., the number of layers and the width $m$ of the NN used in every experiment) are already described in the main text (Sec.~\ref{sec:experiments}).
Following the works of \cite{zhang2020neural,zhou2020neural}, when training the NN (line 14 of Algo.~\ref{algo:agent}) for agent $i$, we use the NN parameters resulting from the last gradient descent training of agent $i$ (instead of $\theta_0$) as the initial parameters, in order to accelerate the training procedure. 
Every time we train an NN, we use stochastic gradient descent to train the NN for $30$ iterations with a learning rate of $0.01$.
To save computational cost, we stop training the NNs after $2000$ iterations, i.e., after $2000$ iterations, all NN parameters are no longer updated.
Also to reduce the computational cost, when checking the criterion in line 11 of Algo.~\ref{algo:agent}, we diagonalize (i.e., only keep the diagonal elements of) the two matrices for which we need to calculate the determinant.
Our experiments are run on a server with 96 CPUs, an NVIDIA A100 GPU with a memory of 40GB, a RAM of 256GB, running the Ubuntu system.

The \texttt{shuttle} dataset is publicly available at \url{https://archive.ics.uci.edu/ml/datasets/Statlog+(Shuttle)} and contains no personally identifiable information or offensive content. It includes 58000 instances, has an input dimension of $d=9$ and contains $K=7$ classes/arms. As a result, according to the way in which the contexts are constructed (Sec.~\ref{subsec:real:experiments}), every context feature vector has a dimension of $9 \times 7=63$.
The \texttt{magic telescope} dataset is publicly available at \url{https://archive.ics.uci.edu/ml/datasets/magic+gamma+telescope} and contains no personally identifiable information or offensive content.
The dataset contains 19020 instances, has an input dimension of $d=10$ and $K=2$ classes/arms. As a result, every context feature vector has a dimension of $10 \times 2=20$.

\begin{figure}
     \centering
     \begin{tabular}{cc}
         \includegraphics[width=0.35\linewidth]{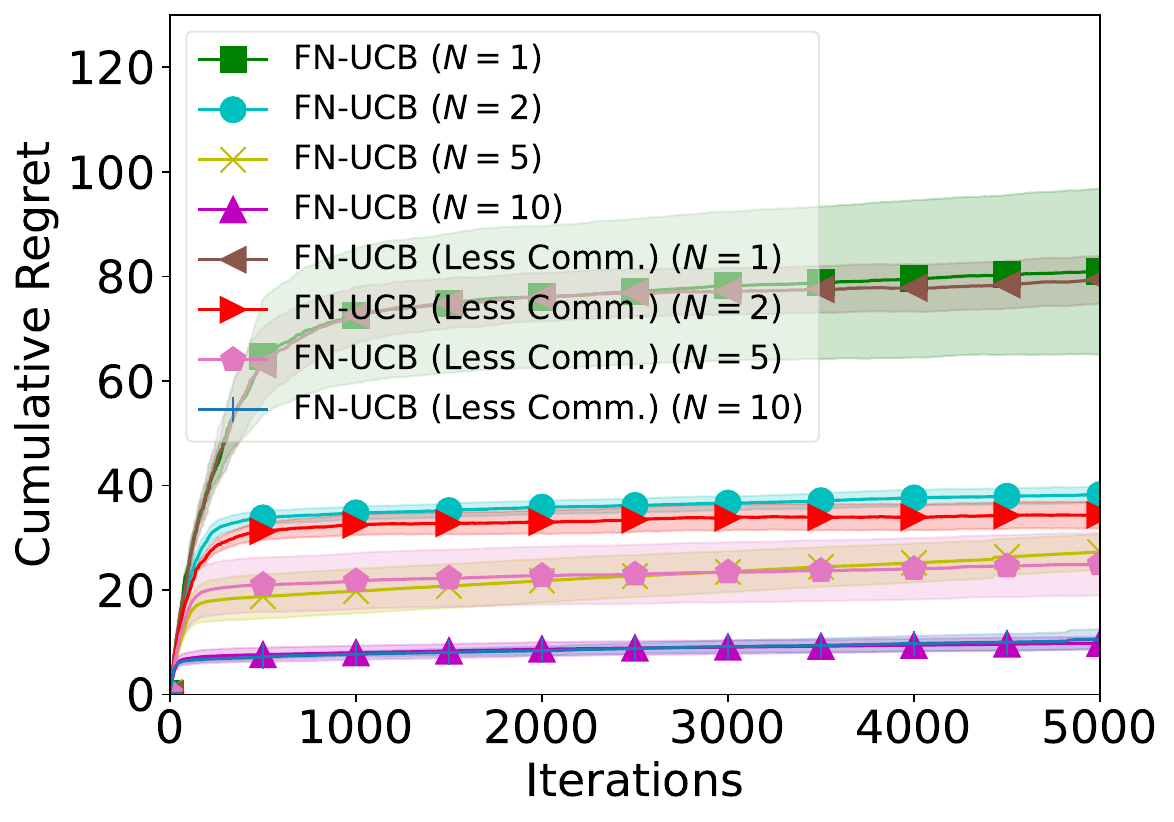} & \hspace{-6mm} 
         \includegraphics[width=0.35\linewidth]{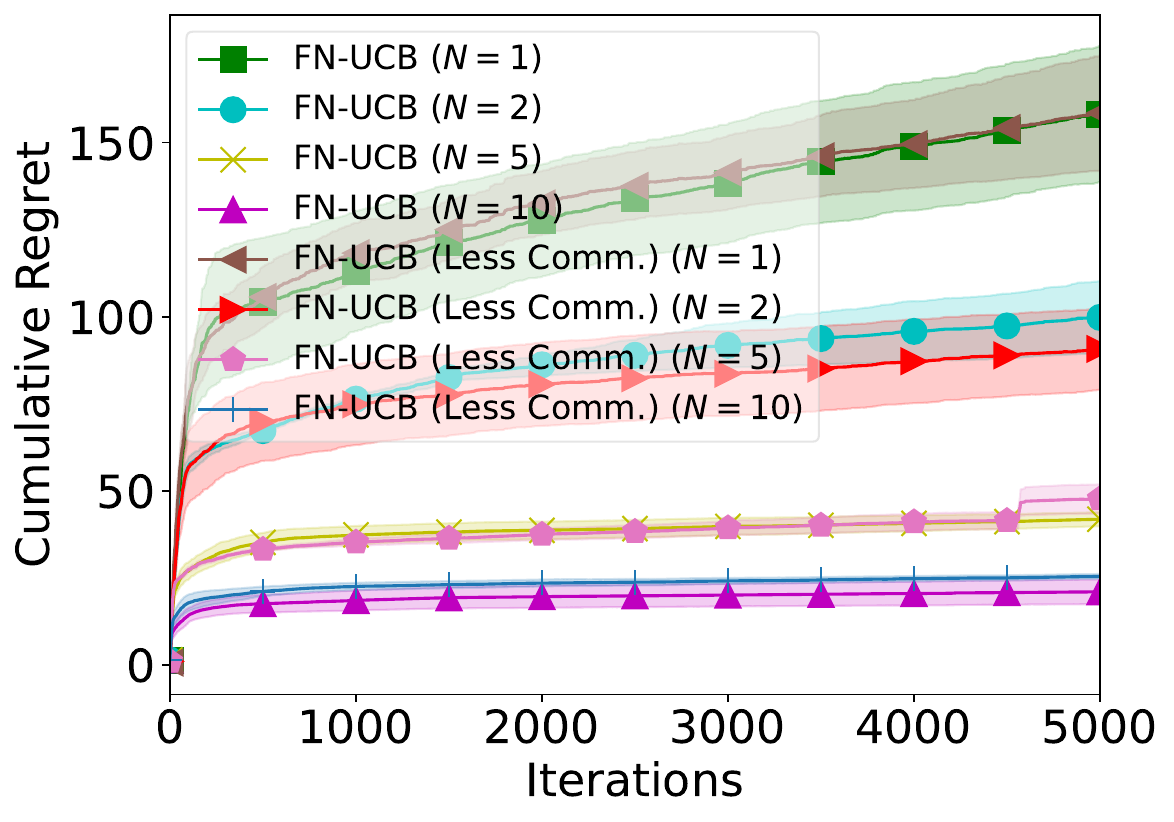}\\
         {\hspace{-2mm} (a) \texttt{cosine}} & {\hspace{-5mm} (b) \texttt{square}}
     \end{tabular}
% \vspace{-2.5mm}
     \caption{
    Cumulative regret of \texttt{FN-UCB} and \texttt{FN-UCB} (\texttt{Less Comm.}) for the \texttt{cosine} and \texttt{square} functions.
    Their performances are very similar for both functions.
     }
     \label{fig:exp:synth:with:less:comm}
% \vspace{-6mm}
\end{figure}

%\paragraph{Comparison with Linear and Kernelized Contextual Bandit Algorithms.}
When comparing with Linear-UCB, Linear TS, Kernelized UCB and Kernelized TS, we follow the work of \cite{zhang2020neural} to set $\lambda=1$ and perform a grid search within $\nu\in\{1, 0.1, 0.01\}$.
The results showing comparisons with these algorithms, for both the synthetic experiments (Sec.~\ref{subsec:synthetic:experiments}) and real-world experiments (Sec.~\ref{subsec:real:experiments}), are presented in Fig.~\ref{fig:exp:with:linear:kernel}.
The figures show that both linear and kernelized contextual bandit algorithms are outperformed by neural contextual bandit algorithms, which is consistent with the observations from \cite{zhang2020neural,zhou2020neural}.
\begin{figure}
     \centering
     \begin{tabular}{cc}
         \includegraphics[width=0.38\linewidth]{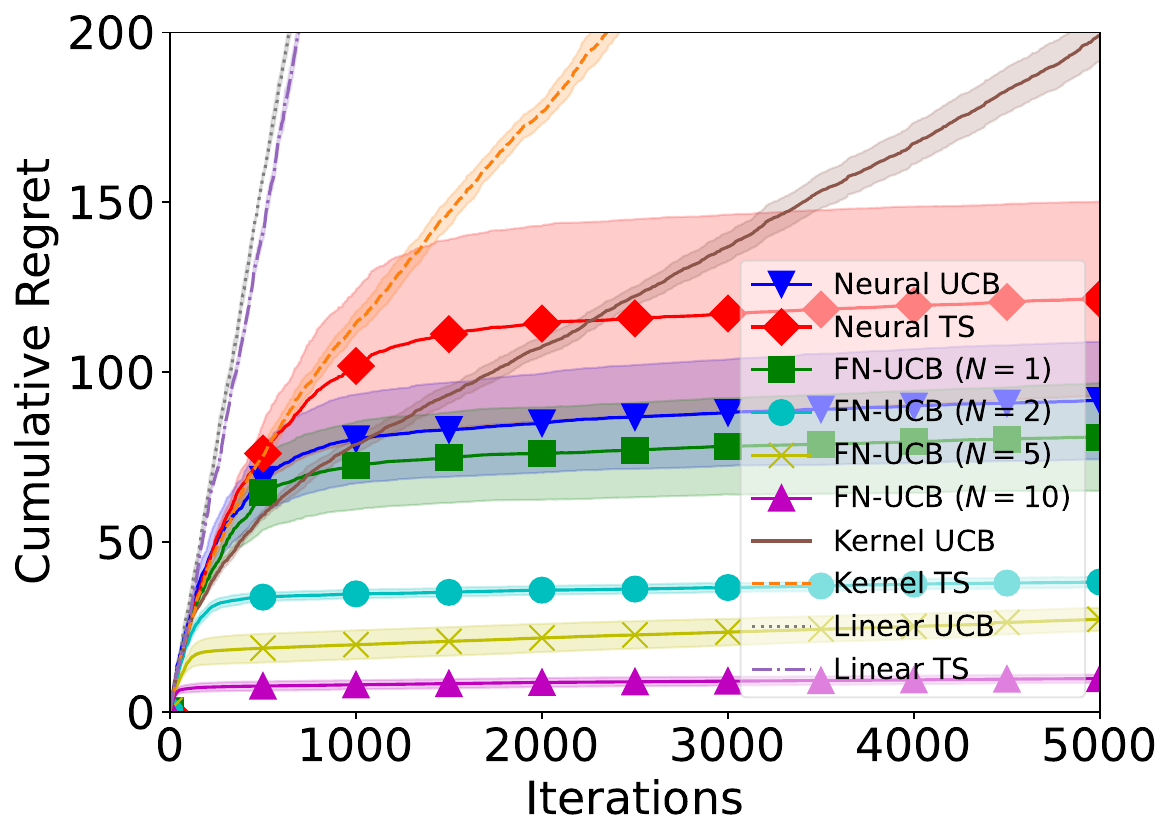} & \hspace{-7.5mm} 
         \includegraphics[width=0.38\linewidth]{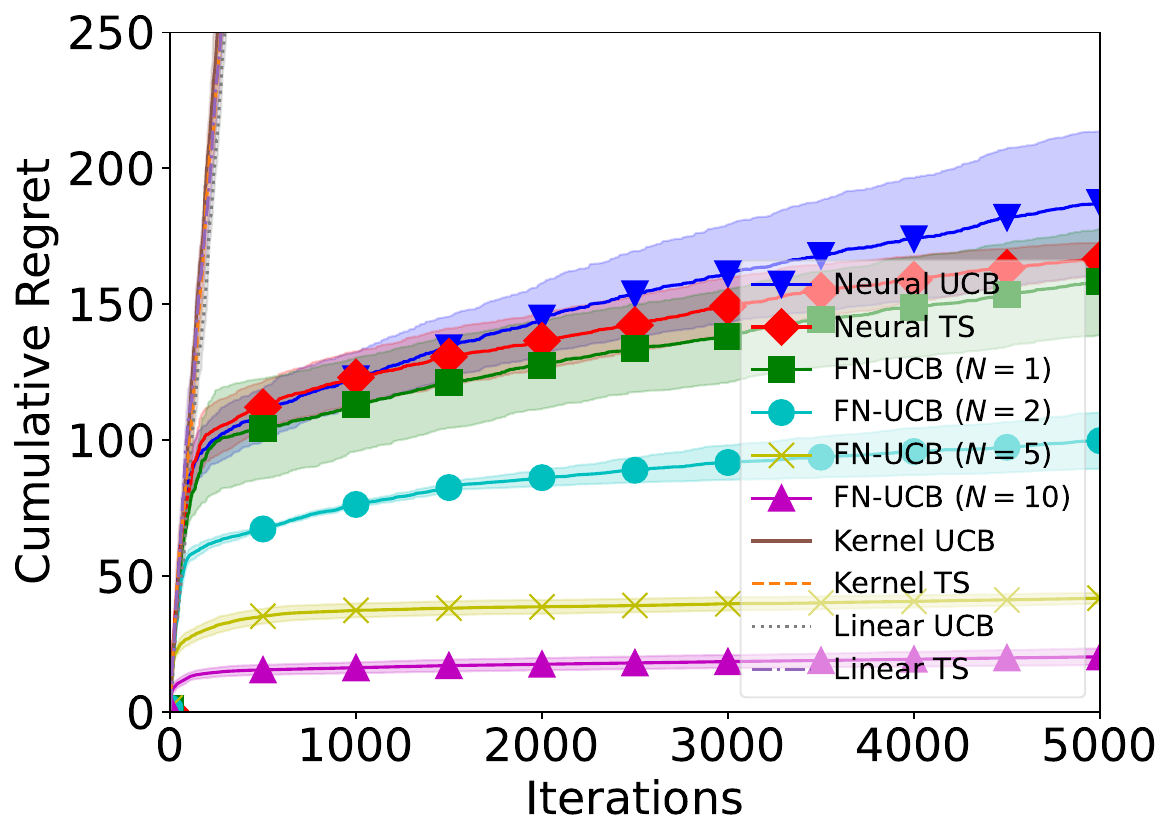}\\
         {(a) \texttt{cosine}} & {(b) \texttt{square}}\\
         \includegraphics[width=0.38\linewidth]{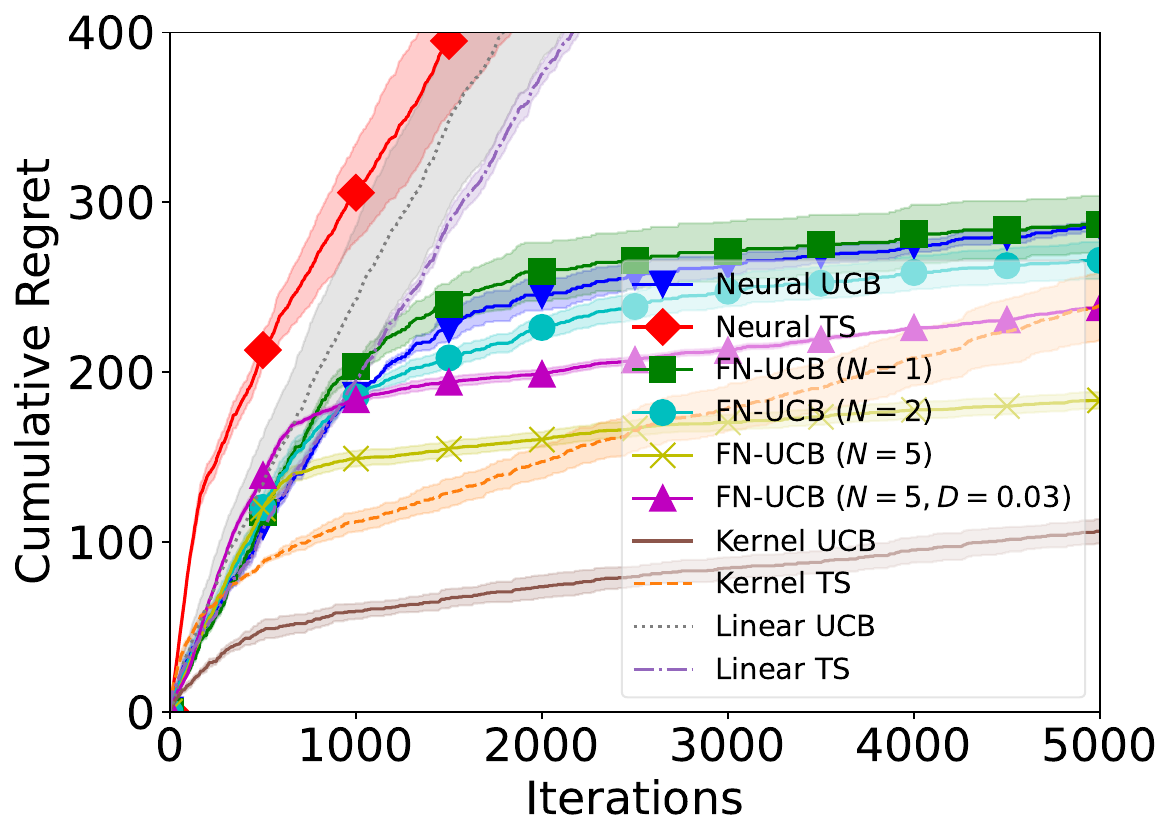}& \hspace{-7.5mm}
         \includegraphics[width=0.38\linewidth]{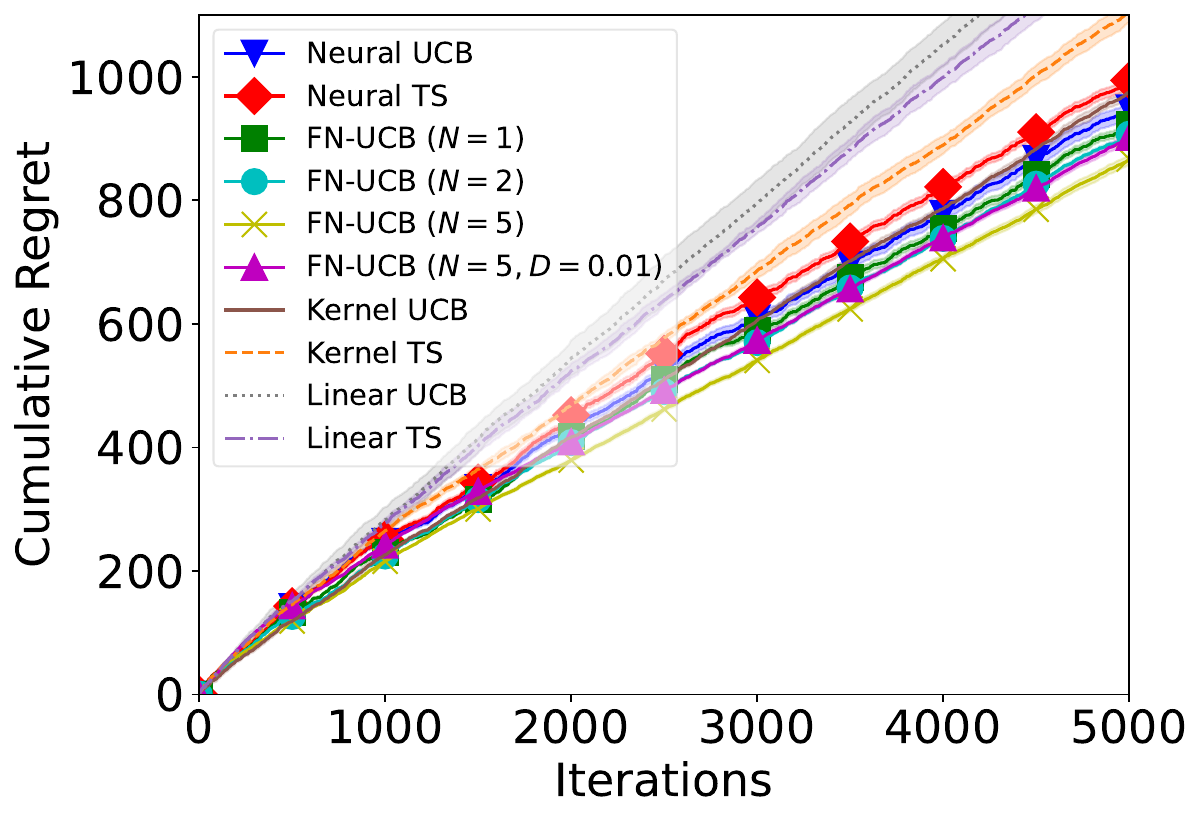}\\
         {(c) \texttt{shuttle}} & {(d) \texttt{magic telescope}}
     \end{tabular}
%\vspace{-2.5mm}
     \caption{
     Cumulative regrets for the (a) \texttt{cosine}, (b) \texttt{square}, (c) \texttt{shuttle} (with diagonalization), and (d) \texttt{magic telescope} experiments, with additional comparisons with Linear UCB, Linear TS, Kernel UCB and Kernel TS.
     }
     \label{fig:exp:with:linear:kernel}
%\vspace{-6mm}
\end{figure}
%\begin{figure}
%     \centering
%     \begin{tabular}{cccc}
%         \includegraphics[width=0.27\linewidth]{figures/synth_cosine_diff_Ns_with_kernel_linear.pdf} & \hspace{-7.5mm} 
%         \includegraphics[width=0.27\linewidth]{figures/synth_square_diff_Ns_with_linear_kernel.pdf}& \hspace{-7.5mm}
%         \includegraphics[width=0.27\linewidth]{figures/shuttle_diff_Ns_with_diag_with_linear_kernel.pdf}& \hspace{-7.5mm}
%         \includegraphics[width=0.27\linewidth]{figures/magic_diff_Ns_no_diag_with_linear_kernel.pdf}\\
%         {(a)} & {(b)} & {(c)} & {(d)}
%     \end{tabular}
%%\vspace{-2.5mm}
%     \caption{
%     }
%     \label{fig:exp:with:linear:kernel}
%%\vspace{-6mm}
%\end{figure}

\newpage

% {
% \color{red}
We have additionally evaluated the empirical impact of the technique of diagonalization of the matrices (Sec.~\ref{subsec:comm:cost}), using the \texttt{shuttle} dataset and a fixed width of $m=20$ for the NN. The results (Fig.~\ref{fig:exp:shuttle:compare:diag:no:diag}) show that for the same width of the NN, the technique of diagonalization indeed results in worse performances.
However, also note that diagonalization allows us to afford a larger value of $m$ in a computationally feasible way, which can lead to better performances than using a smaller $m$ without diagonalization. This is corroborated by our empirical results in Fig.~\ref{fig:exp:real}a and \ref{fig:exp:real}c, because the regrets in Fig.~\ref{fig:exp:real}c ($m$=50 , with diagonalization) are in general smaller than the regrets in Fig.~\ref{fig:exp:real}a ($m$=20 , without diagonalization), and the computational cost of Fig.~\ref{fig:exp:real}c ($244.9$ seconds) is smaller than that of Fig.~\ref{fig:exp:real}a ($361.8$ seconds). Furthermore, using $m=50$ without diagonalization would incur a significantly larger computational cost ($3134.3$ seconds). These results demonstrate the practical usefulness of diagonalization.
% }

\begin{figure}
     \centering
     \begin{tabular}{c}
         \includegraphics[width=0.45\linewidth]{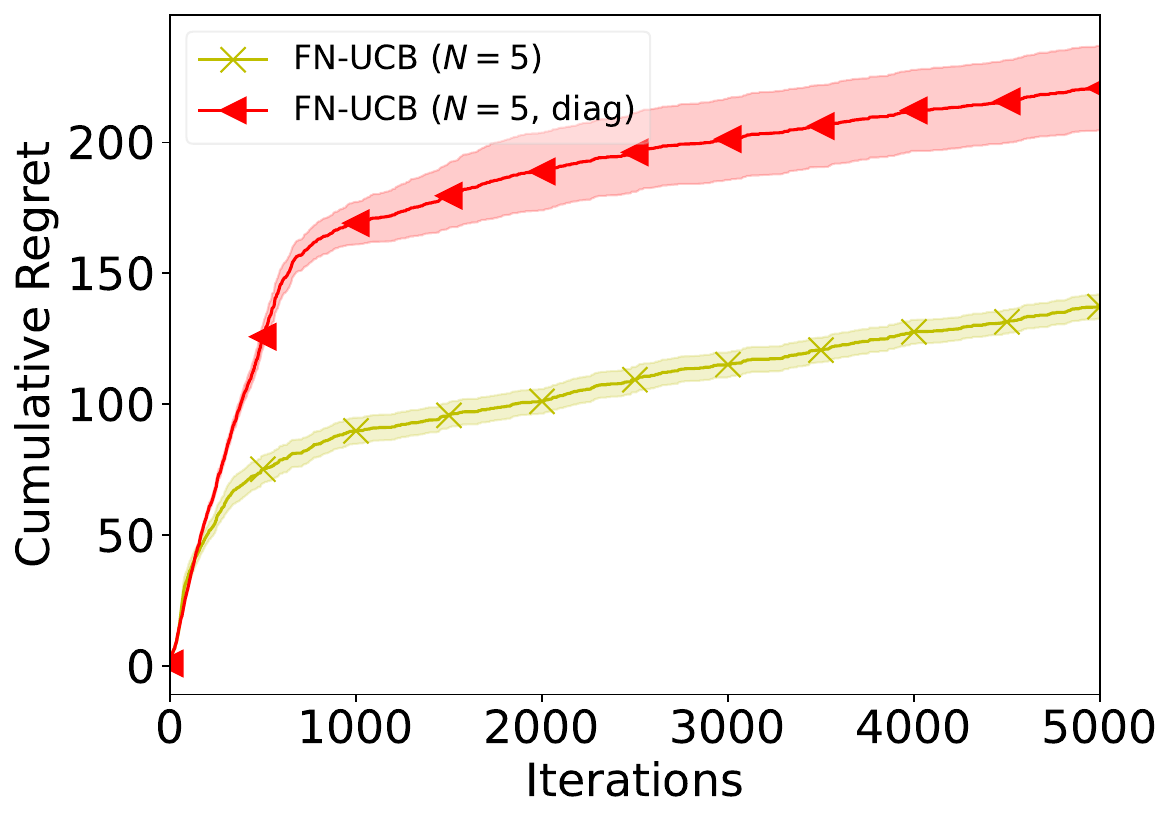}\\
         {\hspace{-2mm} \texttt{shuttle}} 
         % & {\hspace{-5mm} (b) \texttt{square}}
     \end{tabular}
% \vspace{-2.5mm}
     \caption{
%      {
% \color{red}
     Comparison between the performances without (yellow) and with (red) diagonalization, using $m=20$ with the \texttt{shuttle} dataset. The results show that using an NN with the same width $m=20$, diagonalization indeed deteriorates the performances.
    % Cumulative regret of \texttt{FN-UCB} and \texttt{FN-UCB} (\texttt{Less Comm.}) for the \texttt{cosine} and \texttt{square} functions.
    % Their performances are very similar for both functions.
     % }
     }
     \label{fig:exp:shuttle:compare:diag:no:diag}
% \vspace{-6mm}
\end{figure}

% {
% \color{red}
We have also visualized the empirical scaling of the final average cumulative regret (after $5000$ iterations) in terms of the number $N$ of agents, using the \texttt{cosine} and \texttt{square} experiments. The results (Fig.~\ref{fig:exp:scaling:in:N}) demonstrate that the average cumulative regret (averaged across all $N$ agents) is indeed decreasing as the number $N$ of agents increases.
% }
\begin{figure}
     \centering
     \begin{tabular}{cc}
         \includegraphics[width=0.38\linewidth]{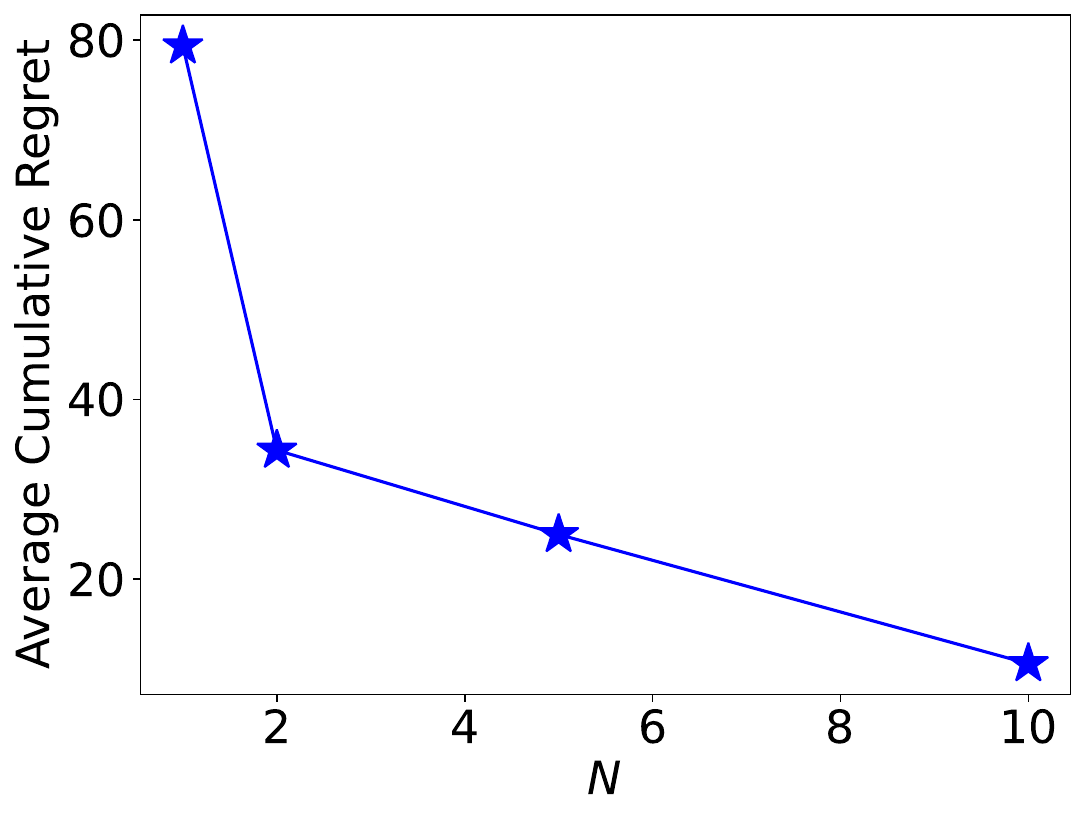} & \hspace{-3.5mm} 
         \includegraphics[width=0.38\linewidth]{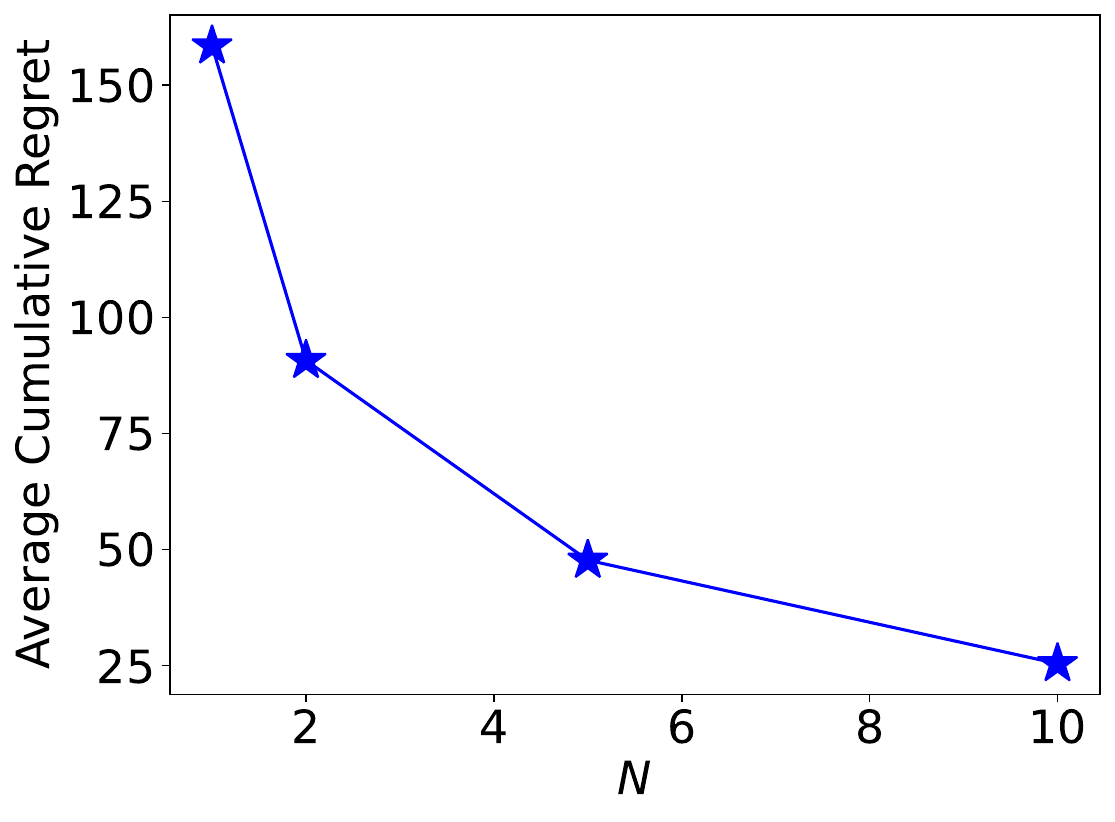}\\
         {(a) \texttt{cosine}} & {(b) \texttt{square}}
     \end{tabular}
%\vspace{-2.5mm}
     \caption{
     % {\color{red}
     The scaling of the final average cumulative regret after $5000$ iterations (averaged across all $N$ agents) in terms of the number $N$ of agents, using the \texttt{cosine} and \texttt{square} experiments. The results correspond to Fig.~\ref{fig:exp:synth} a and Fig.~\ref{fig:exp:synth} b, respectively.
     }
     % }
     \label{fig:exp:scaling:in:N}
%\vspace{-6mm}
\end{figure}

% {\color{red}
\section{Extended Analysis for The General Algorithm}
\label{app:extended:analysis:general:algo}
Recall that it has been mentioned at the beginning of Sec.~\ref{subsec:theoretical:results} that our main regret analysis (Theorem \ref{theorem:regret}) has focused on a simpler version of our \texttt{FN-UCB} algorithm, in which we only choose the value of $\alpha$ using the method described in Sec.~\ref{subsec:weight:two:ucbs} in the first iteration of every epoch and set $\alpha=0$ in the other iterations.
Here, we show how our regret analysis can be extended to derive a regret upper bound for the general \texttt{FN-UCB} algorithm, in which we choose $\alpha$ using the method described in Sec.~\ref{subsec:weight:two:ucbs} in every iteration, i.e., we do not set $\alpha=0$ in any iteration.
To achieve this, 
we need an additional assumption of an upper bound on the amount of new information collected by every agent $i$ in every epoch $p$.
% we need an additional assumption on the contexts $\{x^k_{t,i}\}_{k\in[K],t\in[T],i\in[N]}$ such that the new information collected by every agent in every epoch is upper-bounded. 
Specifically, we assume that 
\begin{equation}
    \frac{\text{det} V^{\text{local}}_{t_p+E_p-2,i} }{\text{det} V^{\text{local}}_{t_p-1,i}} \leq \overline{D}, \forall i\in[N],p\in[P]
\label{eq:additional:assumption}
\end{equation}
for a constant $\overline{D}\geq 1$.
This can in fact be viewed as an additional property of the sequence of contexts for each agent. Intuitively, if the contexts for each agent are received in such an order that similar contexts also arrive in similar iterations, then the constant $\overline{D}$ is likely to be small. This can be seen as a "stationarity" property of the sequence of contexts, which is reasonable in many practical scenarios. For example, in a healthcare application, the patients arriving within the same time period are likely to have similar characteristics due to factors such as the local transmission of a seasonal flu. In addition, another scenario where $\overline{D}$ is likely to be small is when every agent has some previously observed offline contexts before running our algorithm. If these offline contexts have a good coverage of the space of contexts, then conditioned on these offline contexts, the newly collected information by every agent in every epoch is highly likely to be small.
% Intuitively, if the contexts for each agent arrive in such an order that similar contexts also arrive in similar iterations, then the constant $\overline{D}$ is likely to be small. This can be seen as a "stationarity" assumption on the sequence of contexts, which is reasonable in many practical scenarios, e.g., in a healthcare application, the patients arriving within the same time period are likely to have similar characteristics due to factors such as a seasonal flu.

With this additional assumption, the most important step in the proof that we need to modify is the proof in Appendix~\ref{app:subsubsec:upper:bound:sum:of:second:term:good:epochs}, in which we proved an upper bound on the sum of the second term of \eqref{eq:upper:bound:inst:regret}.
To begin with, $\forall t=t_p,\ldots,t_p+E_p-1$, we have that
\begin{equation}
\begin{split}
\widetilde{\sigma}^{\text{local}}_{t_p-1,j}(x_{t,i}) &\stackrel{(a)}{=} \sqrt{\lambda} \norm{g(x_{t,i};\theta_0) / \sqrt{m}}_{(V^{\text{local}}_{t_p-1,j})^{-1}}\\
&=\sqrt{\lambda g(x_{t,i};\theta_0)^{\top} (V^{\text{local}}_{t_p-1,j})^{-1} g(x_{t,i};\theta_0) / m}\\
&\stackrel{(b)}{\leq} \sqrt{\lambda g(x_{t,i};\theta_0)^{\top} (V^{\text{local}}_{t-1,j})^{-1} g(x_{t,i};\theta_0) / m \frac{\text{det} V^{\text{local}}_{t-1,j}}{\text{det} V^{\text{local}}_{t_p-1,j}} }\\
&\stackrel{(c)}{\leq} \sqrt{\lambda g(x_{t,i};\theta_0)^{\top} (V^{\text{local}}_{t-1,j})^{-1} g(x_{t,i};\theta_0) / m \frac{\text{det} V^{\text{local}}_{t_p+E_p-1-1,j}}{\text{det} V^{\text{local}}_{t_p-1,j}} }\\
&\stackrel{(d)}{\leq} \sqrt{\lambda g(x_{t,i};\theta_0)^{\top} (V^{\text{local}}_{t-1,j})^{-1} g(x_{t,i};\theta_0) / m \overline{D} }\\
&\stackrel{(e)}{=}\sqrt{\overline{D}} \widetilde{\sigma}^{\text{local}}_{t-1,j}(x_{t,i}).
\end{split}
\label{eq:good:epochs:second:term:eq:1:new:bound:diff:gp:post:std}
\end{equation}
Step $(a)$ has made use of the definition of $\widetilde{\sigma}^{\text{local}}_{t_p-1,j}(x_{t,i})$ (see the paragraph below \eqref{eq:upper:bound:inst:regret}), step $(b)$ results from Lemma 12 of~\cite{abbasi2011improved}, step $(c)$ follows because $V^{\text{local}}_{t_p+E_p-1-1,j}$ contains more information than $V^{\text{local}}_{t-1,j}$ $\forall t=t_p,\ldots,t_p+E_p-1$, step $(d)$ follows from \eqref{eq:additional:assumption}, and step $(e)$ has again made use of the definition of $\widetilde{\sigma}^{\text{local}}_{t_p-1,j}(x_{t,i})$.

Using \eqref{eq:good:epochs:second:term:eq:1:new:bound:diff:gp:post:std}, we can modify the proof in \eqref{eq:good:epochs:second:term:eq:1} (Appendix~\ref{app:subsubsec:upper:bound:sum:of:second:term:good:epochs}):
\begin{equation}
\begin{split}
\sum^N_{i=1} \sum_{p \in \mathcal{E}^{\text{good}}} \sum_{t\in\mathcal{T}^{(p)}} \alpha  2 \nu_{TK}  \frac{1}{N} &\sum^N_{j=1} \widetilde{\sigma}^{\text{local}}_{t_p-1,j}(x_{t,i}) \leq 2 \nu_{TK}  \frac{1}{N}  \sum^N_{i=1} \sum_{p \in [P]} \sum_{t\in\mathcal{T}^{(p)}} \alpha  \sum^N_{j=1} \widetilde{\sigma}^{\text{local}}_{t_p-1,j}(x_{t,i})\\
&\leq 2 \nu_{TK}  \frac{1}{N}  \sum^N_{i=1} \sum^N_{j=1} \sum_{p \in [P]} \sum^{t_p+E_p-1}_{t=t_p}  \alpha \widetilde{\sigma}^{\text{local}}_{t_p-1,j}(x_{t,i})\\
&\stackrel{(a)}{\leq} 2 \nu_{TK}  \frac{1}{N}  \sum^N_{i=1} \sum^N_{j=1} \sum_{p \in [P]} \sum^{t_p+E_p-1}_{t=t_p} \widetilde{\sigma}^{\text{local}}_{t_p-1,j}(x_{t,j})\\
&\stackrel{(b)}{\leq} 2 \nu_{TK}  \frac{1}{N}  \sum^N_{i=1} \sum^N_{j=1} \sum_{p \in [P]} \sum^{t_p+E_p-1}_{t=t_p} \sqrt{\overline{D}} \widetilde{\sigma}^{\text{local}}_{t-1,j}(x_{t,j})\\
&= \sqrt{\overline{D}} 2 \nu_{TK}  \frac{1}{N}  \sum^N_{i=1} \sum^N_{j=1} \sum^T_{t=1}  \widetilde{\sigma}^{\text{local}}_{t-1,j}(x_{t,j}).
% &\stackrel{(d)}{\leq} 2 \nu_{TK}  \frac{1}{N}  \sum^N_{i=1} \sum^N_{j=1} \sum^T_{t=1}  \widetilde{\sigma}^{\text{local}}_{t-1,j}(x_{t,j})\\
\end{split}
\label{eq:good:epochs:second:term:eq:1:new:general:new}
\end{equation}
Step $(a)$ follows from the same reasoning as step $(c)$ of \eqref{eq:good:epochs:second:term:eq:1}, step $(b)$ has made use of \eqref{eq:good:epochs:second:term:eq:1:new:bound:diff:gp:post:std}, and all other steps follow the same corresponding steps of \eqref{eq:good:epochs:second:term:eq:1}.

As a result, by comparing the modified \eqref{eq:good:epochs:second:term:eq:1:new:general:new} with the original \eqref{eq:good:epochs:second:term:eq:1}, the only modification to the result in \eqref{eq:good:epochs:second:term:eq:1} is the additional multiplicative term of $\sqrt{\overline{D}}$. Therefore, after propagating this modification to all the analysis in Appendix~\ref{app:subsubsec:upper:bound:sum:of:second:term:good:epochs}, we have that a multiplicative term of $\sqrt{\overline{D}}$ will also be introduced into \eqref{eq:final:upper:bound:on:second:term:eq:13}. Subsequently, the upper bound on the total regrets from all good epochs (i.e., \eqref{eq:final:regret:upper:bound:in:good:epochs}) will be correspondingly modified to be:
\begin{equation}
\begin{split}
R_T^{\text{good}} =\widetilde{O}\left(\widetilde{d} \sqrt{TN} + \sqrt{\overline{D}} \widetilde{d}_{\max} N \sqrt{T} + TN\varepsilon_{\text{linear}}(m,T)
\right).
\end{split}
\label{eq:final:regret:upper:bound:in:good:epochs:new:general}
\end{equation}

Next, we also need to modify the proof of the upper bound on the total regrets from all \emph{bad epochs} (Appendix \ref{subsec:regret:bad:epochs}). Following the roadmap of Appendix \ref{subsec:regret:bad:epochs}, we start by upper-bounding the total regrets from a particular bad epoch $p$:
\begin{equation}
\begin{split}
R^{[p]} &= \sum^N_{i=1}\sum^{t_p + E_p - 1}_{t=t_p} r_{t,i} = \sum^N_{i=1}\sum^{t_p + E_p - 1}_{t=t_p} [\alpha h(x^*_{t,i}) + (1-\alpha) h(x^*_{t,i}) - h(x_{t,i})]\\
&\stackrel{(a)}{\leq} \sum^N_{i=1}\sum^{t_p + E_p - 1}_{t=t_p} \Big[ \alpha \text{UCB}^{b}_{t,i}(x^*_{t,i}) + \alpha \varepsilon_{\text{linear}}(m,T) + (1-\alpha) \text{UCB}^{a}_{t,i}(x^*_{t,i}) - h(x_{t,i}) \Big]\\
&\stackrel{(b)}{\leq} \sum^N_{i=1}\sum^{t_p + E_p - 1}_{t=t_p} \Big[ \alpha \text{UCB}^{b}_{t,i}(x_{t,i}) + \alpha \varepsilon_{\text{linear}}(m,T) + (1-\alpha) \text{UCB}^{a}_{t,i}(x_{t,i}) - h(x_{t,i}) \Big]\\
&= \sum^N_{i=1}\sum^{t_p + E_p - 1}_{t=t_p} \Big[ \alpha (\text{UCB}^{b}_{t,i}(x_{t,i})-h(x_{t,i})) + (1-\alpha) (\text{UCB}^{a}_{t,i}(x_{t,i}) - h(x_{t,i})) \\
&\qquad + \alpha \varepsilon_{\text{linear}}(m,T) \Big]\\
% &\stackrel{(c)}{\leq} \sum^N_{i=1}\Big[4 + \sum^{t_p + E_p - 1}_{t=t_p} \big( (1-\alpha) (\text{UCB}^{a}_{t,i}(x_{t,i}) - h(x_{t,i})) \big)\Big] + \\
% &\qquad \sum^N_{i=1}\Big[\sum^{t_p + E_p - 1}_{t=t_p} \big( \alpha (\text{UCB}^{b}_{t,i}(x_{t,i})-h(x_{t,i})) + \alpha \varepsilon_{\text{linear}}(m,T) \big)\Big]
% \\
% &\stackrel{(c)}{\leq} \sum^N_{i=1}\Big[4 + \sum^{t_p + E_p - 2}_{t=t_p+1} \big( (1-\alpha) (\text{UCB}^{a}_{t,i}(x_{t,i}) - h(x_{t,i})) \big)\Big] + \\
% &\qquad \sum^N_{i=1}\Big[\sum^{t_p + E_p - 1}_{t=t_p} \big( \alpha (\text{UCB}^{b}_{t,i}(x_{t,i})-h(x_{t,i})) + \alpha \varepsilon_{\text{linear}}(m,T) \big)\Big]
% \\
&\stackrel{(c)}{\leq} \underbrace{\sum^N_{i=1}\Big[4 + \sum^{t_p + E_p - 2}_{t=t_p+1} \big(\text{UCB}^{a}_{t,i}(x_{t,i}) - h(x_{t,i}) \big)\Big]}_{A} \\
&\qquad + \underbrace{\sum^N_{i=1}\sum^{t_p + E_p - 1}_{t=t_p} \Big[ \alpha (\text{UCB}^{b}_{t,i}(x_{t,i})-h(x_{t,i})) + \alpha \varepsilon_{\text{linear}}(m,T) \Big]}_{B}.
% \\
% &\stackrel{(d)}{\leq} \sum^N_{i=1}\Big[4 + \sum^{t_p + E_p - 2}_{t=t_p+1} 2\nu_{TKN} \sqrt{\lambda} \norm{g(x_{t,i};\theta_0) / \sqrt{m}}_{\overline{V}_{t,i}^{-1}} \Big]\\
% %&\leq \sum^N_{i=1}\left(4 + \sum^{t_p + E_p - 2}_{t=t_p} 2\nu_{TN} \min\{  \sqrt{\lambda} \norm{g(x_{t,i};\theta_0) / \sqrt{m}}_{\overline{V}_{t,i}^{-1}}, 1 \}\right)\\
% &\stackrel{(e)}{\leq} \sum^N_{i=1}\Big(4 + 2\nu_{TKN} \sqrt{\kappa_0} \sum^{t_p + E_p - 2}_{t=t_p} \min\{ \sqrt{\lambda} \norm{g(x_{t,i};\theta_0) / \sqrt{m}}_{\overline{V}_{t,i}^{-1}}, 1 \}\Big)\\
% &\stackrel{(f)}{\leq} \sum^N_{i=1}\Big(4 + 2\nu_{TKN} \sqrt{\kappa_0\lambda} \sum^{t_p + E_p - 2}_{t=t_p} \min\{  \norm{g(x_{t,i};\theta_0) / \sqrt{m}}_{\overline{V}_{t,i}^{-1}}, 1 \}\Big)
\end{split}
\label{eq:regret:bad:epochs:eq:1:new:general}
\end{equation}
Step $(a)$ follows from Lemma \ref{lemma:confidence:bound:ucb:2} (i.e., the validity of $\text{UCB}^{a}_{t,i}$) and Lemma \ref{lemma:confidence:bound:ucb:1} (i.e., the validity of $\text{UCB}^{b}_{t,i}$).
Step $(b)$ results from the way in which $x_{t,i}$ is selected (line 7 of Algo.~\ref{algo:agent}): $x_{t,i} = {\arg\max}_{x\in\mathcal{X}_{t,i}} (1- \alpha) \text{UCB}^{a}_{t,i}(x) + \alpha \text{UCB}^{b}_{t,i}(x)$.
For step $(c)$, the term $A$ is obtained by upper-bounding the regrets of the first and last iteration within this epoch by $2$ and using the fact that $\alpha\leq1$.
% derived using steps $(a)$, $(b)$ and $(c)$ of \eqref{eq:regret:bad:epochs:eq:1}, 

Next, we can separately analyze the terms $A$ and $B$ in \eqref{eq:regret:bad:epochs:eq:1:new:general}.
Firstly, note that the term $A$ is the same as step $(c)$ of \eqref{eq:regret:bad:epochs:eq:1}, therefore, we can follow the same steps of analyses in App.~\ref{subsec:regret:bad:epochs} (i.e., \eqref{eq:regret:bad:epochs:eq:1}, \eqref{eq:regret:bad:epochs:eq:2}, \eqref{eq:regret:bad:epochs:eq:3}, \eqref{eq:final:regret:upper:bound:in:bad:epochs} and \eqref{eq:final:final:regret:upper:bound:in:bad:epochs}) to show that after summing the term $A$ across all bad epochs, we get an upper bound of $\mathcal{O}\left( \widetilde{d}\sqrt{TN} \right)$. Secondly, for the term $B$, we can in fact follow similar steps of analysis in \eqref{eq:upper:bound:inst:regret} to show that every term inside the square bracket of the term $B$ is upper-bounded by the last two terms in \eqref{eq:upper:bound:inst:regret}. That is, 
\begin{equation}
    \alpha (\text{UCB}^{b}_{t,i}(x_{t,i})-h(x_{t,i})) + \alpha \varepsilon_{\text{linear}}(m,T) \leq \alpha 2 \nu_{TK}  \frac{1}{N} \sum^N_{j=1} \widetilde{\sigma}^{\text{local}}_{t_p-1,j}(x_{t,i}) + 2 \alpha \varepsilon_{\text{linear}}(m,T).
\end{equation}
As a result, we can follow the same steps of analysis in App.~\ref{app:subsubsec:upper:bound:sum:of:second:term:good:epochs} (after making the modification using \eqref{eq:good:epochs:second:term:eq:1:new:general:new}; note that the analysis in App.~\ref{app:subsubsec:upper:bound:sum:of:second:term:good:epochs} is applicable to both good and bad epochs)
% (because the analysis there is applicable to both good and bad epochs) 
to show that after summing over all bad epochs, the term $B$ can be upper-bounded by $\widetilde{O}\left(\sqrt{\overline{D}} \widetilde{d}_{\max} N \sqrt{T} + TN\varepsilon_{\text{linear}}(m,T)\right)$.
Next, combining the upper bounds on both $A$ and $B$ (after summing across all bad epochs), we have that for the general algorithm, the total regrets from all bad epochs can be upper bounded by
\begin{equation}
\begin{split}
R_T^{\text{bad}} =\widetilde{O}\left( \widetilde{d}\sqrt{TN} + \sqrt{\overline{D}} \widetilde{d}_{\max} N \sqrt{T} + TN\varepsilon_{\text{linear}}(m,T) \right),
\end{split}
\label{eq:final:regret:upper:bound:in:bad:epochs:new:general}
\end{equation}
which is in fact the same as the upper bound on the total regrets from all good epochs which we have derived in \eqref{eq:final:regret:upper:bound:in:good:epochs:new:general}.

Finally, following the same analysis in App.~\ref{subsec:regret:total}, we can show that the final regret upper bound for the general algorithm, in which we do not set $\alpha=0$ in any iteration, is 
\begin{equation}
\begin{split}
R_T = \widetilde{O}\Big(\widetilde{d} \sqrt{TN} + \sqrt{\overline{D}} \widetilde{d}_{\max} N \sqrt{T}\Big).
\end{split}
\label{eq:final:regret:upper:bound:new:general:algorithm}
\end{equation}
Note that compared to our regret upper bound from Theorem \ref{theorem:regret}, the regret upper bound for the general algorithm (i.e., when we choose the value of $\alpha$ using the method from Sec.~\ref{subsec:weight:two:ucbs} in every iteration) only includes an additional multiplicative term of $\sqrt{\overline{D}}$ in the second term.
Of note, when communication indeed occurs after each iteration (i.e., $E_p=1$ for every epoch $p$), we have that $\overline{D}=1$ because $\frac{\text{det} V^{\text{local}}_{t_p+E_p-2,i} }{\text{det} V^{\text{local}}_{t_p-1,i}} =1$ (\eqref{eq:additional:assumption}). 
In this case, the version of our algorithm analyzed in Theorem \ref{theorem:regret} becomes the same as our general algorithm (Sec.~\ref{subsec:theoretical:results}), and interestingly, the regret upper bound of our general algorithm (\eqref{eq:final:regret:upper:bound:new:general:algorithm}) also becomes the same as Theorem \ref{theorem:regret} because $\overline{D}=1$.

% }

%\appendix
%\section{Appendix}
%You may include other additional sections here.

\end{document}